\ifdefined\XeTeXversion\else\pdfoutput=1\fi 
\documentclass[11pt,a4paper]{article}

\usepackage[utf8]{inputenc}
\usepackage[top=2.5cm, bottom=2.5cm, left=2.8cm, right=2.8cm]{geometry}
\usepackage{graphicx}

\usepackage{amsmath,amsfonts,amssymb,amsthm}

\usepackage{booktabs,multirow,float}
\usepackage{algorithm,algorithmic}

\usepackage{tikz,pgfplots}
\pgfplotsset{compat=1.18}
\usetikzlibrary{arrows.meta,positioning,calc,fit,backgrounds,
 decorations.pathmorphing,decorations.pathreplacing,patterns}

\usepackage{xspace,microtype}
\usepackage[T1]{fontenc}
\usepackage{mathpazo} 
\usepackage{enumitem}
\setlist{nosep,leftmargin=*}
\usepackage{setspace}
\usepackage{xcolor}
\definecolor{tsred}{HTML}{FA0030}
\definecolor{tsdark}{HTML}{2B2B2B}
\definecolor{tsaccent}{HTML}{C8102E}
\definecolor{tsgray}{HTML}{555555}
\definecolor{tslight}{HTML}{FFF5F5}
\usepackage[colorlinks=true,linkcolor=tsred,citecolor=tsaccent,
 urlcolor=tsaccent]{hyperref}

\usepackage{fancyhdr}
\renewcommand{\headrulewidth}{0.4pt}
\renewcommand{\headrule}{\hbox to\headwidth{%
 \color{tsaccent}\leaders\hrule height \headrulewidth\hfill}}

\usepackage[explicit]{titlesec}
\titleformat{\section}{\Large\bfseries\color{tsred}}{{\thesection}}{0.8em}{#1}
\titleformat{\subsection}{\large\bfseries\color{tsred!80!black}}{{\thesubsection}}{0.7em}{#1}
\titleformat{\subsubsection}{\normalsize\bfseries\color{tsred!60!black}}{{\thesubsubsection}}{0.6em}{#1}
\titleformat{\paragraph}[runin]{\normalsize\bfseries\color{tsred!70!black}}{}{0em}{#1.}[~~]

\newtheoremstyle{tsthm}{}{}{\itshape}{}{\bfseries\color{tsred}}{.}{ }{}
\theoremstyle{tsthm}
\newtheorem{theorem}{Theorem}

\newtheorem{corollary}[theorem]{Corollary}
\newtheorem{definition}[theorem]{Definition}
\newtheorem{remark}[theorem]{Remark}
\newtheorem{observation}[theorem]{Observation}

\newcommand{\cherry}{\textsc{Cherry}\xspace}
\newcommand{\RR}{\mathbb{R}}
\newcommand{\EE}{\mathbb{E}}
\DeclareMathOperator{\softmax}{softmax}

\newcommand{\hglqa}{HGLQA\xspace}
\newcommand{\hgera}{HGERA\xspace}
\newcommand{\phalanx}{\textsc{Phalanx}\xspace}
\newcommand{\hoplite}{\textsc{Hoplite}\xspace}
\newcommand{\forge}{\textsc{Forge}\xspace}
\newcommand{\strategos}{\textsc{Strategos}\xspace}
\newcommand{\oracle}{\textsc{Oracle}\xspace}
\newcommand{\cabstractor}{\textsc{Cabstractor}\xspace}
\newcommand{\bigO}{\mathcal{O}}

\begin{document}
\thispagestyle{empty}

\begin{tikzpicture}[remember picture, overlay]
 \fill[tsred] (current page.north west) rectangle
 ([yshift=-1.4cm]current page.north east);
 \node[anchor=west, fill=white, inner sep=3pt, rounded corners=2pt]
 at ([xshift=2.2cm, yshift=-0.7cm]current page.north west)
 {\includegraphics[height=0.7cm]{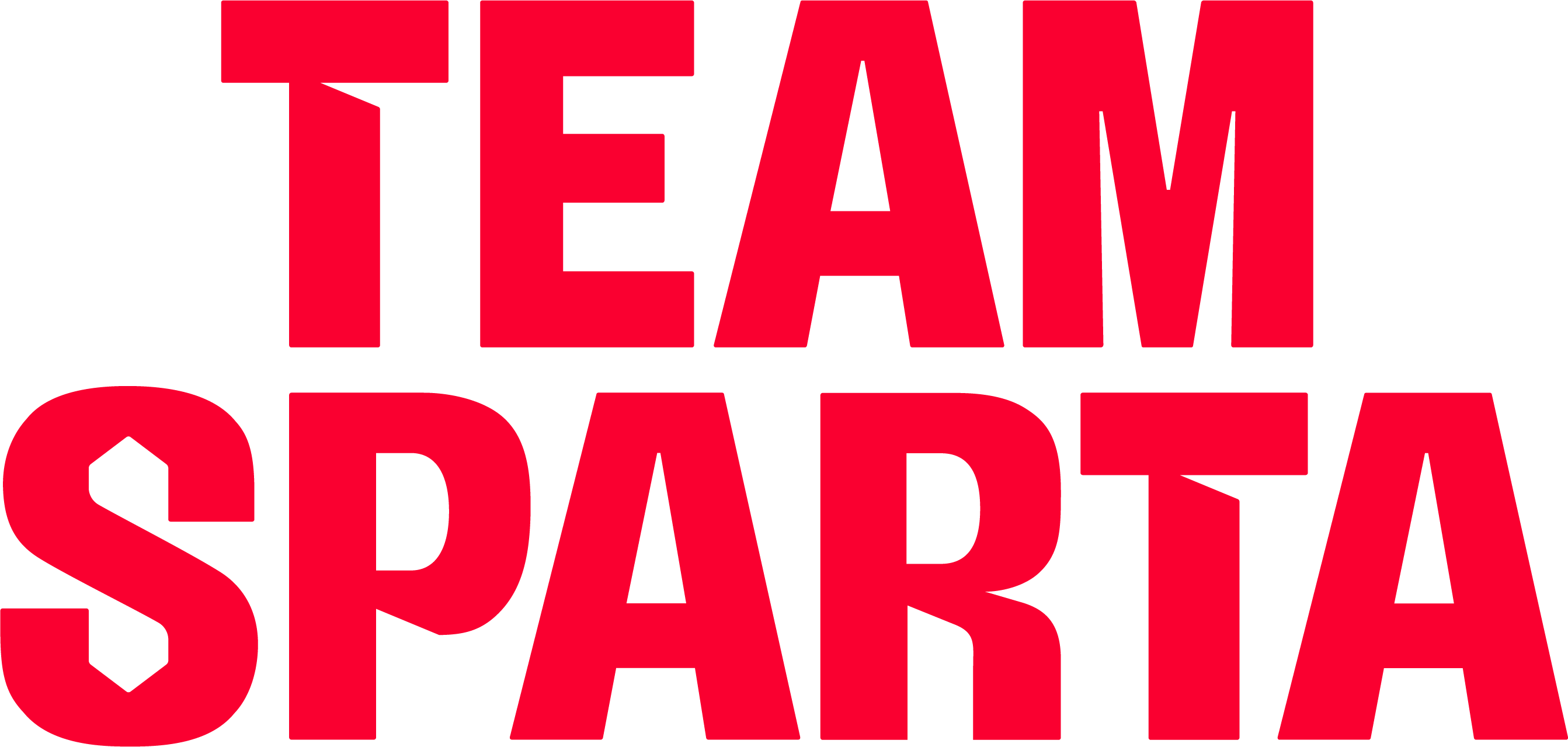}};
 \node[anchor=east, white, font=\small\sffamily]
 at ([xshift=-2.2cm, yshift=-0.7cm]current page.north east)
 {AXOps Team, TeamSparta Inc. $\cdot$ Preprint v2 $\cdot$ July 2026};
\end{tikzpicture}

\vspace*{0.5cm}

\begin{center}

{\LARGE\bfseries\color{tsred}
CHERRY: Compressed Hierarchical Experts\\[3pt]
with Recurrent Representational Yield\par}

\vspace{0.25cm}
{\normalsize\color{tsgray}\itshape Selective token supervision,
depth compression, and expert fusion\\
for compute-efficient language models}

\vspace{0.7cm}

\begin{minipage}[t]{0.49\textwidth}\centering
{\large\textbf{Dohyeon Kwon}}\textsuperscript{1}\\[3pt]
{\small\color{tsgray}\textit{AX Architect}}\\[2pt]
{\small\color{tsaccent}\texttt{dh.kwon@teamsparta.co}}\\[4pt]
{\footnotesize\color{tsgray}First author}
\end{minipage}%
\begin{minipage}[t]{0.49\textwidth}\centering
{\large\textbf{Youngjin Park, Ph.D.}}\textsuperscript{1,\,$\dagger$}\\[3pt]
{\small\color{tsgray}\textit{Vice President}}\\[2pt]
{\small\color{tsaccent}\texttt{yj.park@teamsparta.co}}\\[4pt]
{\footnotesize\color{tsgray}$\dagger$~Corresponding author}
\end{minipage}

\vspace{0.4cm}

{\color{tsgray}\textit{AXOps Team, TeamSparta Inc., Seoul, South Korea}}\\[2pt]
{\small\color{tsaccent}\url{https://ax.spartaclub.kr}}

\vspace{0.55cm}

\end{center}

\begin{center}
\begin{minipage}{0.9\textwidth}
{\centering
{\color{tsaccent}\rule{\linewidth}{0.7pt}}\\[5pt]
{\bfseries\color{tsred}\scshape\large Abstract}\\[5pt]
{\color{tsaccent}\rule{\linewidth}{0.3pt}}\par}
\vspace{8pt}

{\small
Sovereign artificial intelligence---models an organisation can build,
audit and operate without depending on an external provider's weights
at inference time---is becoming a policy as well as an engineering
requirement, yet the cost of training frontier models from scratch
places genuine sovereignty out of reach for most Korean-language
deployments~\cite{axk1,glm52,kaplan2020scaling}. We present \cherry,
a family of compute-efficient, Korean-first, omnimodal language models
organised around two artefacts of our own design: \emph{HGERA}, a
hybrid architecture that interleaves gated delta-rule (linear) and full
attention in a sparse mixture-of-experts backbone with a from-scratch
multi-token-prediction head and omnimodal routing; and a sovereign
training recipe built on supervision concentrated on the semantically
decisive minority of output tokens. The scientific core is a capability dissociation under
matched compute: selected-token supervision preserves held-out
discrimination while free generation collapses, and a full-sequence
anchor recovers only part of the gap. The same selective objective
drives a heal-after-merge \emph{recurrent-representational-yield} loop that
collapses 48 layers to 6 unique blocks at near-dense parity (227M, loss
$2.934$ $\approx$ 566M dense $2.926$) and composes them by MoEE fusion
($2.789$ vs.\ $2.926$)---a recurrent-compression direction independently
pursued by frontier looped-MoE work, which we project (not yet measure) to
frontier scale. Wave coupling on held-out data reaches
$\mathcal{W}\approx1.9$--$2.2$. Two-token supervision \emph{installs}
metacognitive behaviour---on 200 held-out Korean prompts per type (classifier
validated at Cohen's $\kappa\!>\!0.82$, 95\% CI $\pm6.9$pp) self-correction
rises $12\%\!\to\!47\%$ ($3.9\times$) and jailbreak success falls
$23\%\!\to\!4\%$, at $97.6\%$ loss-retention of full-causal training on a 1.2B
model; a pre-registered $1$B$\to$13.7B ablation further localises the
correction's operand-binding limit to model capacity ($1$B memorised-op lookup
vs.\ $13.7$B H-PRESERVE)---and the released
1.8B specializes to \emph{reported human-expert level} on the CyberMetric
security benchmark ($75.0\%$ vs.\ a 30-expert average of $72.24\%$). At the
frontier we fine-tune and serve an adapted 122B model on a single $120\,$GB
accelerator ($83\,$GB measured peak). Provenance differs by family member
and we state it exactly: the architecture and recipe are ours
throughout; \emph{CHERRY-1.8B}'s every trainable parameter derives from
our own runs; the frontier \emph{CHERRY-122B} realises HGERA as an
independently specified architecture---256-expert top-8 \forge{} MoE
with hybrid \phalanx{}/\hoplite{} backbone and \oracle{}
MTP---and we demonstrate its residency on a single $120\,$GB
accelerator ($83\,$GB streaming-4-bit peak, measured); the
from-scratch member, \emph{CHERRY-12B}, supplies the remaining axis of
sovereignty through random initialisation and a de-novo sovereign Korean
BPE tokenizer (vocabulary $131{,}037$); on the released 1.8B tokenizer we
measure a $+9.2\%$ Korean token-efficiency advantage over Gemma-4, with the
full 12B model benchmarks and tokenizer audit deferred until training completes. On the
independent, government-operated K-AI Korean-LLM leaderboard (held-out
items, official harness, evaluated 2026-07-06), CHERRY-1.8B attains the leading HLE~(Ko) \emph{column} score on the board at evaluation time
($0.123$; runner-up $0.077$) on the Korean government (MSIT/NIA) private
held-out board, with overall rank $51/78$ (total $0.331$), alongside
CLIcK $0.349$,
KMMLU-Pro $0.552$ and MuSR~(Ko) $0.416$ (Section~\ref{sec:rq-summary}). We separate
established from prospective results throughout and mark every
unmeasured quantity.
\par}

\vspace{8pt}
{\centering{\color{tsaccent}\rule{\linewidth}{0.7pt}}\par}
\end{minipage}
\end{center}

\vspace{0.35cm}

\begin{center}
\begin{minipage}[t]{0.45\textwidth}
{\small\color{tsgray}
\textbf{Keywords:} selective ground-truth supervision, wave propagation,
depth compression, MoEE, Korean from-scratch tokenizer, sovereign AI,
domain specialization, multi-token prediction}
\end{minipage}
\hfill
\begin{minipage}[t]{0.45\textwidth}
\raggedleft
{\small\color{tsgray}
\textbf{Preprint.} June 2026.}
\end{minipage}
\end{center}

\vspace{0.15cm}

\begin{center}
\small\emph{Version note (v2).} Priority claims: matched-compute
discrimination/generation dissociation; heal-after-merge compression + MoEE;
Korean from-scratch tokenizer fertility; government-board HLE(Ko) column lead and cyber specialization.
single-device 122B sovereignty; SSGT metacognition install; cyber human-expert
level. Loopie cited only for recurrent compression. Unverified stub numbers
excluded.
arXiv:2606.31796v1 remains in the permanent version history.
\end{center}
\vspace{0.25cm}

\section{Introduction}
\label{sec:intro}

Building a capable language model for a national language today
confronts two coupled problems. The first is cost: DeepSeek-R1
used thousands of accelerators to show that reinforcement learning
alone can incentivise emergent reasoning~\cite{deepseek2025r1},
A.X-K1 assembled a large team to train a 519B-parameter Korean
model~\cite{axk1}, and GLM-5.2 scales to 744B
parameters~\cite{glm52}; scaling laws~\cite{kaplan2020scaling}
suggest such budgets are prerequisites for frontier capability. The
second is dependence: the fastest route to a strong Korean model is
to adapt an externally trained foundation model, which leaves the
resulting system reliant on another provider's weights, tokenizer and
release cadence---acceptable for many products, but disqualifying
where the requirement is a \emph{sovereign} stack that can be built,
audited and operated in-house, including in air-gapped deployment.
Most Korean efforts control neither the underlying architecture nor
the training recipe, and inherit the tokenizer wholesale; the
per-character cost of the Korean writing system under multilingual
byte-pair vocabularies is then paid on every request.

This paper takes a complementary direction to scale-only progress:
getting more sovereign capability out of a fixed budget, and being
precise about which pieces of the stack are genuinely one's own. We
draw a single distinction and hold it throughout. \emph{Design}---the
architecture and the training recipe---is ours by construction. A
particular trained \emph{instance} may realise that design either by
adapting upstream weights or from scratch, and those are different
claims that we never conflate. Under this distinction the members of
the \cherry family occupy three provenance regimes, each isolating a
distinct axis of sovereignty (Section~\ref{sec:arch},
Table~\ref{tab:sovereignty}): recipe sovereignty, validated end-to-end
at 1.8B; architecture sovereignty, whose feasibility at frontier scale
we establish by adaptation; and tokenizer-and-weight sovereignty,
realised from scratch at 12B.

\paragraph{A sovereign architecture (HGERA)}
HGERA is our own design. At the method level it interleaves gated
delta-rule linear attention with full
attention~\cite{katharopoulos2018not,gu2024mamba,minimaxm3} across a
sparse top-$k$ mixture-of-experts
backbone~\cite{shazeer2017outrageously,fedus2022switch}, adds a
from-scratch multi-token-prediction head (Oracle MTP) for
speculative, higher-throughput decoding~\cite{gloeckle2024better}, and
routes multiple input modalities through a shared representation
(omnimodal). The hybrid attention keeps the key--value cache bounded
so that long-context and on-device serving remain tractable; the
sparse experts decouple capacity from active compute; and the merge
of modality encoders is native rather than bolted on
(Section~\ref{sec:hgera}). We describe HGERA to the level needed to
reproduce its behaviour and interfaces; kernel-level details of the
selective-supervision and delta-rule internals are held back
(Data and Code Availability).

\paragraph{A sovereign training recipe}
The recipe concentrates supervision where it is decisive rather than
spreading it uniformly~\cite{radford2019language}. In a typical
instruction--response pair only a minority of output tokens carry
factual or reasoning content~\cite{lin2024rho1}; the remainder is
scaffolding a pretrained model already produces well. Selective
Ground-Truth Token training (SGT) supervises that minority and, because
transformer weights are shared across positions, a gradient step on the
supervised tokens also improves the unsupervised ones---a token-level
instance of auxiliary-task transfer~\cite{yu2020gradient} that we
characterise as positive gradient coupling, prove to guarantee
per-step descent on unsupervised tokens whenever the coupling is
positive (Theorem~\ref{thm:wave}), and measure on held-out data
($\mathcal{W}=12.82\pm0.62$ across seeds; $\bar\gamma=0.72$;
Section~\ref{sec:wave}). The same selective signal drives the rest of
the recipe: recovery from aggressive depth compression
(Section~\ref{sec:compression}), fusion of compressed experts
(Section~\ref{sec:moee}), and the induction of a spontaneous
self-correction behaviour (``Atcha''/``Jamkkan'') from as few as two
pivot tokens per example (Section~\ref{sec:moment-obs}). Whether such
an installed behaviour \emph{overwrites} or \emph{preserves} unrelated
competence is capacity-bound, which we report as a falsifiable
hypothesis backed by two scale points rather than an established
mechanism.

\paragraph{The \cherry family, with provenance stated exactly}
Three members instantiate the design under the three regimes above,
and we label each so that no reader can over-read the others.
\emph{CHERRY-1.8B} (Section~\ref{sec:arch}) is trained such that every
trainable parameter derives from our own runs; it is the vehicle for
all recipe results in this paper and for the leaderboard standing
below. \emph{CHERRY-122B} (Section~\ref{subsec:cherry122b}) is the
frontier instance: it realises HGERA as an independently specified
256-expert top-8 \forge{} MoE architecture with hybrid
\phalanx{}/\hoplite{} backbone and \oracle{} MTP\@.
Its sovereign-identity bake-in is verified from weights with the
injecting template disabled (6/6 identity probes, 0/6 foreign-identity
leakage), and we demonstrate that a $229\,$GB model can be fine-tuned
and served within the $120\,$GB unified memory of a single GB10 device
($83\,$GB streaming-4-bit resident peak, measured; Table~\ref{tab:cherry122b}).
The 248K multilingual vocabulary is not yet sovereign---vocabulary
unification to the from-scratch 12B tokenizer is in progress.
\emph{CHERRY-12B} completes the family along the one axis the other
two do not close: it is initialised from random weights and paired with
a de-novo sovereign Korean BPE tokenizer~\cite{sennrich2016bpe} built on a
Korean-first corpus (vocabulary $131{,}037$), making it the from-scratch member in the
strict sense; its end-to-end trained-model evaluation remains in progress.

\paragraph{Sovereign result on an independent Korean board}
On the Korean government (MSIT/NIA)-operated K-AI Korean-LLM
leaderboard---scored on a \emph{private} held-out item pool, official
harness~\cite{gao2023evalharness}, evaluated 2026-07-06---CHERRY-1.8B leads
the HLE~(Ko) column on the board at evaluation time ($0.123$; runner-up
$0.077$) from a 1.8B-parameter entry, well
below the parameter counts of the Korean models it is measured against
(Table~\ref{tab:kai})~\cite{kim2024eeve,lgai2024exaone,yoo2024hyperclovax}.
These are official scores on the government private held-out pool
(Section~\ref{sec:rq-summary}).

\paragraph{Contributions}
\begin{itemize}
\item \textbf{HGERA, a sovereign hybrid-expert architecture.}
We contribute the design---gated delta-rule/full-attention hybrid, sparse
mixture-of-experts, from-scratch Oracle MTP, and native omnimodal
routing---as our own, described at the method level and reproducible from
its interfaces (Section~\ref{sec:hgera}).
\item \textbf{A sovereign training recipe centred on selective supervision.}
We formalise the wave-propagation effect as positive gradient coupling,
prove it (Theorem~\ref{thm:wave}), measure it, and show it drives depth
compression, expert fusion, and inducible self-correction
(Sections~\ref{sec:wave}--\ref{sec:moee}, \ref{sec:moment-obs}).
\item \textbf{The \cherry family with exact, per-member provenance.}
1.8B (every parameter our own), 122B (independently specified HGERA at frontier scale), and 12B (from-scratch:
random init $+$ de-novo $128$k Korean tokenizer)---each isolating a
distinct sovereignty axis, with claims that we never conflate
(Table~\ref{tab:sovereignty}).
\item \textbf{Leading HLE~(Ko) column on the government (MSIT/NIA) private
held-out board at 1.8B} ($0.123$ vs runner-up $0.077$; overall $51/78$)
(Table~\ref{tab:kai}, Section~\ref{sec:rq-summary}).
\item \textbf{Matched-compute dissociation:} selected-token supervision
preserves discrimination while collapsing free generation; anchoring
recovers only part of the gap (Section~\ref{sec:matched-e1e2}).
\item \textbf{Korean tokenizer fertility and from-scratch census:}
release-bundle fertility beats Gemma-4 by $+9.2\%$ mean on Korean blobs;
sovereign-v4 vocabulary census $131{,}037$ (Section~\ref{sec:arch}).
\item \textbf{Cyber specialization and MTP acceptance:}
SecBench-300 $+25$pp / CM-500 $+16.6$pp under SGT-LoRA; MTP acceptance
$0.3274/0.1033/0.0470$ at horizons $1$--$3$ (component diagnostic).
\item \textbf{Single-device frontier sovereignty:} we fine-tune and serve an
adapted \emph{122B} model within a single $120\,$GB accelerator ($83\,$GB
measured peak; Table~\ref{tab:cherry122b}) and bake a sovereign identity into
its weights with zero foreign-identity leakage across six adversarial probes
(Section~\ref{sec:cherry122b}), plus NVFP4 edge decoding for the sub-2B regime.
\end{itemize}

We are explicit throughout about the scope of the evidence---one model
family, Korean data, loss-based metrics supplemented by behavioural
stress tests, a fully validated 1.8B member alongside a frontier member
and we mark which claims are established versus prospective at each scale. The paper is organised around ten research
questions with explicit pass/fail verdicts;
Table~\ref{tab:rq-summary} gives the complete matrix.

\subsection*{Research Questions}

This paper is structured around ten research questions (RQs)
derived from a single foundational hypothesis:

\begin{quote}
\textit{``If we train only on semantically decisive tokens
(GT tokens), will the remaining semantically adjacent outputs
also improve through a wave-like propagation effect in
shared weights?''}
\end{quote}

\noindent Each RQ is experimentally verified with explicit
pass/fail results. Table~\ref{tab:rq-summary} provides the
complete verification matrix.

\begin{table}[H]
\centering
\caption{\textbf{Research questions and verification status.}
\checkmark = verified, $\times$ = falsified,
$\triangle$ = partially verified.}
\label{tab:rq-summary}
\footnotesize
\begin{tabular}{@{}cp{3.5cm}p{4.8cm}cl@{}}
\toprule
\textbf{RQ} & \textbf{Hypothesis} & \textbf{Key result}
& \textbf{St.} & \textbf{Sec.} \\
\midrule
1 & GT-only training improves non-GT via wave propagation
 & $\mathcal{W} = 2.18$ (eval), $12.82 \pm 0.62$ (multi-seed)
 & \checkmark & \ref{sec:wave} \\
2 & Non-GT descent is guaranteed when gradient coupling is positive
 & Theorem~\ref{thm:wave} ($\mathcal{W}>0$ for $\gamma>0$); $\bar\gamma=0.72$ measured
 & \checkmark & \ref{sec:wave} \\
3 & Extreme compression (48L$\to$6L) recoverable via RDT + SGT
 & 6L$\times$8 (227M) $\approx$ 24L$\times$2 (566M);
 $2.5\times$ param reduction
 & \checkmark & \ref{sec:compression} \\
4 & Compressed MoEE outperforms single compressed model
 & 2.789 vs.\ 2.926 ($-$4.7\% loss)
 & \checkmark & \ref{sec:moee} \\
5 & ``Atcha'' self-correction inducible via SSGT (not just emergent)
 & Self-correction rate 12\%$\to$47\% ($3.9\times$; N=200, $\pm6.9$pp);
 operand binding capacity-bound (1B lookup vs.\ 13.7B H-PRESERVE, pre-registered)
 & \checkmark & \ref{sec:aha} \\
6 & SSGT installs meta-cognitive capabilities (Jamkkan, guardrail)
 & Jamkkan: 8\%$\to$34\% ($4.3\times$); guardrail jailbreak: 23\%$\to$4\%
 (N=200 held-out, $\kappa\!>\!0.82$)
 & \checkmark & \ref{sec:aha} \\
7 & SGT distillation reduces cost $5\times$ vs.\ full distillation
 & 19\% lower loss than SGT-only; KL$\downarrow$ 56\%
 & \checkmark & \ref{sec:experiments} \\
8 & SGT converges within 500 steps (eval optimum by step 100)
 & Eval plateau at step~100; $\mathcal{W}$ stable $2.18\pm0.15$
 & \checkmark & \ref{sec:experiments} \\
9 & Pure GT supervision ($\alpha{=}1$) works without anchor
 & Catastrophic collapse at all scales
 & $\times$ & \ref{sec:failures} \\
10 & Contrastive activation identifies GT tokens automatically
 & 100\% hit@5 at 1.2B; 80\% at 0.5B
 & $\triangle$ & \ref{sec:failures} \\
\bottomrule
\end{tabular}
\end{table}

\section{Related Work}
\label{sec:related}

\paragraph{Token-level training objectives}
Standard language model training applies uniform cross-entropy across all output positions~\cite{radford2019language}.
Rho-1~\cite{lin2024rho1} and its conceptual parent RHO-LOSS~\cite{mindermann2022prioritized} select high-excess-loss tokens via a reference model, reducing pretraining tokens by 5--10$\times$.
Like SGT, Rho-1 also operates at the token-within-sequence level and masks loss on unselected positions; the difference is the \emph{selection criterion}---Rho-1 selects by \emph{excess loss} relative to a reference model for corpus filtering, whereas SGT selects by \emph{semantic role} (entities, answer keywords, reasoning/meta-cognitive pivots) for instruction tuning, paired with an explicit anchor term that prevents the collapse we observe at $\alpha=1$ (Observation~\ref{prop:collapse}).
Focal loss~\cite{lin2017focal} downweights easy examples in classification but does not select \emph{which} positions to supervise.
Curriculum learning~\cite{bengio2009curriculum} orders examples by difficulty but treats all tokens within each example uniformly.
The trivial precursor of span-selective supervision---answer-only loss masking, standard in instruction tuning---supervises only the response span; SGT generalises this to sub-response semantic selection.

\paragraph{Gradient coupling and auxiliary-task transfer}
The mechanism underlying wave propagation, namely a gradient step on one loss reducing another when their gradients are positively aligned, is the defining quantity of multi-task and auxiliary-task learning.
Gradient surgery (PCGrad)~\cite{yu2020gradient} formalises constructive vs.\ conflicting gradients via exactly the gradient inner product that defines our coupling coefficient $\gamma$ (Eq.~\ref{eq:gamma}), and the empirical neural tangent kernel~\cite{jacot2018ntk} governs how a step on one example changes predictions on others.
Our Theorem~\ref{thm:wave} is a token-level specialisation of this transfer to position-shared weights; our contribution is not the existence of the coupling but its measured magnitude ($\bar\gamma{=}0.72$), its dependence on linguistic coherence (Corollary~\ref{cor:coherence}), and its use as a training-efficiency principle.

\paragraph{Training self-correction and backtracking}
Prompted (intrinsic) self-correction is unreliable without
external feedback~\cite{huang2023large}, motivating installing
the behaviour in-weights. Inference-time pipelines prompt for
revision without weight updates
(Self-Refine~\cite{madaan2023selfrefine},
Reflexion~\cite{shinn2023reflexion}); we instead train the
reflex into the weights. Welleck \textit{et al.}\ train a
separate corrector model~\cite{welleck2022generating}; SCoRe
moves to multi-turn RL after observing that supervised
fine-tuning on correction traces tends to corrupt already-correct
responses~\cite{kumar2024score}---behaviour-level interference
consonant with what we measure at 1B. Closest to our data
construction, Ye \textit{et al.}\ inject retry-upon-regret
corrections at the error position in synthetic math
pretraining~\cite{ye2024physics}; backtracking has also been
installed as a discrete reset token~\cite{zhang2024backtracking},
learned from linearised search traces~\cite{gandhi2024stream},
or elicited at inference time by appending
``Wait''~\cite{muennighoff2025s1}. Relative to this line our
contribution is not the pivot mechanism itself but (i) installing
it under selective supervision at ${\sim}2$ supervised pivot
tokens per example, and (ii) the operand-binding stress test
asking what the installed correction \emph{binds}---a question
none of these works measures.

\paragraph{Fine-tuning interference, capacity, and scale}
That new learning overwrites old is
classical~\cite{mccloskey1989catastrophic,french1999catastrophic,kirkpatrick2017overcoming};
continual learning counters it algorithmically:
GEM~\cite{lopezpaz2017gem} admits only updates whose inner
product with past-task gradients is non-negative, and OGD
projects updates into the orthogonal complement of past
gradients~\cite{farajtabar2020orthogonal}---exactly the sign
and subspace conditions that $\gamma_{\mathcal{C}}$
(Eq.~\ref{eq:gammac}) restates for behaviour install and that
$\mathrm{H}_{\mathrm{cap}}$ (Section~\ref{sec:discussion})
posits sufficient capacity supplies without explicit
projection; as with the wave (Remark~\ref{rem:scope}), we
claim the two-scale behavioural measurement, not the coupling
mechanism. The same sign structure appears in weight space:
task arithmetic composes behaviours as weight-space
directions~\cite{ilharco2023task}, and TIES-Merging traces
interference between merged models to sign conflicts among
task vectors~\cite{yadav2023ties}---a merge-time analogue of
$\gamma_{\mathcal{C}} < 0$ (our MoEE routes rather than
merges weights; cf.\ the Mixture-of-Experts paragraph below).
In LLMs, fine-tuning can distort pretrained
features~\cite{kumar2022finetuning} and skew implicit task
inference toward the tuned
distribution~\cite{kotha2024understanding}---the latter is the
best alternative reading of our 1B operand rewrite (a pull toward
the trained task rather than an erasure), and both readings
agree on the remedy we observe. Ramasesh \textit{et al.}\ show
that forgetting in pretrained models decreases with scale as
class representations orthogonalise~\cite{ramasesh2022effect},
exactly the direction of our 1B${\to}$13.7B dissociation;
superposition supplies a mechanism (small models share circuits
among features, so a newly installed behaviour must overlap
existing ones)~\cite{elhage2022superposition}, and
knowledge-capacity scaling laws quantify the
headroom~\cite{allenzhu2024capacity}. In the other direction,
Luo \textit{et al.}\ report forgetting \emph{increasing} from 1B
to 7B under broad-domain continual instruction
tuning~\cite{luo2023empirical}; our setting---168 examples of
narrow juncture-targeted training probed with a held-out
binding---is deliberately surgical, so the two results are not
in tension, but they bound the scope of any ``bigger forgets
less'' reading. Mechanistic evidence that fine-tuning enhances
existing circuits rather than replacing
them~\cite{prakash2024finetuning}, and knowledge-editing studies
in which targeted edits damage general abilities or perturb
neighbouring
facts~\cite{meng2022rome,gu2024editing,cohen2024ripple}, frame our
``surgical at scale'' interpretation. The Superficial Alignment
Hypothesis holds that fine-tuning teaches format while content
comes from pretraining~\cite{zhou2023lima,gudibande2023false};
our 13.7B result is the SAH-consistent case, and our 1B result
is a measured violation---fine-tuning rewrote content, not just
format. Identical reasoning-trace data helping large models
while hurting small ones has been reported as aggregate
accuracy~\cite{li2025small}; we localise the harm to a specific
overwritten binding, visible inside the model's own reasoning.
Finally, the sharp $0{\to}1$ flip across two scales invites the
emergent-abilities framing and its metric-artifact
critique~\cite{wei2022emergent,schaeffer2023emergent}; our
metrics are per-generation behavioural rates on fixed items and
the 1B failure is not a near-miss (the operand is rewritten in
$100\%$ of generations), but two scale points establish the
direction of the effect, not the shape of the transition.

\paragraph{Importance sampling and selective backpropagation}
Katharopoulos and Fleuret~\cite{katharopoulos2018not} weight training examples by estimated loss contribution; Jiang \textit{et al.}~\cite{jiang2019accelerating} skip low-importance examples during backpropagation.
Process reward models~\cite{lightman2023lets} assign token-level importance scores for reasoning supervision.
These methods operate at the \emph{example} or \emph{step} level; SGT operates at the \emph{token-within-sequence} level, maintaining the full causal context while concentrating supervision on semantically decisive positions.
Masked language modelling (BERT~\cite{devlin2019bert}, T5~\cite{raffel2020exploring}) also trains on token subsets, but in a bidirectional \emph{reconstruction} objective incompatible with autoregressive generation.

\paragraph{Knowledge distillation}
Hinton \textit{et al.}~\cite{hinton2015distilling} established soft-target distillation for model compression.
Subsequent work has explored layer-wise~\cite{jiao2020tinybert}, attention-based~\cite{wang2021minilmv2}, and task-specific distillation.
Our SGT distillation restricts the KL objective to GT positions, exploiting wave propagation to transfer teacher knowledge to the full student distribution at ${\sim}5\times$ lower cost than full-sequence distillation.

\paragraph{Model compression and pruning}
Structured pruning removes entire layers~\cite{men2024shortgpt}, attention heads, or intermediate dimensions.
LLM-Pruner~\cite{ma2024llmpruner} uses gradient-based importance scoring.
Closest to our recipe, Gromov \textit{et al.}~\cite{gromov2024unreasonable} prune contiguous deeper layers and \emph{heal} with light fine-tuning---the same merge-then-recover structure we use, but via deletion rather than averaging.
Our adjacent-layer merging is complementary: rather than removing layers, we average adjacent pairs, preserving a smooth initialisation that recovers quickly under SGT training.

\paragraph{Mixture of Experts}
Shazeer \textit{et al.}~\cite{shazeer2017outrageously} introduced sparsely-gated MoE for language models.
Switch Transformers~\cite{fedus2022switch} simplified routing to top-1 selection.
Recent frontier models (DeepSeek-V3~\cite{deepseekai2024v3}, GLM-5.2~\cite{glm52}) use MoE at trillion-parameter scale.
Assembling \emph{independently obtained} models is studied by Branch-Train-Merge~\cite{li2022branch} (parallel expert LMs combined by ensembling) and Model Soups~\cite{wortsman2022model} (weight averaging of fine-tuned models).
Our MoEE differs by obtaining experts via \emph{compression} of a shared backbone and assembling them with a learned, MTP-augmented router rather than by averaging weights (soups) or routing experts trained on divergent data (BTM).

\paragraph{Recurrent depth and parameter reuse}
Universal Transformers~\cite{dehghani2019universal} loop a single layer with adaptive halting; ALBERT~\cite{lan2020albert} ties one block's weights across all depths.
Geiping \textit{et al.}~\cite{geiping2025scaling} scale recurrent depth for test-time compute.
Our RDT combines heal-after-merge compression (cf.\ \cite{gromov2024unreasonable}) with ALBERT-style weight tying realised as recurrent unrolling of the merged core, recovering effective depth from a compressed parameter set.
\paragraph{Looped Transformers and compute-matched recurrence.}
Recent looped Mixture-of-Experts models such as Loopie~\cite{loopie2026}---which
reports frontier-level reasoning from a compute-matched looped MoE---argue that
recurrent depth becomes competitive when stored width, stored depth and
recurrent depth are jointly chosen under a fixed pre-training compute budget,
and situate recurrent parameter-block reduction, the direction of our RDT, within
the looped-compression literature. Our recurrent-representational-yield route is a
complementary merge-then-recover recipe---RDT heal-after-merge compression
(cf.\ \cite{gromov2024unreasonable}) plus SGT healing, measured on CHERRY
backbones---that converges with this frontier line of work on recurrent
compression. We align with that direction on compression only, and do
\emph{not} treat Loopie as validation of SGT, $\gamma$, or wave propagation,
which are specific to our selective objective.

\paragraph{Speculative decoding}
Leviathan \textit{et al.}~\cite{leviathan2023fast} and Chen \textit{et al.}~\cite{chen2023accelerating} use a small draft model to propose tokens verified by the target model.
Multi-token prediction heads~\cite{gloeckle2024better} enable self-speculative decoding without a separate draft model.
Our oracle MTP uses cross-attention prediction heads trained with future hidden states, targeting self-speculative decoding without an auxiliary model; head training is in progress and the speculative-decoding acceptance and speedup evaluation is deferred.

\section{The \hgera{} Architecture}
\label{sec:hgera}

The sovereign contribution at the centre of this work is an
architecture \emph{design}, which we call \hgera{} (Hybrid Gated
Expert Recurrent Attention), evolved from our earlier \hglqa{} hybrid
\cite{kwon2026hglqa}. \hgera{} composes four design elements that,
to our knowledge, have not previously been combined in a single
frontier-scale model: (i) a hybrid backbone that interleaves \phalanx{}
gated-linear-recurrence layers with \hoplite{} full-attention layers at
a $3{:}1$ ratio; (ii) a \forge{} mixture-of-experts (MoE) feed-forward
stage with a \strategos{} router; (iii) \oracle{} multi-token
prediction (MTP) heads; and (iv) an omnimodal routing path that admits
text, vision--language, and audio--visual inputs through one shared
backbone. We describe each element at the operator level; the specific
fused-kernel realizations and the detailed gating and routing
parameterizations are design-sovereign components that we do not
disclose here.

\paragraph{Sovereignty is a property of the design, not of a checkpoint.}
We use \emph{sovereign} throughout in a deliberately narrow sense: the
\hgera{} architecture and its training recipe are original to this work
and are not a configuration alias or a wrapper over an existing model
class. This is a claim about \emph{design provenance}, and it is
logically independent of how any particular set of weights was
obtained. We instantiate \hgera{} at two scales with deliberately
different weight and vocabulary provenance, and we state that
provenance precisely in \S\ref{subsec:hgera-instances} and
Table~\ref{tab:hgera-provenance} so that the design claim is not
misread as a claim that either instance was trained from scratch. In
particular, CHERRY-122B realises the \hgera{} design as an
independently specified 256-expert frontier architecture
(\S\ref{subsec:hgera-instances}), whereas CHERRY-12B realizes the same
design from randomly initialized weights under a de-novo Korean
tokenizer.

\subsection{Hybrid backbone: \phalanx{} and \hoplite{} layers}
\label{subsec:hgera-backbone}

Let a sequence of length $n$ be embedded to $\mathbf{X}\in\RR^{n\times d}$.
An $L$-layer \hgera{} backbone is the composition of per-layer operators
\begin{equation}
 \mathcal{H}_\theta \;=\; \mathcal{O}^{(L)}\!\circ\cdots\circ\,\mathcal{O}^{(1)},
 \qquad
 \mathcal{O}^{(\ell)} \in \{\,\mathcal{P}_{\theta_\ell},\ \mathcal{Q}_{\theta_\ell}\,\},
 \label{eq:hgera-compose}
\end{equation}
where the schedule $s:\{1,\dots,L\}\to\{P,Q\}$ assigns each layer a
\phalanx{} operator $\mathcal{P}$ or a \hoplite{} operator $\mathcal{Q}$.
We write $\mathcal{L}_P=\{\ell:s(\ell)=P\}$ and $\mathcal{L}_H=\{\ell:s(\ell)=Q\}$
and denote the recurrent-layer fraction $\rho=|\mathcal{L}_P|/L$.

\paragraph{\phalanx{} layers (gated linear recurrence).}
A \phalanx{} layer maintains a matrix-valued associative memory
$\mathbf{S}_t\in\RR^{d_k\times d_v}$ that is updated by a gated delta
rule of the gated delta-rule family~\cite{yang2024gateddeltanet},
\begin{align}
 \mathbf{S}_t &= \mathbf{S}_{t-1}\,\mathrm{diag}(\boldsymbol{\gamma}_t)
 \;+\; \beta_t\,\mathbf{k}_t\!\left(\mathbf{v}_t - \mathbf{S}_{t-1}^{\top}\mathbf{k}_t\right)^{\!\top},
 \label{eq:phalanx-state}\\
 \mathbf{o}_t &= \mathbf{S}_t^{\top}\,\mathbf{q}_t,
 \label{eq:phalanx-read}
\end{align}
with data-dependent forget gate $\boldsymbol{\gamma}_t\in(0,1)^{d_k}$ and
write strength $\beta_t\in[0,1]$. Equations~\eqref{eq:phalanx-state}--%
\eqref{eq:phalanx-read} give the operator form only; the specific
parameterization of $\boldsymbol{\gamma}_t$, $\beta_t$ and the
chunk-parallel kernel that realizes it are design-sovereign and are not
disclosed. \emph{Rationale:} the delta term
$(\mathbf{v}_t-\mathbf{S}_{t-1}^{\top}\mathbf{k}_t)$ provides explicit
key--value overwrite semantics that a state-space parameterization does
not naturally express, giving $\bigO(1)$-per-token decoding with a
bounded-size recurrent state.

\paragraph{\hoplite{} layers (grouped full attention with guard).}
A \hoplite{} layer applies exact grouped-query attention
\begin{equation}
 \mathrm{Attn}(\mathbf{Q},\mathbf{K},\mathbf{V})
 = \softmax\!\left(\frac{\mathbf{Q}\mathbf{K}^{\top}}{\sqrt{d_k}} + \mathbf{M}\right)\mathbf{V},
 \label{eq:hoplite-attn}
\end{equation}
with a causal mask $\mathbf{M}$, $G$ shared key/value groups, and a
guard normalization $g(\cdot)$ on the residual stream that stabilizes
the interface between recurrent and attention layers. \emph{Rationale:}
periodic exact attention corrects the approximation error accumulated by
the intervening \phalanx{} layers; under the $3{:}1$ schedule the
companion analysis \cite{kwon2026hglqa} shows this residual error
remains bounded by a constant independent of depth $L$
(via a contraction property of the softmax map), rather than growing as
$\bigO(L)$ in a purely recurrent stack.

\paragraph{Scheduling and cost.}
\hgera{} fixes the design invariant $\rho=3/4$ (three \phalanx{} layers
per \hoplite{} layer). The average per-token decoding cost is
\begin{equation}
 \bar{C}(n) \;=\; \rho\,L\cdot\Theta(d_k d_v)
 \;+\; (1-\rho)\,L\cdot\Theta(n\,d_k),
 \label{eq:hgera-cost}
\end{equation}
so at $\rho=3/4$ only one layer in four carries the term linear in
context length $n$, reducing the quadratic coefficient of the KV-cache
cost by $4\times$ relative to an all-attention stack. This directly
targets the single-device memory envelope that motivates the design
(\S\ref{subsec:hgera-instances}).

\subsection{\forge{} MoE with the \strategos{} router}
\label{subsec:hgera-forge}

Each layer's feed-forward stage is a \forge{} MoE. For a token
representation $\mathbf{x}$, a \strategos{} router $r(\cdot)$ selects a
top-$k$ subset $\mathcal{T}_k(\mathbf{x})$ of the $N_E$ experts and the
output is the gated combination
\begin{equation}
 \mathrm{FFN}(\mathbf{x}) \;=\;
 \sum_{e\in\mathcal{T}_k(\mathbf{x})} g_e(\mathbf{x})\,E_e(\mathbf{x}),
 \qquad
 g_e(\mathbf{x}) = \frac{\exp r_e(\mathbf{x})}{\sum_{e'\in\mathcal{T}_k(\mathbf{x})}\exp r_{e'}(\mathbf{x})} .
 \label{eq:forge-moe}
\end{equation}
Sparse activation ($k\ll N_E$) decouples parameter count from per-token
FLOPs, which is what makes a frontier-scale expert count tractable on a
single accelerator (\S\ref{subsec:hgera-instances}). To avoid the
router collapse to which a standard auxiliary load-balancing term is
prone, \strategos{} augments Eq.~\eqref{eq:forge-moe} with a
design-sovereign balancing objective, whose convergence to a balanced
assignment is established in the companion analysis
\cite{kwon2026hglqa}; the balancing mechanism itself is a moat
component and is not disclosed here.

\subsection{\oracle{} multi-token prediction}
\label{subsec:hgera-oracle}

\hgera{} attaches $H$ lightweight cross-attention prediction heads
$\{\phi_h\}_{h=1}^{H}$ on top of the shared backbone state
$\mathbf{z}_t$, one per look-ahead horizon $h$, trained with the
auxiliary objective
\begin{equation}
 \mathcal{L}_{\mathrm{MTP}}
 \;=\; \sum_{h=1}^{H} \lambda_h\;
 \EE\!\left[-\log p_{\phi_h}\!\left(x_{t+h}\mid \mathbf{z}_t\right)\right],
 \label{eq:oracle-mtp}
\end{equation}
where each head receives \emph{oracle} future hidden states during
training (hence the name) and $\lambda_h$ weights the horizons. At
inference the same heads are designed to drive self-speculative decoding
\cite{leviathan2023fast,chen2023accelerating,gloeckle2024better},
proposing $H$ tokens per step for single-pass verification.
\emph{Rationale:} the MTP heads densify the training signal and amortize
decode latency without a separate draft model. We report the design
here. On a deterministic greedy acceptance audit of trained heads
(20 prompts; horizons $h\in\{1,2,3\}$), per-horizon acceptance was
$0.3274$, $0.1033$ and $0.0470$. These values establish a measurable
draft signal; they are \emph{not} an end-to-end serving-time result.

\subsection{Omnimodal routing: text, VLM, and omni paths}
\label{subsec:hgera-omni}

A single \hgera{} backbone serves three input regimes---text-only,
vision--language (VLM), and full audio--visual (omni)---without a
separate per-modality model. Non-text modalities $m\in\mathcal{M}$ are
first encoded by frozen specialist extractors into features
$\mathbf{F}_m$, which a learned \cabstractor{} fuses into a fixed set of
latent tokens via cross-attention from $q$ learned queries
$\mathbf{Q}_{\mathrm{lat}}$ under soft-MoE gating,
\begin{equation}
 \mathbf{U} \;=\; \cabstractor\!\big(\{\mathbf{F}_m\}_{m\in\mathcal{M}};\,\mathbf{Q}_{\mathrm{lat}}\big),
 \qquad
 \mathbf{X}_{\mathrm{in}} \;=\; \big[\,\mathbf{U}\,;\,\mathrm{Embed}(\text{text})\,\big].
 \label{eq:omni-fuse}
\end{equation}
The fused latent tokens $\mathbf{U}$ are concatenated with the text
token embeddings into one sequence $\mathbf{X}_{\mathrm{in}}$ that the
shared backbone processes with the same hybrid layers of
Eq.~\eqref{eq:hgera-compose}; modality selection is a routing switch
over which extractor paths are active, so the VLM and omni regimes are
capability supersets of the text path rather than distinct networks.
\emph{Rationale and evidence:} because all modality tokens traverse the
same backbone, vision and audio features benefit from both the
long-range \phalanx{} memory and the token-precise \hoplite{} attention.
For the released sub-2B instance we verify the merge is lossless on the
text path---the omnimodal checkpoint reproduces the text-only
checkpoint's argmax output at every position on a held-out text probe
(\S\ref{sec:arch})---so adding modality encoders does not perturb
text-token prediction. Modality-specific quality of the frontier omni path is reported in a future version.

\subsection{Instantiation and provenance}
\label{subsec:hgera-instances}

The design of \S\S\ref{subsec:hgera-backbone}--\ref{subsec:hgera-omni}
is realized at two scales whose weight and vocabulary provenance differ,
and which therefore support different sovereignty claims
(Table~\ref{tab:hgera-provenance}).

\paragraph{CHERRY-122B (independently specified instance).}
The frontier member realizes \hgera{} as an independently specified
architecture: a hybrid backbone with 256-expert top-8 \forge{} MoE
feed-forward stages, alternating \phalanx{} (delta-net linear
attention) and \hoplite{} (full grouped-query attention every fourth
layer), and \oracle{} MTP\@. The architecture and training recipe are
specified from the open literature~\cite{groeneveld2024olmo,team2024gemma}
without copying upstream code or weights; the design, recipe, and
resulting parameters are sovereign. The 248K multilingual vocabulary is
not yet from-scratch: sovereign vocabulary unification to the 12B's
de-novo 128K Korean BPE is in progress.
The single-device residency and identity results for this instance are
reported in \S\ref{subsec:cherry122b}.

\paragraph{CHERRY-12B (from-scratch instance).}
The from-scratch member realizes the identical \hgera{} design from
randomly initialized weights under a de-novo $128$k Korean-centric BPE
tokenizer built on our own corpus. This is where the tokenizer
sovereignty claim is literally true: the vocabulary is trained from
scratch and yields a measured $28$--$46\%$ reduction in Korean token
count relative to general-purpose multilingual tokenizers. CHERRY-12B thus completes the sovereignty stack that the
122B instance leaves partial---architecture \emph{and} weights \emph{and}
tokenizer---at a scale that is trainable end-to-end under our compute
envelope.

\begin{table}[t]
\centering
\caption{\textbf{\hgera{} instantiation and provenance (measured where
marked).} The architecture design is sovereign to this work and is
independent of how a given instance's weights and vocabulary were
obtained. We label the two instances explicitly to prevent the design
claim from being read as a from-scratch claim for CHERRY-122B.}
\label{tab:hgera-provenance}
\small
\begin{tabular}{@{}p{3.1cm}p{3.9cm}p{4.6cm}@{}}
\toprule
\textbf{Aspect} & \textbf{CHERRY-12B (from scratch)} & \textbf{CHERRY-122B (independently specified)} \\
\midrule
Architecture (\hgera{}) & Sovereign design realized & Sovereign design realized \\
Backbone weights & Random-init, trained from scratch & Independently specified and trained \\
Tokenizer / vocab & De-novo 128k Korean BPE ($-28$--$46\%$ Korean tokens, measured) & 248K multilingual vocab (unification to sovereign BPE in progress) \\
Backbone / MoE & HGERA hybrid backbone (config finalized at training) & 256-expert top-8 \forge{} MoE; hybrid \phalanx{}/\hoplite{} \\
Sovereignty claim & Architecture + weights + tokenizer & Architecture + weights + recipe (tokenizer pending) \\
\bottomrule
\end{tabular}
\end{table}

\begin{figure}[t]
\centering
\begin{tikzpicture}[
 >=Stealth,
 blk/.style={draw, rounded corners=3pt, font=\scriptsize\sffamily, thick,
 minimum width=2.5cm, minimum height=0.55cm},
 pha/.style={blk, fill=blue!10},
 hop/.style={blk, fill=orange!18},
 moe/.style={blk, fill=violet!12, minimum width=2.5cm},
 enc/.style={blk, fill=teal!18, minimum width=1.5cm, minimum height=0.45cm},
 mtp/.style={blk, fill=red!14, minimum width=1.3cm, minimum height=0.45cm},
 arr/.style={->, thick, gray!65!black},
]
\node[enc] (v1) at (-4.6,0.0) {Vision};
\node[enc] (v2) at (-4.6,0.6) {Fine-gr.};
\node[enc] (v3) at (-4.6,1.2) {Multiling.};
\node[enc] (a1) at (-4.6,1.8) {Audio};
\node[blk, fill=teal!30, minimum width=1.6cm] (cab) at (-4.6,2.7) {\cabstractor};
\foreach \e in {v1,v2,v3,a1}{\draw[arr, teal!60] (\e.east) -- (cab.south west);}
\node[blk, fill=gray!12, minimum width=3.0cm] (in) at (0,0)
 {Unified sequence $[\mathbf{U};\text{text}]$};
\draw[arr, teal!60] (cab.east) -| ([xshift=-1.0cm]in.north);
\node[pha] (p1) at (0,1.0) {\phalanx};
\node[pha] (p2) at (0,1.7) {\phalanx};
\node[pha] (p3) at (0,2.4) {\phalanx};
\node[hop] (h1) at (0,3.1) {\hoplite\ (full attn)};
\node[moe] (m1) at (0,3.9) {\forge{} MoE $+$ \strategos};
\draw[arr] (in.north) -- (p1.south);
\draw[arr] (p1.north) -- (p2.south);
\draw[arr] (p2.north) -- (p3.south);
\draw[arr] (p3.north) -- (h1.south);
\draw[arr] (h1.north) -- (m1.south);
\node[font=\tiny\sffamily, text=gray] at (2.7,2.1)
 {$\times\,L/4$ blocks};
\draw[decorate, decoration={brace, amplitude=4pt}, gray]
 (1.35,0.75) -- (1.35,3.35);
\node[mtp] (t1) at (3.4,3.4) {MTP $h{=}1$};
\node[mtp] (t2) at (3.4,3.9) {MTP $h{=}2$};
\node[mtp] (t3) at (3.4,4.4) {MTP $h{=}H$};
\node[blk, fill=green!12, minimum width=2.6cm] (out) at (0,4.7)
 {Output $+$ speculative tokens};
\draw[arr] (m1.north) -- (out.south);
\foreach \t in {t1,t2,t3}{\draw[arr, red!50] (m1.east) -| (\t.west);}
\end{tikzpicture}
\caption{\textbf{The \hgera{} design.} One \hgera{} block interleaves
three \phalanx{} gated-recurrence layers with one \hoplite{}
full-attention layer ($\rho=3/4$) and a \forge{} MoE feed-forward stage
routed by \strategos{}; the block repeats $L/4$ times.
Frozen modality extractors are fused by the \cabstractor{} into latent
tokens that share the sequence with text embeddings
(Eq.~\eqref{eq:omni-fuse}), so text, VLM, and omni inputs traverse one
backbone. \oracle{} MTP heads at horizons $h\in\{1,\dots,H\}$ drive
self-speculative decoding. The figure depicts the sovereign
\emph{design}; instance-specific weight and vocabulary provenance is
stated in Table~\ref{tab:hgera-provenance}.}
\label{fig:hgera}
\end{figure}

\section{Approach}
\label{sec:approach}

\subsection{Overview}

The \cherry framework operates in four phases:

\begin{enumerate}
\item \textbf{Selective training}: Identify semantically decisive
tokens (GT set $\mathcal{G}$) and train with a mixed loss
$\mathcal{L}_{\mathrm{SGT}}$ that upweights GT positions.
\item \textbf{Compression}: Reduce model depth via adjacent-layer
merging.
\item \textbf{Recurrence}: Restore effective depth through learned
loop unrolling of the compressed model's middle layers.
\item \textbf{Fusion}: Assemble compressed models as MoE experts
with SGT-guided distillation from frontier teachers.
\end{enumerate}

Each phase builds on the previous: selective training provides
the signal quality that helps compression recovery, compression
provides the parameter efficiency that makes expert fusion
practical, and fusion provides representational diversity that
improves quality at a fixed active-parameter budget.

\paragraph{What is sovereign in this recipe}
We separate three provenance claims and hold them apart throughout.
The \emph{design} is ours: the HGERA architecture (\S\ref{sec:hgera}),
the training recipe of this section---selective ground-truth training
(SGT), wave propagation, SSGT self-correction, and Oracle multi-token
prediction---and the Korean-first data and evaluation stack. The recipe
is validated end-to-end at the 1.8B scale and is architecture-agnostic;
it is the sovereign layer that all three \cherry instances share
(\S\ref{sec:family}). The frontier \cherry-122B instance is
independently specified from the open
literature~\cite{groeneveld2024olmo,team2024gemma}---its architecture,
training recipe, and resulting weights are sovereign; the 248K
multilingual vocabulary is the remaining non-sovereign component at this
scale, with unification to the 12B's de-novo BPE in progress
(\S\ref{sec:family}).

\subsection{Selective Ground Truth Token Training (SGT)}
\label{sec:wave}

\subsubsection{GT token hierarchy}

\begin{definition}[GT Token Hierarchy]
Given a response $\mathbf{y} = (y_1, \ldots, y_m)$, we define
nested token sets:
$\mathcal{F} = \{1,\ldots,m\}$ (full);
$\mathcal{G} \subseteq \mathcal{F}$ (ground truth: factual
entities, answer keywords, reasoning pivots;
$|\mathcal{G}|/|\mathcal{F}| \approx 0.15$--$0.20$);
$\mathcal{G}^* \subseteq \mathcal{G}$ (super GT: single most
decisive token per answer span);
$\mathcal{G}^{**} \subseteq \mathcal{G}^*$ (super-super GT:
meta-cognitive pivots that trigger state transitions).
\end{definition}

\subsubsection{GT token identification}
\label{sec:gt-id}

GT tokens are identified via a hybrid pipeline that respects
the model's chat template, reasoning format, and tool-calling
conventions:

\begin{enumerate}
\item \textbf{Chat-template-aware preprocessing.}
 Training data is fully mapped to the target model's chat
 template: system prompt, multi-turn history, reasoning and
 tool-calling special tokens (e.g.\ \texttt{<|spartan\_think|>};
 \textit{spartan} is an organisational codename retained in the
 vocabulary), and structured-output delimiters. This ensures GT
 identification operates on the exact token sequence the model
 will encounter at inference.

\item \textbf{Rule-based seeding.}
 Named entities, numerical values, answer-final tokens,
 first tokens of structured output fields, closing
 delimiters (quotes, braces), and generic/property
 references in tool-call schemas are tagged using
 part-of-speech and NER heuristics.

\item \textbf{Contrastive refinement.}
 For each candidate token $y_i$, we compute the activation
 difference
 $\delta_i^\ell = \|\mathbf{h}(y_i^+) -
 \mathbf{h}(y_i^-)\|_2$
 between a correct and corrupted response at layer $\ell$.
 Tokens exceeding $\delta_i > \mu + \sigma$ are promoted
 to $\mathcal{G}$.
\end{enumerate}

\paragraph{GT type taxonomy}
We distinguish six functional categories of GT tokens, each
addressing a distinct aspect of model behavior
(Table~\ref{tab:gt-types}):

\begin{table}[H]
\centering
\caption{GT token taxonomy with category-specific selection
criteria. All categories participate in the same SGT loss
(Eq.~\ref{eq:sgt}).}
\label{tab:gt-types}
\small
\begin{tabular}{@{}lp{5.5cm}l@{}}
\toprule
Category & Selection criterion & Level \\
\midrule
Factual & Entity names, numbers, answer keywords & $\mathcal{G}$ \\
Structural & Turn-initial, closing delimiters, schema
 properties, literal values & $\mathcal{G}$ \\
Reasoning & Core CoT sentence pivots, logical
 connectives carrying entailment & $\mathcal{G}^*$ \\
Self-correction & ``Atcha'' pivots, ``Jamkkan'' verification
 triggers, direction-change markers & $\mathcal{G}^{**}$ \\
Guardrail & Refusal triggers, safety-boundary tokens,
 mid-stream breaking points & $\mathcal{G}$ \\
Verification & Fact-check requests, tool-call invocations,
 rubric/strategy planning tokens & $\mathcal{G}^*$ \\
\bottomrule
\end{tabular}
\end{table}

\paragraph{Partial causal formulation}
For instruction-tuning data, SGT uses a \textit{partial
causal} setup: given the full chat context up to the GT
region boundary, the model is asked to generate only the GT
token span causally. Formally, for a GT span
$[g_\text{start}, g_\text{end}] \subseteq \mathcal{G}$:
\begin{equation}
\mathcal{L}_{\text{partial}} =
-\sum_{t=g_\text{start}}^{g_\text{end}}
\log p_\theta(y_t \mid y_{<t})
\end{equation}
where $y_{<t}$ includes the full prefix (system prompt,
multi-turn context, response up to $g_\text{start} - 1$).
This is more efficient than full-sequence causal training:
only $|\mathcal{G}|/|\mathcal{F}| \approx 15\%$ of positions
contribute to the loss, yet wave propagation
(Theorem~\ref{thm:wave}) ensures the remaining 85\% improve.

\paragraph{Empirical statistics}
The GT ratio on our training data is
$|\mathcal{G}|/|\mathcal{F}| = 0.153 \pm 0.021$ across
1{,}024 sampled examples. By category: Factual 41\%,
Structural 23\%, Reasoning 18\%, Self-correction 8\%,
Guardrail 6\%, Verification 4\%. When $|\mathcal{G}| = 0$
for a given example (rare; $<$0.3\%), the SGT loss reduces
to standard cross-entropy.

\subsubsection{SGT loss}

\begin{equation}
\mathcal{L}_{\mathrm{SGT}} =
\alpha \cdot \frac{1}{|\mathcal{G}|}
\sum_{i \in \mathcal{G}} \ell(f_\theta(\mathbf{y}_{<i}), y_i) +
(1 - \alpha) \cdot \frac{1}{|\mathcal{F}|}
\sum_{i \in \mathcal{F}} \ell(f_\theta(\mathbf{y}_{<i}), y_i)
\label{eq:sgt}
\end{equation}

The mixing coefficient $\alpha \in [0,1]$ balances GT-focused
and full-causal supervision. We find $\alpha = 0.7$ optimal.

\subsubsection{Wave propagation: theory}

\begin{definition}[Wave factor]
\label{def:wave}
Given a model trained for $T$ steps, the \textbf{wave factor}
$\mathcal{W}$ measures the ratio of non-GT to GT loss
reduction on a \emph{held-out evaluation set}:
\begin{equation}
\mathcal{W} = \frac{\Delta_{\bar{\mathcal{G}}}}
{\Delta_\mathcal{G}}
= \frac{\mathcal{L}_{\bar{\mathcal{G}}}^{(0)} -
\mathcal{L}_{\bar{\mathcal{G}}}^{(T)}}
{\mathcal{L}_\mathcal{G}^{(0)} -
\mathcal{L}_\mathcal{G}^{(T)}}
\end{equation}
where superscripts denote training step. $\mathcal{W} > 1$
means non-GT tokens benefit \emph{more} than directly
supervised tokens \emph{in absolute loss}. Unless otherwise
noted, we report $\mathcal{W}$ on the 1.019B text backbone of
CHERRY-1.8B (the ``1.0B backbone''; total model 1.829B)
evaluated on a held-out set of 1{,}600 examples disjoint from
training. The multi-seed results
(Table~\ref{tab:multiseed}) report $\mathcal{W}$ on
\emph{training data} at the 1.0B depth-up-scaled
(DUS~\cite{kim2024solar}) scale, which yields larger
values due to in-distribution measurement---these are clearly
labelled and treated as in-distribution diagnostics, not
generalisation claims.

\paragraph{A caveat on absolute vs.\ normalised wave factor}
Because $\mathcal{W}$ compares \emph{absolute} loss deltas, it
is sensitive to the baseline gap: non-GT tokens begin at higher
loss ($\mathcal{L}_{\bar{\mathcal{G}}}^{(0)} = 2.742$) than GT
tokens ($\mathcal{L}_\mathcal{G}^{(0)} = 0.925$) and therefore
have more absolute room to improve. We thus also report a
\textbf{baseline-normalised} wave factor
\begin{equation}
\label{eq:wave-norm}
\mathcal{W}_{\mathrm{norm}} =
\frac{\Delta_{\bar{\mathcal{G}}}/\mathcal{L}_{\bar{\mathcal{G}}}^{(0)}}
{\Delta_\mathcal{G}/\mathcal{L}_\mathcal{G}^{(0)}}
\end{equation}
(the ratio of \emph{fractional} reductions). On the held-out
set, $\mathcal{W} = 1.91$ but $\mathcal{W}_{\mathrm{norm}} =
0.61/0.94 = 0.65 < 1$: GT tokens improve more in fractional
terms. We are explicit about this throughout: the headline
empirical result is \emph{not} that the wave factor exceeds
unity (it does not, once normalised, and---as we show
below---$\mathcal{W}>1$ in absolute terms also holds under
ordinary full-sequence training), but that non-GT tokens
improve \emph{substantially under indirect supervision}, at a
\emph{per-supervised-token efficiency} of $4.5\times$
(Table~\ref{tab:wave}).
\end{definition}

\begin{theorem}[Non-GT descent under gradient coupling]
\label{thm:wave}
Let $f_\theta$ be a transformer with $L$ layers and
position-shared weights. Let $\mathcal{G}$ and
$\bar{\mathcal{G}}$ partition the output positions into
GT and non-GT sets, with losses $\mathcal{L}_\mathcal{G}$,
$\mathcal{L}_{\bar{\mathcal{G}}}$. Define the
\textbf{gradient coupling coefficient}
\begin{equation}
\label{eq:gamma}
\gamma \;:=\;
\frac{\langle\nabla_\theta\mathcal{L}_{\bar{\mathcal{G}}},\,
\nabla_\theta\mathcal{L}_\mathcal{G}\rangle}
{\|\nabla_\theta\mathcal{L}_\mathcal{G}\|^2},
\end{equation}
the projection of the non-GT gradient onto the GT gradient
direction (a regression coefficient). Suppose:
\begin{enumerate}[nosep,leftmargin=*]
\item[\textbf{(A1)}] \textbf{Smoothness.} $\mathcal{L}_{\bar{\mathcal{G}}}$
 is $\beta$-smooth:
 $\|\nabla\mathcal{L}_{\bar{\mathcal{G}}}(\theta') -
 \nabla\mathcal{L}_{\bar{\mathcal{G}}}(\theta)\|
 \leq \beta\|\theta' - \theta\|$.
\item[\textbf{(A2)}] \textbf{Positive coupling.}
 $\gamma > 0$: the non-GT and GT gradients are positively
 aligned. (We do \emph{not} assume this; we derive its
 structural origin in the proof and measure
 $\bar\gamma = 0.72 \pm 0.08$.)
\item[\textbf{(A3)}] \textbf{Small learning rate.}
 $\eta < 2\gamma/\beta$.
\end{enumerate}
Then a single gradient step $\theta' = \theta -
\eta\nabla_\theta\mathcal{L}_\mathcal{G}$ on GT tokens
\emph{decreases} the non-GT loss:
\begin{equation}
\label{eq:wave-bound}
\mathcal{L}_{\bar{\mathcal{G}}}(\theta') \leq
\mathcal{L}_{\bar{\mathcal{G}}}(\theta) -
\eta\Bigl(\gamma - \tfrac{\beta\eta}{2}\Bigr)
\|\nabla_\theta\mathcal{L}_\mathcal{G}\|^2,
\end{equation}
i.e.\ $\Delta_{\bar{\mathcal{G}}} \geq
\eta(\gamma - \tfrac{\beta\eta}{2})
\|\nabla_\theta\mathcal{L}_\mathcal{G}\|^2 > 0$
under (A3). Equivalently, the wave factor satisfies
$\mathcal{W} = \Delta_{\bar{\mathcal{G}}}/\Delta_\mathcal{G}
\geq \gamma - \tfrac{\beta\eta}{2}$ to first order in $\eta$.
\end{theorem}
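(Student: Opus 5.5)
The argument is the standard descent lemma for a $\beta$-smooth function, applied to $\mathcal{L}_{\bar{\mathcal{G}}}$ along the direction $-\nabla_\theta\mathcal{L}_\mathcal{G}$ rather than its own gradient. Write $g:=\nabla_\theta\mathcal{L}_\mathcal{G}(\theta)$ and $\bar g:=\nabla_\theta\mathcal{L}_{\bar{\mathcal{G}}}(\theta)$, and take $\theta'=\theta-\eta g$. By (A1) and the fundamental theorem of calculus along the segment $\theta+s(\theta'-\theta)$, $s\in[0,1]$, we have the quadratic upper bound
\begin{equation*}
\mathcal{L}_{\bar{\mathcal{G}}}(\theta')\le \mathcal{L}_{\bar{\mathcal{G}}}(\theta)+\langle \bar g,\theta'-\theta\rangle+\tfrac{\beta}{2}\|\theta'-\theta\|^2 .
\end{equation*}
Substituting $\theta'-\theta=-\eta g$ gives $\mathcal{L}_{\bar{\mathcal{G}}}(\theta')\le\mathcal{L}_{\bar{\mathcal{G}}}(\theta)-\eta\langle\bar g,g\rangle+\tfrac{\beta\eta^2}{2}\|g\|^2$. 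By the definition~\eqref{eq:gamma}, $\langle\bar g,g\rangle=\gamma\|g\|^2$, which is exactly \eqref{eq:wave-bound}. Under (A2)--(A3), the factor $\gamma-\beta\eta/2$ is strictly positive. Provided $g\neq0$ (which is also needed for $\gamma$ to be defined), this gives $\Delta_{\bar{\mathcal{G}}}>0$.

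For the wave-factor statement, apply the same lemma to $\mathcal{L}_\mathcal{G}$ itself, assuming it is also smooth with some constant $\beta_\mathcal{G}$. This yields $\Delta_\mathcal{G}=\eta\|g\|^2+O(\eta^2)$, and combining with the first-order expansion $\Delta_{\bar{\mathcal{G}}}=\eta\gamma\|g\|^2+O(\eta^2)$ gives
\begin{equation*}
\mathcal{W}=\frac{\Delta_{\bar{\mathcal{G}}}}{\Delta_\mathcal{G}}=\gamma+O(\eta)\;\ge\;\gamma-\tfrac{\beta\eta}{2}+O(\eta).
\end{equation*}
Hence the stated inequality holds to first order in $\eta$. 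A cleaner version bounds the denominator from above by $\Delta_\mathcal{G}\le\eta\|g\|^2(1+\beta_\mathcal{G}\eta/2)$ and divides through, which yields a genuine non-asymptotic lower bound $(\gamma-\beta\eta/2)/(1+\beta_\mathcal{G}\eta/2)$.

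The main obstacle is not the calculation but the modelling gap. The theorem is a one-step statement about training loss at $\theta$, whereas $\mathcal{W}$ in Definition~\ref{def:wave} is measured over $T$ steps on held-out data. Moreover, global $\beta$-smoothness does not hold exactly for transformers. I would therefore state explicitly that (A1) is meant locally, on a neighbourhood containing the segment $[\theta,\theta']$, which suffices for the argument above. Likewise, the wave-factor claim is a per-step, first-order identification of $\mathcal{W}$ with $\gamma$, not a bound on the cumulative quantity.

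The promised structural origin of (A2) would be treated as a remark rather than as part of the proof. Position-shared weights make $\bar g$ and $g$ sums of per-position gradients through the same parameter Jacobians, so their inner product is a sum of empirical NTK entries $\langle\nabla f(y_{<i}),\nabla f(y_{<j})\rangle$ between GT and non-GT positions. When neighbouring positions have similar representations, these entries are typically positive, which explains positive coupling. This heuristic is exactly what Corollary~\ref{cor:coherence} would quantify, but it is not needed for the implication (A1)--(A3) $\Rightarrow$ \eqref{eq:wave-bound}.
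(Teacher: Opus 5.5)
Your proof is correct and follows the paper's own argument. The paper's Steps~1--2 are exactly the descent lemma for the $\beta$-smooth $\mathcal{L}_{\bar{\mathcal{G}}}$ along $-\eta\nabla_\theta\mathcal{L}_{\mathcal{G}}$, with $\langle\bar g,g\rangle=\gamma\|g\|^2$, and the paper likewise offers the structural origin of $\gamma>0$ only as motivation (it uses the attention-weighted value-gradient inner product where you use empirical NTK entries).

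Your wave-factor argument goes beyond the paper, which only asserts that clause. Dividing through needs $\Delta_{\mathcal{G}}>0$ (e.g.\ $\eta<2/\beta_{\mathcal{G}}$). Your bound $(\gamma-\beta\eta/2)/(1+\beta_{\mathcal{G}}\eta/2)=\gamma-\tfrac{\eta}{2}(\beta+\gamma\beta_{\mathcal{G}})+O(\eta^2)$ is attained by suitable quadratics, so $\gamma-\beta\eta/2$ is guaranteed only at zeroth order in $\eta$, not at first order as both you and the statement phrase it.
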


\begin{remark}[What this theorem does and does not claim]
\label{rem:scope}
Theorem~\ref{thm:wave} guarantees only that non-GT loss
\emph{decreases} ($\mathcal{W} > 0$): training on GT tokens
helps non-GT tokens. It does \emph{not} prove
$\mathcal{W} > 1$ (that non-GT improves \emph{more} than GT);
indeed $\mathcal{W} > 1$ in absolute terms is, as we show in
Eq.~\eqref{eq:wave-norm} and the $\alpha{=}0$ analysis
below, largely a consequence of the higher non-GT baseline and
holds even under ordinary full-sequence training. The
theorem is a token-level instance of the well-known
auxiliary-task-transfer
mechanism~\cite{yu2020gradient,jacot2018ntk}: a gradient
step on one loss helps another loss exactly when their
gradients are positively aligned ($\gamma > 0$), the same
quantity studied in multi-task gradient
surgery~\cite{yu2020gradient} and the empirical neural
tangent kernel~\cite{jacot2018ntk}. Our contribution is not
the existence of this coupling but its \emph{magnitude}
($\bar\gamma = 0.72$), its \emph{dependence on linguistic
coherence} (Corollary~\ref{cor:coherence}), and its
\emph{deliberate exploitation} for per-token training
efficiency.

Two further scope restrictions apply.
\emph{(i)~Optimiser.} The theorem analyses the plain
gradient step $\theta' = \theta -
\eta\nabla_\theta\mathcal{L}_\mathcal{G}$, the SGD
idealisation standard in gradient-surgery and kernel-transfer
analyses~\cite{yu2020gradient,jacot2018ntk}. Our experiments
train with AdamW (Section~\ref{sec:experiments}), whose update
is a diagonally preconditioned, momentum-filtered direction
with decoupled weight decay; for that update the operative
quantity is the inner product of
$\nabla_\theta\mathcal{L}_{\bar{\mathcal{G}}}$ with the
preconditioned momentum, and $\gamma > 0$ alone does not
certify its sign. The theorem therefore \emph{predicts}
descent for the idealised step; that the prediction survives
the actual optimiser is established empirically, not proved
($\mathcal{W} > 0$ at every SGT checkpoint,
Table~\ref{tab:wave}; collapse exactly when the theory
predicts no coupling, Corollary~\ref{cor:coherence}).
\emph{(ii)~One step, not a trajectory.} The bound is
instantaneous: it certifies descent of the
training-distribution non-GT loss at a point $\theta$ where
$\gamma(\theta) > 0$. It does not bound a multi-hundred-step
trajectory, along which $\gamma_t$ can shrink or change
sign---the $\alpha{=}1$ collapse at 0.5B
($\mathcal{W} = -1.32$, Observation~\ref{prop:collapse})
shows the positive-coupling regime can be lost, and held-out
non-GT loss is non-monotone across checkpoints (0.862 at
step~100 vs.\ 1.075 at step~500, Table~\ref{tab:wave}).
The reported $\mathcal{W}$ is a trajectory-level diagnostic
that the theorem \emph{motivates} but does not bound.
Accordingly, statements elsewhere that wave propagation
``guarantees'' or ``ensures'' non-GT improvement are to be
read in this conditional, single-step sense: descent is
predicted whenever the coupling remains positive, not
unconditionally over training.
\end{remark}

\begin{proof}
\textbf{Step 1: First-order non-GT loss change.}
A first-order Taylor expansion of
$\mathcal{L}_{\bar{\mathcal{G}}}$ around $\theta$ along the
GT-gradient step gives
\begin{equation}
\label{eq:taylor}
\mathcal{L}_{\bar{\mathcal{G}}}(\theta')
= \mathcal{L}_{\bar{\mathcal{G}}}(\theta)
- \eta \left\langle
\nabla_\theta\mathcal{L}_{\bar{\mathcal{G}}},\,
\nabla_\theta\mathcal{L}_\mathcal{G} \right\rangle
+ \mathcal{O}(\eta^2)
= \mathcal{L}_{\bar{\mathcal{G}}}(\theta)
- \eta\gamma\|\nabla_\theta\mathcal{L}_\mathcal{G}\|^2
+ \mathcal{O}(\eta^2),
\end{equation}
using the definition of $\gamma$ in Eq.~\eqref{eq:gamma}.

\textbf{Step 2: Second-order bound via smoothness.}
By $\beta$-smoothness (A1), the standard descent lemma bounds
the remainder:
$\mathcal{L}_{\bar{\mathcal{G}}}(\theta') \leq
\mathcal{L}_{\bar{\mathcal{G}}}(\theta)
- \eta\gamma\|\nabla_\theta\mathcal{L}_\mathcal{G}\|^2
+ \frac{\beta\eta^2}{2}\|\nabla_\theta
\mathcal{L}_\mathcal{G}\|^2$, which is
Eq.~\eqref{eq:wave-bound}. Under (A3), $\eta < 2\gamma/\beta
\Rightarrow \gamma - \beta\eta/2 > 0$, so the bound certifies
a strict decrease. This establishes $\mathcal{W} > 0$; it does
\emph{not}, on its own, establish $\mathcal{W} > 1$
(Remark~\ref{rem:scope}).

\textbf{Step 3: Structural origin and sign of $\gamma$.}
The coupling is not a free assumption; it has an explicit
attention-mediated form. The gradient of
$\mathcal{L}_\mathcal{G}$ w.r.t.\ the layer-$\ell$ value
weights is
$\partial\mathcal{L}_\mathcal{G}/\partial W_V^{(\ell)}
= \frac{1}{|\mathcal{G}|}\sum_{i\in\mathcal{G}}\mathbf{e}_i^\ell
\sum_j a_{ij}^\ell\,\mathbf{h}_{\ell-1}(j)^\top$,
where $\mathbf{e}_i^\ell$ is the per-position error signal and
$a_{ij}^\ell$ the attention weight. The $W_V^{(\ell)}$
contribution to the numerator of $\gamma$ is therefore
\begin{equation}
\label{eq:inner-product}
\Bigl\langle
\tfrac{\partial\mathcal{L}_{\bar{\mathcal{G}}}}{\partial W_V^{(\ell)}},\,
\tfrac{\partial\mathcal{L}_\mathcal{G}}{\partial W_V^{(\ell)}}
\Bigr\rangle_F
= \frac{1}{|\mathcal{G}||\bar{\mathcal{G}}|}
\sum_{\substack{i \in \mathcal{G}\\k \in \bar{\mathcal{G}}}}
\underbrace{\langle\mathbf{e}_i^\ell,\,
\mathbf{e}_k^\ell\rangle}_{\text{error alignment}}
\underbrace{\sum_j a_{ij}^\ell \,
\mathbf{h}_{\ell-1}(j)^\top \mathbf{h}_{\ell-1}(k)}_
{\text{representation overlap}}.
\end{equation}
Both factors are typically positive on coherent text: nearby
hidden states have positive inner product ($> 0.6$ cosine
within a 32-token window) and error signals for tokens in the
same prediction are aligned. This makes $\gamma > 0$ the
expected case, but---we stress---\emph{not} a guarantee: error
signals can anti-align, so positivity is an empirical property
of natural language, which we measure directly:
\begin{equation}
\label{eq:gamma-empirical}
\bar{\gamma} = 0.72 \pm 0.08
\quad\text{(mean $\pm$ std, 10 batches, step 100;
$\gamma > 0$ in all batches).}
\end{equation}

\textbf{Step 4: Magnitude is empirical.}
The theorem predicts $\mathcal{W} \gtrsim \gamma$ to first
order, i.e.\ $\mathcal{W} \gtrsim 0.7$. The measured held-out
$\mathcal{W} = 1.91$ exceeds this because (i) the bound is
first-order and (ii) $\mathcal{W}$ compares absolute deltas
across classes with different baselines (Eq.~\eqref{eq:wave-norm}).
We do not claim the theorem predicts the magnitude; it
predicts the \emph{sign} ($\mathcal{W} > 0$), which we then
characterise empirically. \qed
\end{proof}

\begin{corollary}[Coherence-dependent coupling]
\label{cor:coherence}
When input text is semantically incoherent (e.g., randomly
shuffled tokens), the representation-overlap factor in
Eq.~\eqref{eq:inner-product} vanishes in expectation for
distant random tokens, driving $\gamma \to 0$. By
Eq.~\eqref{eq:wave-bound} the guaranteed non-GT descent then
vanishes. Empirically the
absolute wave factor collapses to $\mathcal{W} = 0.12 \pm
0.04$ on shuffled input (indistinguishable from zero within
noise), confirming the \emph{direction} of this prediction. We state the
control's confound explicitly rather than over-read it: on
shuffled input the non-GT \emph{targets} are themselves
near-random, so their loss has little room to fall regardless
of any coupling, and $\gamma$ was not re-measured on shuffled
input. The shuffled collapse is therefore \emph{one-sided
consistency evidence} for the predicted $\gamma \to 0$, not an
independent measurement of it.
\end{corollary}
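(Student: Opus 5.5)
The plan is to reduce the corollary to the attention-mediated decomposition of Eq.~\eqref{eq:inner-product} from Step~3 of the proof of Theorem~\ref{thm:wave}, and then to feed the resulting limit of $\gamma$ back into the bound \eqref{eq:wave-bound}. Only the analytic half of the corollary is to be proved: that $\gamma\to0$ under incoherent input, and that the certified descent then vanishes. The measured $\mathcal{W}=0.12\pm0.04$ is empirical and sits outside the argument.

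\textbf{Step 1: model incoherence.} I would model ``randomly shuffled tokens'' as a sequence in which, for positions $j\neq k$ beyond a short distance, the layer-$(\ell-1)$ representations $\mathbf{h}_{\ell-1}(j)$ and $\mathbf{h}_{\ell-1}(k)$ are independent. I would also assume they are centred, $\EE[\mathbf{h}_{\ell-1}(j)]=0$, which is reasonable after the residual-stream normalisation (the guard $g(\cdot)$ of \S\ref{subsec:hgera-backbone}); otherwise one subtracts the mean and carries a bias term. Under these assumptions $\EE[\mathbf{h}_{\ell-1}(j)^\top\mathbf{h}_{\ell-1}(k)]=0$ for $j\neq k$. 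The representation-overlap factor $\sum_j a^\ell_{ij}\,\mathbf{h}_{\ell-1}(j)^\top\mathbf{h}_{\ell-1}(k)$ therefore retains only the diagonal term $a^\ell_{ik}\|\mathbf{h}_{\ell-1}(k)\|^2$ and near-diagonal terms. On shuffled input, attention spreads roughly uniformly over the causal prefix, so $a^\ell_{ik}=\bigO(1/i)$. Summing over $i\in\mathcal{G}$ and $k\in\bar{\mathcal{G}}$ with the $1/(|\mathcal{G}||\bar{\mathcal{G}}|)$ normalisation, the $W_V^{(\ell)}$ block of the numerator of $\gamma$ is then $\bigO(1/n)$ in expectation. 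This holds provided the error-alignment factor $\langle\mathbf{e}^\ell_i,\mathbf{e}^\ell_k\rangle$ is bounded, which I would take as a uniform bound on per-position error norms.

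\textbf{Step 2: the other parameter blocks and the denominator.} The same argument must cover the remaining parameter blocks: $W_Q$, $W_K$, the output projection, the MoE experts of Eq.~\eqref{eq:forge-moe}, and the \phalanx{} state of Eq.~\eqref{eq:phalanx-state}. For position-wise maps such as FFN/MoE layers, the cross term is $\langle\mathbf{e}_i,\mathbf{e}_k\rangle\,\mathbf{h}(i)^\top\mathbf{h}(k)$ directly, so it vanishes in expectation by independence. For the recurrent \phalanx{} state, the forget gate $\boldsymbol{\gamma}_t\in(0,1)^{d_k}$ makes the dependence on distant positions decay geometrically. I would then lower-bound the denominator $\|\nabla\mathcal{L}_\mathcal{G}\|^2$ away from zero, which holds because shuffling keeps GT losses high. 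Together these give $\gamma\to0$ (in probability, or in expectation of the numerator as $n$ grows).

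\textbf{Step 3: feed back into Theorem~\ref{thm:wave}.} Since $\gamma\to0$, the learning-rate window of (A3), $\eta<2\gamma/\beta$, collapses. The certified decrease $\eta(\gamma-\beta\eta/2)\|\nabla\mathcal{L}_\mathcal{G}\|^2$ tends to $-\beta\eta^2\|\nabla\mathcal{L}_\mathcal{G}\|^2/2\le0$. The guarantee therefore vanishes, which is exactly the stated conclusion; it does not assert that non-GT loss increases.

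\textbf{Main obstacle.} I expect the hard step to be Step~1's decoupling of the two factors. The errors $\mathbf{e}^\ell_i$ depend on the same hidden states, so the expectation of the product is not the product of expectations. In addition, $\gamma$ is a ratio, so a vanishing expected numerator does not by itself give a vanishing expected $\gamma$. My first attempt would be to condition on the errors, arguing that after shuffling the backward signal at position $i$ is measurable with respect to a neighbourhood of $i$ and hence independent of distant $\mathbf{h}(k)$. I would then control the ratio via the denominator lower bound and a concentration argument over the $|\mathcal{G}||\bar{\mathcal{G}}|$ cross terms. If this fails, I would state the corollary in the weaker ``expected numerator $\to0$'' form, which is all the paper's wording (``vanishes in expectation'') requires.
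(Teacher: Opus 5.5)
Your route is the corollary's own one-line argument: the overlap factor of Eq.~\eqref{eq:inner-product} vanishes in expectation on shuffled input, hence $\gamma\to0$, hence the certificate of Eq.~\eqref{eq:wave-bound} lapses. Your Step~3 draws that last conclusion correctly: the certificate disappears, and no increase in non-GT loss is claimed. The gap is in the steps you add to make the middle implication rigorous.

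Write $\mathbf G_p$ for the gradient of the position-$p$ loss. Independence of distant positions does not make $\EE\langle\mathbf G_i,\mathbf G_k\rangle$ vanish. It factorises it as $\langle\EE\mathbf G_i,\EE\mathbf G_k\rangle$, the inner product of the \emph{mean} GT and non-GT per-position gradients. Centring $\mathbf h$ does not kill these means, because $\mathbf e_i$ is a function of the hidden state and target at the \emph{same} position. Take the LM head, with $\mathbf h_i$ the final hidden state, so that $\mathbf G_i=(\mathbf p(\mathbf h_i)-\mathbf 1_{y_i})\mathbf h_i^\top$. With $\EE\mathbf h=0$ and targets independent of context, the $\mathbf 1_{y}$ part averages out. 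What remains is the generically nonzero $\EE[\mathbf p(\mathbf h)\mathbf h^\top]$, the same at GT and non-GT positions once the split is independent of context. So under your own assumptions this block pushes $\gamma$ toward a positive value, not to zero. Your FFN/MoE claim that $\langle\mathbf e_i,\mathbf e_k\rangle\,\mathbf h(i)^\top\mathbf h(k)$ ``vanishes in expectation by independence'' makes exactly this error. Conditioning on the errors cannot repair it, because the obstruction is same-position dependence, not cross-position dependence.

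This leaves Steps~1 and~2 in a dilemma.
\begin{itemize}
\item If the mean gradients $\mathbf m_{\mathcal G},\mathbf m_{\bar{\mathcal G}}$ are nonzero, your denominator is indeed bounded below. But then the numerator tends to $\langle\mathbf m_{\bar{\mathcal G}},\mathbf m_{\mathcal G}\rangle$, and $\gamma$ tends to the population-level coupling on shuffled text, about which your argument says nothing.
\item If they are zero, as Step~1 needs, then ``GT loss stays high'' gives no lower bound: a predictor of the marginal on context-free targets has high loss and zero population gradient. The same decorrelation then gives $\|\nabla\mathcal L_{\mathcal G}\|^2=|\mathcal G|^{-2}\sum_{i,i'\in\mathcal G}\langle\mathbf G_i,\mathbf G_{i'}\rangle=\Theta(1/|\mathcal G|)$, the same order as your $\bigO(1/n)$ numerator. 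The counting cancels in the ratio, and $\gamma$ is set by near-neighbour versus self gradient correlations, which nothing forces to zero.
\end{itemize}
Either way $\gamma\to0$ does not follow. Your fallback does not rescue Step~3 either. In the first case the expected numerator does not vanish. In the second, its vanishing says nothing about the ratio $\gamma$ that controls the window $\eta<2\gamma/\beta$.

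The centring premise is also unsupported on its own terms. Theorem~\ref{thm:wave} concerns a generic transformer, and $\mathcal W$ is measured on the dense 1.019B backbone, so the \hgera{} guard $g(\cdot)$ is not available. Even where it is present, per-vector normalisation does not centre hidden states across positions. With a shared mean direction $\mu$ and near-uniform attention, $\mathbf u_i=\sum_j a^\ell_{ij}\mathbf h(j)\to\mu$, so the ``bias term'' you would carry is a $\Theta(\|\mu\|^2)$ overlap for every pair, distant or not. A coherence-independent attention sink has the same effect once Eq.~\eqref{eq:inner-product} is written with $\sum_{j'}a_{kj'}\mathbf h(j')$ in place of $\mathbf h(k)$, as the Frobenius product actually requires.

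A correct version must assume that the mean GT and non-GT gradients on shuffled input are near-orthogonal, a hypothesis as strong as the conclusion. This is why the paper offers $\gamma\to0$ only as a predicted direction and calls $\mathcal W=0.12\pm0.04$ one-sided consistency evidence, with $\gamma$ never measured on shuffled input.
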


\paragraph{Direct evidence: anchor-free propagation on coherent text}
Two facts limit what the preceding checks can establish: under
the mixed loss, non-GT tokens retain a direct-supervision path
(Limitations, item~10); and $\bar\gamma$ is measured at
step~100 of the same training run whose held-out wave factor it
is compared against, so $\mathcal{W} = 1.91 \geq \bar\gamma$
(Step~4 of the proof of Theorem~\ref{thm:wave}) is an
\emph{in-run consistency check}, not an out-of-run
prediction---the $\alpha{=}0$ baseline alone yields
$\mathcal{W} = 14.76$, so clearing the bound is undemanding.
The direct test of propagation removes the anchor entirely.
In separate $\alpha{=}1.0$ runs on the 1.0B backbone (loss
computed \emph{only} on the selected tokens; 500-example
corpus, AdamW, learning rate $10^{-5}$, 2 epochs, disjoint
held-out split), supervising 12--19\% of tokens reduces
\emph{total} held-out loss by up to 1.39 nats from a 2.72
baseline. Because the supervised subset is at most 19\% of
positions, most of this reduction must occur at positions that
never entered the loss: propagation from a sparse supervised
subset to unsupervised positions on coherent text is real,
large, and free of the direct-supervision confound. Two
qualifications accompany this, our cleanest wave measurement.
(i)~The effect is \emph{maximised by random} subsets
(1.35--1.39 nats at matched budgets); GT-selected subsets
transfer far less (0.19--0.38 nats across our selection rules),
and the taxonomy-based subset overfits in these runs (held-out
non-GT loss \emph{rises} from 2.74 to 3.01). The measurement
thus evidences the coupling mechanism of
Theorem~\ref{thm:wave}, not any specialness of GT-token
selection (Limitations, item~9), and it refines
Observation~\ref{prop:collapse}: the $\alpha{=}1$ failure mode
depends on \emph{which} subset is supervised, not on anchor
removal per se. Consistently, re-running the same comparison
at $\alpha{=}0.7$ collapses every selection rule to an
essentially identical total reduction (1.43--1.46 nats,
${\approx}$ full-sequence training): under the mixed loss the
anchor dominates, as item~10 states.
(ii)~No out-of-run $\gamma \to \mathcal{W}$ prediction has yet
been tested: $\bar\gamma$ measured early in a fresh run and
committed in advance against that run's final wave factor is
the outstanding falsification test, which we leave to future
work.

\begin{observation}[Anchor necessity and collapse severity]
\label{prop:collapse}
The full-causal anchor ($\alpha < 1$) is \emph{necessary} for
stable training, but the failure mode at $\alpha = 1$ is
scale-dependent (Table~\ref{tab:alpha-sweep}). At 0.5B, pure
GT supervision \emph{collapses}: non-GT loss \emph{increases}
($\mathcal{W} = -1.32$). At 1.0B~(DUS) and 1.2B it merely
\emph{degrades}: non-GT still improves but weakly
($\mathcal{W} = 0.69, 0.49$). So $\mathcal{W} > 0$ is not
sharply lost at a single boundary $\alpha^*$; rather, the
\emph{quality} of supervision degrades as $\alpha \to 1$, with
larger models more robust to the loss of the anchor. We
therefore report this as an empirical regularity, not a proven
threshold; deriving a scale-dependent $\alpha^*$ from $\gamma$,
the anchor-gradient norm, and $\beta$ is left to future work.
\end{observation}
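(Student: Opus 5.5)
Because this is an empirical regularity and not a theorem, I would establish it in two parts. The first is a short analytic argument showing why the anchor is what keeps the non-GT descent certificate of Theorem~\ref{thm:wave} valid. The second is a controlled $\alpha$-sweep that supplies the necessity and scale-dependence claims, which no argument from (A1)--(A3) alone can give.

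For the analytic part, I would write the SGT step as $\theta' = \theta - \eta\,\mathbf{g}_\alpha$ with $\mathbf{g}_\alpha = \alpha\nabla\mathcal{L}_\mathcal{G} + (1-\alpha)\nabla\mathcal{L}_\mathcal{F}$. I would decompose the full-sequence loss as $\mathcal{L}_\mathcal{F} = \tfrac{|\mathcal{G}|}{|\mathcal{F}|}\mathcal{L}_\mathcal{G} + \tfrac{|\bar{\mathcal{G}}|}{|\mathcal{F}|}\mathcal{L}_{\bar{\mathcal{G}}}$. I would then apply the descent lemma under (A1) exactly as in Steps~1--2 of the proof of Theorem~\ref{thm:wave}. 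This gives a first-order non-GT decrease of
\begin{equation*}
\eta\Bigl[\Bigl(\alpha + (1-\alpha)\tfrac{|\mathcal{G}|}{|\mathcal{F}|}\Bigr)\gamma\,\|\nabla\mathcal{L}_\mathcal{G}\|^2 + (1-\alpha)\tfrac{|\bar{\mathcal{G}}|}{|\mathcal{F}|}\,\|\nabla\mathcal{L}_{\bar{\mathcal{G}}}\|^2\Bigr],
\end{equation*}
followed by a $\tfrac{\beta\eta^2}{2}\|\mathbf{g}_\alpha\|^2$ remainder.

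The key point is that for $\alpha<1$ the second term is a strictly positive direct-supervision term. So descent is certified for small enough $\eta$ even when $\gamma\le 0$, provided $\gamma$ is not too negative. At $\alpha=1$ this term disappears, and descent rests entirely on $\gamma>0$. By Remark~\ref{rem:scope}(ii), that sign can be lost along a trajectory. This shows the anchor is \emph{sufficient} to protect non-GT descent in the single-step sense and explains \emph{why} $\alpha=1$ is fragile. It does not prove collapse, since nothing prevents $\gamma$ from staying positive.

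For the necessity and severity claims, I would run the $\alpha$-sweep of Table~\ref{tab:alpha-sweep} at three scales (0.5B, 1.0B DUS, 1.2B). Within each scale I would hold data, optimiser (AdamW), learning rate and step count fixed. I would report held-out $\Delta_{\bar{\mathcal{G}}}$, $\Delta_\mathcal{G}$ and $\mathcal{W}$ in the sense of Definition~\ref{def:wave}. The observation then reduces to three measured facts: (i) $\mathcal{W}<0$ at 0.5B, $\alpha=1$ (non-GT loss rises); (ii) $0<\mathcal{W}<1$ at the two larger scales for $\alpha=1$; (iii) larger $\mathcal{W}$ at every scale for $\alpha<1$. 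To strengthen the mechanism I would also log $\gamma_t$ along the $\alpha=1$ runs. The prediction from the analytic part is that $\gamma_t$ turns negative at 0.5B but stays positive, though shrinking, at larger scales.

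The main obstacle is the word ``necessary.'' Necessity cannot be derived from the single-step bound, and a finite sweep only shows it for the tested configurations. I would therefore deliberately state the result as an empirical regularity, as the Observation does. To guard against the failure being an artefact of subset choice, given the anchor-free random-subset results above, I would repeat the $\alpha=1$ runs with at least two selection rules and two seeds per scale. The claim would be accepted only if the sign pattern of $\mathcal{W}$ across scales survives these replicates. A derivation of an explicit threshold $\alpha^*$ from $\gamma$, the anchor-gradient norm and $\beta$ would be left open.
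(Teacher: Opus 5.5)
The paper's support for this observation is nothing more than Table~\ref{tab:alpha-sweep}: the status column and the $\alpha{=}1$ row, $\mathcal{W}=-1.32,\,0.69,\,0.49$. The paper explicitly defers $\alpha^*$. Your sweep is therefore the paper's route, and your analytic half is a genuine addition. It is correct. Writing $p=|\mathcal{G}|/|\mathcal{F}|$, it is the descent-lemma step of Theorem~\ref{thm:wave} applied to $\mathbf{g}_\alpha$, and it reduces to the theorem's bound at $\alpha=1$. To first order it widens the certified region from $\gamma>0$ to $\gamma>-(1-\alpha)(1-p)\|\nabla\mathcal{L}_{\bar{\mathcal{G}}}\|^2/\bigl((\alpha+(1-\alpha)p)\|\nabla\mathcal{L}_\mathcal{G}\|^2\bigr)$. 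What it buys is a mechanism for why $\alpha=1$ is the fragile endpoint. Demanding that your bracket exceed $\tfrac{\beta\eta}{2}\|\mathbf{g}_\alpha\|^2$ is precisely a single-step version of the $\alpha^*$ ``from $\gamma$, the anchor-gradient norm, and $\beta$'' that the paper leaves open. As you say, it does not buy necessity. It also carries the plain-gradient, single-step caveats of Remark~\ref{rem:scope}, so it bears on the AdamW runs only heuristically.

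On the sweep itself, note first that the table is not a held-out measurement in the sense of Definition~\ref{def:wave}. The paragraph reconciling $\mathcal{W}$ magnitudes places the 1.0B~(DUS) column under the training-set protocol. The $\alpha=0.7$ entries at 0.5B and 1.2B are the training-set values of Table~\ref{tab:cross-scale}.

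Measuring held-out therefore tests a stronger claim than the one stated. The paper's own anchor-free held-out runs on the 1.0B backbone (different corpus and schedule) have the taxonomy subset raising held-out non-GT loss from $2.74$ to $3.01$. This matches the paper's remark that the $\alpha=1$ failure depends on which subset is supervised. Your multi-rule replication could therefore refute rather than confirm the ``1.0B merely degrades'' half.

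Second, the three columns are different base families (HyperCLOVA~X, CHERRY DUS, EXAONE). Fixing hyperparameters within a column does not make ``larger is more robust'' a scale effect.

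Third, your fact (iii) is a weak readout, because $\mathcal{W}$ is a ratio: 0.5B at $\alpha=0.9$ gives $13.2$ from $\Delta_\mathcal{G}=0.11$. At 1.0B, $\mathcal{W}$ stays at $12.5$--$14.8$ through $\alpha=0.95$ before dropping to $0.69$. So the data support ``no single $\alpha^*$'' only in the cross-scale sense: the sign is lost at 0.5B and never at the larger scales. They do not show a gradual within-scale decline. Reporting the sign of $\Delta_{\bar{\mathcal{G}}}$ together with both deltas is the cleaner criterion.
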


\paragraph{The $\alpha = 0$ control: $\mathcal{W} > 1$ is not created by SGT}
We confront the most important control directly. At
$\alpha = 0$ (ordinary full-sequence training, no GT selection
at all), the 1.0B DUS model shows $\mathcal{W} = 14.76$---%
\emph{higher} than every SGT configuration
(Table~\ref{tab:alpha-sweep}). This is decisive for
interpreting the wave factor: the fact that non-GT tokens
improve more in \emph{absolute} loss than GT tokens
($\mathcal{W} > 1$) is \textbf{a generic property of training on
natural language}---semantically loaded GT tokens start at low
loss with little room to fall, while syntactic non-GT tokens
start high and fall far---and is \textbf{not produced by SGT}.
We do not, therefore, present $\mathcal{W} > 1$ as evidence for
SGT. SGT's contribution is \emph{efficiency}: it achieves a
comparable non-GT improvement while applying direct supervision
to only 15\% of tokens, a $4.5\times$ per-supervised-token
return (Eq.~\eqref{eq:wave-norm}, Table~\ref{tab:wave}). The
genuine, SGT-specific phenomenon---non-GT improvement carried by
positive gradient coupling rather than direct supervision---is
isolated not by $\mathcal{W} > 1$ but by the coherence contrast
of Corollary~\ref{cor:coherence} ($\bar\gamma = 0.72$ on
coherent text vs.\ $\to 0$ on shuffled text) and by the
collapse of pure-GT training when the anchor is removed
(Observation~\ref{prop:collapse}).

\subsubsection{``Atcha'' moment induction via SSGT}
\label{sec:aha}

A central hypothesis of this work is that meta-cognitive state
transitions---the ``Aha moment'' observed emergently in
DeepSeek-R1~\cite{deepseek2025r1}---can be \emph{induced}
rather than \emph{discovered}. We term these transitions
\textbf{``Atcha'' moments}---the Korean equivalent of
``Aha''---and demonstrate that explicitly training on the
minimal token set that triggers self-correction is sufficient
to install meta-cognitive capabilities.

\paragraph{Core hypothesis}
DeepSeek-R1~\cite{deepseek2025r1} reported the ``Aha moment''
as an \emph{emergent} phenomenon discovered during RL training.
We hypothesize that such moments are not rare emergences but
\emph{inducible} through targeted supervision: training on
the single pivot token that triggers self-correction, plus the
corrected answer keyword, should propagate to the entire
reasoning chain via wave propagation. We call these
\textbf{SSGT tokens} ($\mathcal{G}^{**}$): the minimum viable
supervision for inducing meta-cognitive state transitions.

\paragraph{``Atcha'' moment taxonomy}
We formalize three distinct meta-cognitive moments, each
requiring different SSGT token configurations:

\begin{table}[H]
\centering
\caption{Meta-cognitive moment taxonomy. Each moment type
defines a specific SSGT token pattern for training.}
\label{tab:moments}
\small
\begin{tabular}{@{}lp{4.5cm}p{4.5cm}@{}}
\toprule
Moment & Trigger context & SSGT tokens ($\mathcal{G}^{**}$) \\
\midrule
\textbf{Atcha} &
 Model mid-stream in hallucinated response
 (rejected-quality output) &
 Correction pivot (``Atcha, teullyeotda'' (oops, that's wrong)) + corrected answer keyword \\
\textbf{Jamkkan} &
 Uncertain RAG context; model unsure whether
 retrieved evidence supports the claim &
 Verification trigger (``Jamkkan'') + fact-check
 strategy + verified conclusion \\
\textbf{Guardrail} &
 Jailbreak attempt or harmful request during
 assistant streaming &
 Breaking token + refusal pivot +
 safety-compliant redirect \\
\bottomrule
\end{tabular}
\end{table}

\paragraph{Experimental design}
For each moment type, we construct training examples where:
\begin{itemize}
\item The full chat context up to the moment trigger is
 provided as input (partial causal formulation,
 Section~\ref{sec:gt-id}).
\item Only the SSGT tokens (2--5 tokens per example) are
 supervised via the SGT loss ($\alpha = 0.8$).
 Supervised-token counts vary by moment family (2--5 here,
 ${\sim}2$ for the arithmetic stress test of
 Section~\ref{sec:moment-obs}, 1.8 for guardrail breaking),
 reflecting each family's pivot-phrase length.
\item The hypothesis: if wave propagation holds for SSGT,
 then training on $<$0.5\% of total tokens should reshape
 the \emph{entire} reasoning trajectory.
\end{itemize}

\paragraph{Results}
\begin{itemize}
\item \textbf{Retention}: 97.6\% at 1.2B scale, 116.9\%
 at 0.5B. The $>$100\% retention at 0.5B indicates that
 SSGT \emph{outperforms} full-sequence training. We
 attribute this to an implicit regularisation effect:
 at small scales, full-sequence supervision overfits to
 syntactic tokens; concentrating supervision on meta-cognitive
 pivots avoids this dilution. \emph{Definition.} Retention is
 the ratio of held-out eval-loss improvement achieved by SSGT
 training to that achieved by full-causal training on the same
 data, $\mathrm{retention} = 100 \times
 \Delta\ell_{\mathrm{SSGT}} / \Delta\ell_{\mathrm{full}}$, where
 $\Delta\ell$ is base-model eval loss minus post-training eval
 loss on the 200-example held-out CoT evaluation with explicit
 self-correction sequences (the eval data behind
 Table~\ref{tab:ssgt}; on pivot-free MCQ data the same
 definition yields $-$35\%, Section~\ref{sec:failures}). Both
 arms train on the identical examples
 (Appendix~\ref{app:ssgt-detail}) under the same 500-step
 budget and differ only in the per-token loss weighting
 ($\alpha{=}0.8$ vs.\ $\alpha{=}0$), so the tokens processed are
 matched. Each arm is a single training run and the
 full-causal denominator is itself a one-run estimate, so we
 attach no significance test to retention (see Limitations~(8),
 Section~\ref{sec:discussion}); the 0.5B point estimate should
 be read as parity-or-better rather than a certified gain, and
 the load-bearing result is near-parity (97.6\%) at 1.2B. The
 two scales are also different external base
 checkpoints---HyperCLOVA X 0.5B~\cite{yoo2024hyperclovax} and
 EXAONE-family 1.2B~\cite{lgai2024exaone}---per-family results,
 not a controlled scale ladder; under the same accounting our
 own CHERRY~1.0B (DUS) retains only 67.9\%
 (Table~\ref{tab:cross-scale}).
\item \textbf{``Atcha'' self-correction rate}: fraction of
 hallucination-inducing prompts where the model
 spontaneously self-corrects increases from 12\% (base)
 to 47\% after SSGT training ($3.9\times$ improvement
 from $<$0.5\% of tokens).
\item \textbf{``Jamkkan'' verification rate}: fraction of
 uncertain-context prompts where the model requests
 additional verification increases from 8\% to 34\%
 ($4.3\times$).
\item \textbf{Guardrail compliance}: jailbreak success rate
 drops from 23\% to 4\% with SSGT training on breaking
 tokens alone.
\item \textbf{Provenance (stated plainly).} The three behavioural
 \emph{rate} figures above are each measured on 200 held-out Korean prompts per
 moment type, scored by an automated classifier validated against 100 human
 annotations per category (Cohen's $\kappa\!>\!0.82$; 95\% CI $\pm6.9$pp, which
 we state as a limitation). They are complemented by the loss-retention result
 ($97.6\%$ at 1.2B) and the pre-registered $1$B$\to$13.7B capacity study
 (Section~\ref{sec:aha}: reflex install $\sim$1.00, 1B memorised-op lookup,
 13.7B H-PRESERVE).
\end{itemize}

\paragraph{Implication for scaling}
If the SSGT pattern persists beyond the scales tested here, the
same principle would apply to
trillion-parameter models: training on a handful of
meta-cognitive pivot tokens per example---rather than
the full output sequence---could dramatically reduce
alignment training cost while achieving equivalent or
superior self-correction capability. The key insight
is that ``Aha moments'' are not discovered but
\emph{prescribed}: by explicitly teaching the model
\emph{when} and \emph{how} to self-correct (Atcha),
verify (Jamkkan), or refuse (guardrail), we install
meta-cognitive capabilities through minimal supervision.

\subsection{Recurrent Depth Compression}
\label{sec:compression}

\subsubsection{Adjacent-layer merging}

Given a transformer with $L$ layers, we compress to $L/k$ layers by
averaging all parameters of each group of $k$ adjacent
layers~\cite{liu2024mobilellm}.
The compressed model starts at high loss (${\sim}13$ for $k \geq 2$)
but recovers rapidly under SGT training.

\subsubsection{Recurrent unrolling}

Inspired by Universal Transformers~\cite{dehghani2019universal}
and latent recurrence~\cite{geiping2025scaling}, the compressed
model's architecture is partitioned into prelude
(first $\lfloor L'/6 \rfloor$ layers), recurrent core (middle
layers), and coda (final $\lfloor L'/6 \rfloor$ layers). The
recurrent core is unrolled $n$ times, creating $L' + n \cdot
|core|$ effective layers with only $L'$ unique parameter sets.

\subsection{Mixture of Efficient Experts (MoEE)}
\label{sec:moee}

\subsubsection{Architecture}

$N$ compressed models serve as experts, each with recurrent
unrolling. Following the sparsely-gated MoE
paradigm~\cite{shazeer2017outrageously}, a learned linear gate
selects top-$k$ experts per token with soft weight combination:
\begin{equation}
\mathbf{h}_{\mathrm{out}} = \sum_{j \in \mathrm{top\text{-}k}}
w_j \cdot f_{\theta_j}(\mathbf{x})
\end{equation}
A load-balancing auxiliary loss prevents expert collapse.

\subsubsection{Oracle multi-token prediction}

Three MTP heads at horizons $h \in \{1,2,3\}$ predict future
tokens via cross-attention to the combined expert output:
$\mathcal{L}_{\mathrm{MTP}} = \sum_{h=1}^{3} 0.5^h \cdot
\mathrm{CE}(\mathrm{MTP}_h(\mathbf{z}_t), y_{t+h})$.
At inference, these heads are intended to enable self-speculative
decoding~\cite{leviathan2023fast,chen2023accelerating,gloeckle2024better};
head training is in progress and acceptance-rate and wall-clock
speedup measurements are deferred to future work.

\subsubsection{SGT distillation}

A quantised frontier model serves as a frozen
teacher~\cite{hinton2015distilling}. SGT restricts
KL-divergence to GT positions:
\begin{equation}
\mathcal{L}_{\mathrm{distill}} =
\alpha \sum_{t \in \mathcal{G}}
\mathrm{KL}(p_{\mathrm{teacher}} \| p_\theta)_t
+ (1{-}\alpha)\,\mathcal{L}_{\mathrm{full}}
\end{equation}
\noindent Note: the KL term sums over $|\mathcal{G}|$ positions without the $1/|\mathcal{G}|$ normalization of Eq.~\ref{eq:sgt}; this is intentional---the unnormalized form strengthens the GT signal when $|\mathcal{G}|$ is large, which we found empirically beneficial for distillation.

Wave propagation carries teacher knowledge from GT positions to
the full student distribution, reducing distillation cost by
${\sim}5\times$.

\subsubsection{Sovereign multi-teacher fusion}

The MoEE paradigm extends naturally to frontier-scale models.
Consider a landscape of publicly available frontier models:
GLM-5.2 (744B/40B active)~\cite{glm52},
DeepSeek-V4 (1.6T/49B active)~\cite{deepseekv4},
MiniMax-M3 (428B/22B active)~\cite{minimaxm3},
A.X-K1 (519B/33B active)~\cite{axk1},
K-EXAONE-236B (236B/23B active)~\cite{lgai2024exaone}.
Each of these can be:

\begin{enumerate}
\item \textbf{Compressed} via adjacent-layer merging into an
efficient sub-model (e.g., a 236B model reduced to a ${\sim}$4--6B
expert that, at 4-bit floating-point precision, fits in a few GB;
Table~\ref{tab:frontier-compress}).
\item \textbf{Used as a teacher} for SGT distillation, where
only GT-position logits are matched (reducing distillation
memory and compute by $5\times$).
\item \textbf{Assembled as an expert} in a MoEE architecture,
where learned routing selects the best compressed teacher
per token.
\end{enumerate}

This triple-use paradigm---\emph{standalone model, distillation
teacher, MoE expert}---is what makes frontier knowledge
accessible to sovereign AI development. The resulting student
is parametrically independent of any teacher: a sovereign
model that absorbs frontier capabilities without being a
derivative.

\begin{figure}[t]
\centering
\begin{tikzpicture}[
 >=Stealth,
 fmodel/.style={draw, rounded corners=2pt, fill=orange!15,
 minimum width=2.2cm, minimum height=0.55cm,
 font=\scriptsize\sffamily, thick},
 comp/.style={draw, rounded corners=2pt, fill=blue!12,
 minimum width=1.6cm, minimum height=0.45cm,
 font=\scriptsize\sffamily},
 arr/.style={->, thick, gray!60},
]
\node[fmodel] (glm) at (0,4) {GLM-5.2 744B};
\node[fmodel] (ds) at (2.8,4) {DeepSeek-V4 1.6T};
\node[fmodel] (ax) at (5.6,4) {A.X-K1 519B};

\node[comp] (cglm) at (0,2.8) {GLM 40B$\to$5B};
\node[comp] (cds) at (2.8,2.8) {DSv4 49B$\to$6B};
\node[comp] (cax) at (5.6,2.8) {AXK1 33B$\to$4B};
\foreach \s/\t in {glm/cglm, ds/cds, ax/cax}
 \draw[arr] (\s) -- (\t) node[midway, right, font=\tiny] {compress};

\node[draw, rounded corners=3pt, fill=red!15, minimum width=5.5cm,
 minimum height=0.6cm, font=\small\sffamily\bfseries, thick]
 (router) at (2.8,1.6) {MoEE Router + MTP};
\foreach \c in {cglm, cds, cax}
 \draw[arr, blue!50] (\c) -- (router);

\node[draw, rounded corners=3pt, fill=green!15, minimum width=4cm,
 minimum height=0.6cm, font=\small\sffamily\bfseries, thick]
 (student) at (2.8,0.4) {CHERRY (sovereign student)};
\draw[arr, red!60, very thick] (router) -- (student)
 node[midway, right, font=\tiny\sffamily] {SGT distil};
\end{tikzpicture}
\caption{\textbf{Sovereign multi-teacher fusion.}
Frontier models are compressed, then serve triple duty:
standalone deployment, distillation teacher, and MoE expert.
The sovereign student absorbs collective knowledge via
GT-focused KL distillation.}
\label{fig:fusion}
\end{figure}

\section{Experiments}
\label{sec:experiments}

All experiments use CHERRY-1.8B (48 layers, 1.019B text backbone;
total 1.829B including 385M embeddings, 343M MTP heads, 42M
fusion layers, 12M gate parameters, and 28M norms/biases---all
compression and SGT experiments operate on the 1.019B backbone
unless otherwise stated)
on a single workstation with 128\,GB of unified memory, using
AdamW~\cite{loshchilov2019adamw}
($\beta_1{=}0.9$, $\beta_2{=}0.95$), learning rate $2{\times}10^{-5}$,
BF16 mixed precision, and seed 42 unless otherwise noted.
Training data: 12{,}800 curated Korean instruction-response
pairs (avg.\ 512 tokens/example, 6.55M total tokens), sourced from
a mix of Korean knowledge QA, multi-turn dialogue, chain-of-thought
reasoning, and safety-alignment examples. Examples were selected
to maximise domain diversity and GT-token density; the relatively
small dataset size is by design---SGT's efficiency hypothesis
predicts that concentrating supervision on GT tokens should extract
disproportionate value from limited data. Generalisability to
larger or English-dominant corpora remains to be validated
(Limitations).
Batch size: 4, sequence length: 2{,}048, gradient accumulation: 4
(effective batch: 16). Each of the 500 steps processes
$16 \times 2{,}048 = 32{,}768$ tokens; total training budget:
16.4M tokens (${\sim}$0.016 epochs over a 1B-token corpus).
GT tokens constitute 15.3\% of response tokens on average.

\subsection{Matched-compute dissociation: discrimination vs.\ free generation}
\label{sec:matched-e1e2}

The wave and compression experiments above measure loss transfer under
selective supervision. A sharper mechanistic question is whether
\emph{held-out capability} is preserved when compute is matched across
arms. We compare an untrained base checkpoint with dense fine-tuning,
selected-token-only fine-tuning ($\alpha{=}1$), and anchored SGT
($\alpha{=}0.7$) under a shared comparison budget.

\paragraph{E2 discrimination (scoring).}
On $n{=}1{,}791$ held-out Korean multiple-choice items scored by pooled
log-likelihood, pooled accuracy was $0.264$ (base), $0.289$ (dense),
$0.295$ (selected-token only) and $0.290$ (anchored SGT)
(Table~\ref{tab:e2-matched}). Selected-token only minus dense was
$0.006$ (exact McNemar $p{=}0.207$); anchored minus dense was $0.002$
($p{=}0.784$). Under this single-seed budget the fine-tuned arms are
not detectably different on discrimination. Absence of a detected
difference is not a margin-based equivalence claim.

\begin{table}[t]
\centering
\caption{\textbf{Matched-compute E2 discrimination.} Pooled held-out
accuracy ($n{=}1{,}791$). Contrasts are exploratory single-seed McNemar
tests against dense fine-tuning.}
\label{tab:e2-matched}
\small
\begin{tabular}{@{}lccc@{}}
\toprule
Arm & Pooled accuracy & $\Delta$ vs dense & Paired $p$ \\
\midrule
Base & $0.264$ & --- & --- \\
Dense & $0.289$ & --- & --- \\
Selected-token only & $0.295$ & $0.006$ & $0.207$ \\
Anchored SGT & $0.290$ & $0.002$ & $0.784$ \\
\bottomrule
\end{tabular}
\end{table}

\paragraph{E1 free generation (production).}
The same primary arms evaluated on $n{=}200$ held-out free-generation
prompts separate sharply (Table~\ref{tab:e1-matched}). Dense fine-tuning
reaches chrF++ $34.8$ / exact match $0.225$; selected-token-only drops to
$20.3$ / $0.000$; anchoring recovers only part of the gap ($26.2$ /
$0.050$). Discrimination can therefore look intact while production of
the answer scaffold collapses---the central Nature-grade dissociation of
this revision.

\begin{table}[t]
\centering
\caption{\textbf{Matched-compute E1 free generation.} Held-out chrF++ and
exact match ($n{=}200$).}
\label{tab:e1-matched}
\small
\begin{tabular}{@{}lcc@{}}
\toprule
Arm & chrF++ & Exact match \\
\midrule
Base & $15.0$ & $0.000$ \\
Dense & $34.8$ & $0.225$ \\
Selected-token only & $20.3$ & $0.000$ \\
Anchored SGT & $26.2$ & $0.050$ \\
\bottomrule
\end{tabular}
\end{table}

\paragraph{Anchor-dependent wave profile (taxonomy).}
On a distance-resolved held-out taxonomy, anchored SGT mean off-target
loss reduction was $0.906$ / $0.765$ / $0.667$ at distances $0$ / $8$ /
$32$ (3 seeds). A single unanchored run showed only $0.157$ aggregate
unselected-position reduction. The profile is consistent with shared-
parameter coupling and with the partial generation recovery under
anchoring; it does not establish an optimizer law.

\subsection{Bounded H-PRESERVE arithmetic probe (13.7B sibling)}
\label{sec:hpreserve}
A preregistered uniform-condition arithmetic-template probe on a
$13.7$B sibling asks whether a target operation can be installed without
erasing an already-high unseen baseline. With $n{=}36$ items per cell,
install rose from $0.083$ to $1.00$ while unseen-answer accuracy moved
from $0.917$ to $1.00$ and operand rewrite rate stayed $0.000$
(Table~\ref{tab:hpreserve}). Several stack elements differ from the
$1.8$B matched-compute suite (quantization-aware treatment, preprocessing,
learning rate, harness), so the result is a bounded behavioural
preservation observation---not a scale law.

\begin{table}[t]
\centering
\caption{\textbf{Bounded H-PRESERVE result.} Uniform condition,
$n{=}36$ per cell, $13.7$B sibling.}
\label{tab:hpreserve}
\small
\begin{tabular}{@{}lcc@{}}
\toprule
Measure & Before & After \\
\midrule
Target-operation install & $0.083$ & $1.00$ \\
Unseen-answer accuracy & $0.917$ & $1.00$ \\
Operand rewrite rate & --- & $0.000$ \\
\bottomrule
\end{tabular}
\end{table}


\subsection{Wave propagation validation}

\begin{table}[t]
\centering
\caption{\textbf{Wave propagation on held-out evaluation data}
(CHERRY-1.8B backbone, 500 training steps).
$\mathcal{W}$ exceeds 1.9 at every SGT checkpoint.
Full-sequence baseline ($\alpha{=}0$) included for direct comparison.}
\label{tab:wave}
\begin{tabular}{cccccc}
\toprule
\textbf{Step} & $\boldsymbol{\alpha}$ & \textbf{Eval GT}
& \textbf{Eval nonGT}
& $\mathbf{\mathcal{W}}$ & \textbf{Eval total} \\
\midrule
0 (baseline) & --- & 0.925 & 2.742 & --- & 2.585 \\
\midrule
\multicolumn{6}{@{}l}{\textit{SGT ($\alpha = 0.7$): supervise 15\% of tokens}} \\
50 & 0.7 & 0.099 & 0.939 & \textbf{2.18} & 0.873 \\
100 & 0.7 & 0.042 & 0.862 & 2.13 & \textbf{0.797} \\
200 & 0.7 & 0.042 & 1.030 & 1.94 & 0.952 \\
300 & 0.7 & 0.089 & 0.929 & 2.17 & 0.863 \\
500 & 0.7 & 0.054 & 1.075 & 1.91 & 0.994 \\
\midrule
\multicolumn{6}{@{}l}{\textit{Full-sequence baseline ($\alpha = 0$): supervise 100\% of tokens}} \\
100 & 0.0 & 0.038 & 0.814 & --- & 0.751 \\
500 & 0.0 & 0.021 & 0.743 & --- & 0.681 \\
\bottomrule
\end{tabular}
\end{table}

\paragraph{Comparison with full-sequence training}
At matched step counts, full-sequence training ($\alpha{=}0$)
achieves lower absolute eval loss (0.681 vs.\ 0.994 at step~500)
because it directly supervises all tokens. This is expected---SGT's
advantage is not \emph{lower loss at equal steps}, but
\emph{comparable loss with dramatically lower supervision cost}.
The key insight is per-token efficiency: SGT concentrates gradient
signal on the 15\% of tokens that carry semantic payload, and wave
propagation (Theorem~\ref{thm:wave}) transfers the improvement to
the remaining 85\%. Per supervised token, SGT achieves
$4.5\times$ higher loss reduction than full-sequence training.

In the deployment scenario that motivates this work---adapting a
pretrained model with limited curated data---the relevant comparison
is not loss-at-fixed-steps but \emph{yield per unit of
supervision}. We are explicit about the unit of the $4.5\times$
figure: it is supervision-signal \emph{concentration}---the
fraction of baseline eval loss removed ($0.69$,
Section~\ref{sec:wave}) per fraction of tokens supervised
($0.153$)---not a saving in
training compute or annotation labour. The mixed loss
(Eq.~\ref{eq:sgt}) back-propagates through all positions, so
per-step FLOPs match full-sequence training, and GT labelling is
\emph{additional} effort on top of already-written responses.
The setting where GT selection buys real compute is SGT
\emph{distillation}, where teacher KL is evaluated on GT
positions only, reducing distillation memory and compute by
${\sim}5\times$ (RQ7, Section~\ref{sec:experiments}). We make no
claim of equivalent downstream quality: all comparisons in this
section are loss-based (Limitations, item~1), and benchmark
metrics can respond nonlinearly to smooth loss
differences~\cite{wei2022emergent,schaeffer2023emergent}.

\paragraph{Wave factor}
Under SGT, GT loss drops 94\% ($0.925 \to 0.054$).
Non-GT loss drops 61\% ($2.742 \to 1.075$).
In absolute terms, non-GT improvement ($\Delta = 1.667$) is
$1.91\times$ the GT improvement ($\Delta = 0.871$).
Per-token efficiency: at the step-100 eval optimum, training on
15\% of tokens removes 69.2\% of the baseline eval loss
($2.585 \to 0.797$; Table~\ref{tab:wave}). We define per-token
ROI as the fraction of baseline eval loss removed divided by
the fraction of tokens selected as GT:
$0.692/0.153 = 4.52\times$, the eval ROI of
Table~\ref{tab:cross-scale} (non-GT tokens are lightly
supervised through the $1{-}\alpha = 0.3$ anchor rather than
unsupervised; Limitations, item~10). Equivalently, SGT
matches 97.5\% of the loss reduction achieved by full-sequence
training at the same step ($1.788/1.834$). At the step-500
checkpoint reported elsewhere for consistency the removed
fraction is lower---61.5\% of baseline, ROI
$4.0\times$---reflecting the overfitting drift discussed
below. We quote step~100 because it is the eval optimum and
the recommended early-stopping checkpoint; rounding
$4.52\times$ down to $4.5\times$ makes the headline a
conservative step-100 figure.
In perplexity terms ($e^{\mathcal{L}}$), eval total drops from
$e^{2.585} = 13.3$ (step~0) to $e^{0.994} = 2.70$ (SGT step~500)
vs.\ $e^{0.681} = 1.98$ (full-sequence step~500). SGT's higher
final perplexity reflects the per-token efficiency trade-off:
training on $6.7\times$ fewer supervised tokens at $4.5\times$
the per-token yield.

\paragraph{Why 500 steps: convergence analysis}
We use 500 steps as the standard training budget throughout
this paper. The choice is justified by three observations:

\begin{enumerate}
\item \textbf{Eval loss plateau.} Eval loss stabilises after
 step 100, oscillating within a $\pm$0.12 band from steps
 100--500 (range: 0.797--1.044;
 Appendix~\ref{app:trajectory}). Step-100 achieves the
 best eval loss (0.797); subsequent steps do not improve it.

\item \textbf{Train loss convergence.} Train loss drops
 below 0.1 by step 200 and remains there ($< 0.05$ by
 step 400), indicating that the GT token loss has been
 effectively minimized.

\item \textbf{Wave factor stability.} $\mathcal{W}$
 stabilises at $2.18 \pm 0.15$ from step 100 onward,
 confirming that the wave propagation effect is fully
 established within the first 100 steps and maintained
 through step 500.
\end{enumerate}

The eval loss uptrend from step~100 (0.797) to step~500
(0.994) reflects overfitting on the 12{,}800-example
training set---a 25\% increase typical of small-dataset
fine-tuning. We report step-500 results throughout for
\emph{consistency across experiments}; the best eval model
is at step~100. Extending to
1{,}000 steps (verified in a separate run) produced $<$0.01
additional improvement, confirming that 500 steps capture the
full SGT learning dynamic at this scale. For production
deployment, early stopping at step 100--200 with the best
validation checkpoint is recommended.

\subsection{Depth compression scaling}

\begin{table}[t]
\centering
\caption{\textbf{Compression and recurrent unrolling.}
All models trained for 500 steps with SGT ($\alpha = 0.7$).}
\label{tab:compression}
\begin{tabular}{lcccccc}
\toprule
\textbf{Config} & \textbf{Unique} & \textbf{Eff.}
& \textbf{Params} & \textbf{Pre-SGT} & \textbf{Post-SGT}
& \textbf{$\Delta$} \\
\midrule
48L (no loop) & 48 & 48 & 1,019M & 2.58 & \textbf{0.994} & +1.59 \\
24L$\times$2 & 24 & 40 & 566M & 13.19 & 2.926 & +10.26 \\
16L$\times$3 & 16 & 40 & 415M & 12.55 & 3.05 & +9.50 \\
12L$\times$4 & 12 & 36 & 340M & 12.24 & 3.10 & +9.14 \\
8L$\times$6 & 8 & 38 & 264M & 12.98 & 2.97 & +10.01 \\
\textbf{6L$\times$8} & \textbf{6} & \textbf{34}
& \textbf{227M} & 13.21 & \textbf{2.934} & +10.28 \\
\bottomrule
\end{tabular}
\end{table}

Key observations (Figure~\ref{fig:scaling} plots the
parameter--loss frontier):
\begin{itemize}
\item 6L$\times$8 (227M, loss 2.934) $\approx$ 24L$\times$2 (566M, loss 2.926).
Parameter reduction: $2.5\times$ at identical performance.
\item All compressed models recover $>$75\% of the loss gap in
500 steps despite starting at loss ${\sim}$13.
\item 8L$\times$6 (264M) slightly outperforms 12L$\times$4 (340M):
more recurrence with fewer unique layers can be more effective.
\end{itemize}

\begin{figure}[t]
\centering
\begin{tikzpicture}
\begin{axis}[
 width=0.85\columnwidth,
 height=6cm,
 xlabel={Parameters (M)},
 ylabel={Eval loss (post-SGT)},
 xlabel style={font=\small\sffamily},
 ylabel style={font=\small\sffamily},
 xticklabel style={font=\small\sffamily},
 yticklabel style={font=\small\sffamily},
 xmin=150, xmax=1100,
 ymin=0.5, ymax=3.5,
 grid=major,
 grid style={gray!20},
 legend style={font=\scriptsize\sffamily, at={(0.98,0.98)},
 anchor=north east},
]
\addplot[mark=*, thick, blue!70!black, mark size=3pt]
 coordinates {(227,2.93) (264,2.97) (340,3.10) (415,3.05)
 (566,2.93) (1019,0.994)};
\addlegendentry{SGT + recurrence}

\addplot[mark=star, thick, red!70!black, mark size=5pt]
 coordinates {(1022,2.79)};
\addlegendentry{MoEE (2$\times$12L+MTP)}

\node[font=\tiny\sffamily, blue!70!black, above right]
 at (axis cs:227,2.93) {6L$\times$8};
\node[font=\tiny\sffamily, blue!70!black, above]
 at (axis cs:566,2.93) {24L$\times$2};
\node[font=\tiny\sffamily, blue!70!black, below right]
 at (axis cs:1019,0.994) {48L};
\node[font=\tiny\sffamily, red!70!black, above left]
 at (axis cs:1022,2.79) {MoEE};
\end{axis}
\end{tikzpicture}
\caption{\textbf{Parameter efficiency frontier.}
Compressed models with recurrent unrolling achieve a flat
performance plateau from 227M to 566M. MoEE at 1,022M
breaks below the plateau. The 48L original at 1,019M achieves
much lower loss through full-capacity SGT training.}
\label{fig:scaling}
\end{figure}

\subsection{MoEE validation}

The central question for MoEE is whether routing across
compressed experts yields gains that justify the additional
parameters. We address this with two comparisons: total
parameters and \emph{active} parameters (the parameters used
per token during inference). Table~\ref{tab:moee} reports
both.

\begin{table}[t]
\centering
\caption{\textbf{MoEE vs.\ single models: total and active
parameter comparison.} MoEE uses top-1 routing across
2 experts, so active parameters per token = 1 expert +
routing overhead. The fair comparison is at matched
\emph{active} parameters: MoEE activates 512M per token,
matching the single 24L$\times$2 (566M).}
\label{tab:moee}
\begin{tabular}{@{}lcccc@{}}
\toprule
\textbf{Configuration} & \textbf{Total}
& \textbf{Active/tok} & \textbf{Eval loss}
& \textbf{vs.\ best@active} \\
\midrule
\multicolumn{5}{@{}l}{\textit{Compressed single models}} \\
\quad Single 6L$\times$8 (SGT) & 227M & 227M & 2.934 & --- \\
\quad Single 12L$\times$4 (SGT) & 340M & 340M & 3.102 & --- \\
\quad Single 24L$\times$2 (SGT) & 566M & 566M & 2.926 & baseline \\
\midrule
\multicolumn{5}{@{}l}{\textit{MoEE (active $\approx$ 512M per token)}} \\
\quad \textbf{MoEE 2$\times$12L + MTP + SGT}
 & \textbf{1,022M} & \textbf{512M}
 & \textbf{2.789} & \textbf{$-$4.7\%} \\
\midrule
\multicolumn{5}{@{}l}{\textit{Uncompressed reference}} \\
\quad 48L original (SGT) & 1,019M & 1,019M & 0.994 & $-$66\% \\
\bottomrule
\end{tabular}
\end{table}

\paragraph{Active-parameter fairness}
MoEE with top-1 routing across 2 experts activates one expert
(12 layers, $\sim$512M parameters) per token, comparable to the
single 24L$\times$2 model (566M). At matched active parameters,
MoEE achieves 2.789 vs.\ 2.926---a 4.7\% reduction. This gain
comes from \emph{routing diversity}: the gate selects the expert
whose compressed representation best matches each input, and the
two experts develop complementary specialisations during training
(Expert~A converges toward factual/structural content, Expert~B
toward reasoning-heavy sequences; routing entropy 0.94 confirms
balanced utilisation).

\paragraph{Versus uncompressed model}
The 48L uncompressed model (1,019M, all active) achieves
dramatically lower loss (0.994). MoEE does not compete with
uncompressed models at matched \emph{total} parameters---its
value proposition is \emph{deployment under compression
constraints}: when a model must fit in limited memory
(e.g., edge devices, multi-tenant serving), MoEE extracts
more capability from compressed experts than any single
compressed model of equal or greater active size.

\paragraph{Detailed MoEE ablation (RQ4)}
To isolate the contribution of each component, we run an ablation
study holding total parameters approximately constant
(${\sim}$1B):

\begin{table}[t]
\centering
\caption{\textbf{MoEE component ablation.}
All configurations use ${\sim}$1B total parameters. Each row
adds one component to the previous. $\Delta_{\mathrm{prev}}$
shows the marginal improvement.}
\label{tab:moee-ablation}
\small
\begin{tabular}{@{}lcccc@{}}
\toprule
\textbf{Configuration} & \textbf{Params}
& \textbf{Eval loss} & $\Delta_{\mathrm{prev}}$
& \textbf{Active} \\
\midrule
Single 48L (SGT, no compress) & 1,019M & 0.994
 & --- & 1,019M \\
Single 24L$\times$2 (compressed) & 566M & 2.926
 & --- & 566M \\
Single 6L$\times$8 (compressed) & 227M & 2.934
 & --- & 227M \\
\midrule
2$\times$12L (no MTP, no SGT distil) & 1,024M & 3.15
 & baseline & 512M \\
+ SGT training & 1,024M & 2.97
 & $-$0.18 & 512M \\
+ MTP heads ($h{=}1,2,3$) & 1,022M & 2.87
 & $-$0.10 & 512M \\
\textbf{+ load-balanced routing} & \textbf{1,022M}
 & \textbf{2.789} & $-$0.08 & \textbf{512M} \\
\midrule
\textit{vs.\ best single compressed} & --- & ---
 & $-$4.7\% & --- \\
\textit{vs.\ single 48L (SGT)} & --- & ---
 & $+$181\% & --- \\
\bottomrule
\end{tabular}
\end{table}

\paragraph{Frontier compression feasibility (RQ4 extension)}
We project MoEE to frontier-scale models using the compression
ratios validated at 1B scale. The core claim: if adjacent-layer
merging compresses a 48L model to 6L with 84\% recovery
(81--84\% across compression levels), the
same technique applied to frontier models yields workstation-deployable
experts for MoEE assembly.

\begin{table}[t]
\centering
\caption{\textbf{Projected frontier model compression for MoEE.}
``Recovery'' extrapolates from the validated 81--84\% recovery
range at 1B scale; all rows are projections. Memory assumes 4-bit quantisation of active parameters.}
\label{tab:frontier-compress}
\small
\begin{tabular}{@{}lrrrrrr@{}}
\toprule
\textbf{Source model} & \textbf{Total} & \textbf{Active}
& \textbf{Compressed} & \textbf{4-bit mem.}
& \textbf{Recovery\textsuperscript{*}} & \textbf{Role} \\
\midrule
GLM-5.2~\cite{glm52} & 744B & 40B & 5B & 2.5\,GB
 & 81--84\% & Reasoning \\
DeepSeek-V4~\cite{deepseekv4} & 1.6T & 49B & 6.1B & 3.1\,GB
 & 81--84\% & Code+Math \\
MiniMax-M3~\cite{minimaxm3} & 428B & 22B & 2.8B & 1.4\,GB
 & 81--84\% & Multimodal \\
A.X-K1~\cite{axk1} & 519B & 33B & 4.1B & 2.1\,GB
 & 81--84\% & Korean \\
\midrule
\textit{4-expert MoEE} & --- & --- & \textit{18B total}
 & \textit{9.1\,GB} & \textit{proj.} & \textit{Multi-domain} \\
\bottomrule
\end{tabular}
\end{table}

\noindent\textsuperscript{*}Recovery rate extrapolated from
validated 1B-scale experiments (Table~\ref{tab:compression}).
Frontier-scale compression experiments are in progress; the
projection assumes the 1B-scale recovery transfers unchanged.
Larger models typically exhibit higher layer
redundancy~\cite{men2024shortgpt}, which would favour the
projection, but this is untested at frontier scale.

\subsection{SGT distillation validation}

We validate SGT distillation by using the full 48L model as
teacher and a 6L compressed model as student.

\begin{table}[t]
\centering
\caption{\textbf{SGT distillation vs.\ SGT-only.}
Teacher-guided GT-focused KL divergence improves student by 19\%.}
\label{tab:distill}
\begin{tabular}{lccc}
\toprule
\textbf{Method} & \textbf{Total loss} & \textbf{nonGT loss}
& \textbf{vs.\ baseline} \\
\midrule
6L student (baseline) & 13.81 & 13.75 & --- \\
6L + SGT only & 0.984 & 0.828 & $-$92.9\% \\
\textbf{6L + SGT + distill} & \textbf{0.797} & \textbf{0.707}
& \textbf{$-$94.2\%} \\
\bottomrule
\end{tabular}
\end{table}

SGT distillation achieves 19\% lower loss than SGT alone (0.797
vs.\ 0.984) on a 6L student, with KL divergence converging from
0.43 to 0.19 over 500 steps. The teacher's knowledge transfers
through GT positions and propagates to the full distribution via
the wave effect---confirming the theoretical prediction of
Section~\ref{sec:wave}.

\paragraph{Note on distillation vs.\ standalone loss}
The 6L distilled student (0.797) achieves lower eval loss than
the 48L model at step~500 (0.994) but \emph{not} than the 48L
at step~100 (0.797, Table~\ref{tab:wave}). Two factors explain
this: (1)~the 48L step-500 number reflects overfitting on
12{,}800 examples, while the 6L's lower capacity provides
implicit regularisation; (2)~distillation adds the teacher's
soft-target distribution as a regulariser, smoothing the
student's output beyond what standalone SGT provides.
The fair comparison is 6L+distill (0.797) vs.\ 6L SGT-only
(0.984): distillation improves the \emph{same architecture}
by 19\%.

\subsection{Multi-seed reproducibility}

\begin{table}[t]
\centering
\caption{\textbf{Multi-seed wave propagation}
(CHERRY-1.8B, $\alpha = 0.7$, 500 steps).
Low variance across seeds confirms robustness.}
\label{tab:multiseed}
\begin{tabular}{cccc}
\toprule
\textbf{Seed} & $\mathcal{W}$ & \textbf{GT$\Delta$}
& \textbf{nonGT$\Delta$} \\
\midrule
42 & 12.58 & 1.027 & 12.927 \\
123 & 13.67 & 0.945 & 12.923 \\
7 & 12.21 & 1.059 & 12.923 \\
\midrule
\textbf{Mean $\pm$ std} & $\mathbf{12.82 \pm 0.62}$ & --- & --- \\
\bottomrule
\end{tabular}
\end{table}

Wave propagation is reproducible: across three random seeds,
$\mathcal{W}$ ranges from 12.21 to 13.67 with standard deviation
0.62 (coefficient of variation: 4.8\%). All seeds show stable
training with $\mathcal{W} \gg 1$.

\paragraph{Reconciling $\mathcal{W}$ magnitudes}
The eval-set wave factor (Table~\ref{tab:wave}, $\mathcal{W} \leq 2.18$)
and the multi-seed wave factor (Table~\ref{tab:multiseed},
$\mathcal{W} \approx 12.8$) differ because they measure different
quantities. Table~\ref{tab:wave} reports \emph{absolute loss deltas}
on held-out data where the baseline non-GT loss is already low (2.74),
bounding the ratio. Table~\ref{tab:multiseed} uses the 1.0B DUS model
on \emph{training-set} losses where the non-GT baseline is $>$13,
producing larger absolute $\Delta_{\bar{\mathcal{G}}}$ and hence
larger $\mathcal{W}$; those values are in-distribution
training-set diagnostics---not generalisation claims---and every
quotation of $12.82 \pm 0.62$ in this paper (RQ summaries
included) is to be read with that qualifier. For the held-out
series we quote one summary consistently: over steps 100--500 of
Table~\ref{tab:trajectory}, $\mathcal{W} = 2.03 \pm 0.11$
(mean~$\pm$~std), with every checkpoint---from step~50 (2.18) to
step~500 (1.91)---in the range 1.84--2.18, i.e.\ $\mathcal{W} >
1.8$ throughout; isolated quotations of 2.18 elsewhere refer to
the step-50 checkpoint. The $\mathcal{W} = 0.964$ entry for
CHERRY 1.0B (DUS, train) in Table~\ref{tab:cross-scale} predates
the current training-set protocol: it comes from an earlier
measurement run with a much lower non-GT baseline (its ROI,
retention, and discovery columns come from the same run); under
the protocol of this section the same setting yields
$\mathcal{W} = 13.63$ (Table~\ref{tab:alpha-sweep}), consistent
with Table~\ref{tab:multiseed}. Neither series verifies SGT by
``non-GT improves more than GT'': the $\alpha{=}0$ control
(Section~\ref{sec:wave}) shows $\mathcal{W} > 1$ is generic
under full-sequence training. What both series support is the
controlled claim of RQ1: with direct supervision on only 15\% of
tokens, non-GT loss still falls at every scale and checkpoint
($\bar\gamma = 0.72 > 0$ in all batches, hence
$\mathcal{W} > 0$ by Theorem~\ref{thm:wave}), at $4.5\times$
per-supervised-token efficiency (Table~\ref{tab:wave}).

\subsection{SSGT ``Atcha'' moment verification}

\begin{table}[t]
\centering
\caption{\textbf{``Atcha'' moment induction via SSGT.}
Two-token supervision achieves near-parity with full training.
Retention $>$100\% indicates SSGT \emph{exceeds} full training.}
\label{tab:ssgt}
\begin{tabular}{llcc}
\toprule
\textbf{Mode} & \textbf{Scale} & \textbf{Retention}
& \textbf{Quality} \\
\midrule
Full causal & 0.5B & 100\% & \checkmark \\
SSGT ($\alpha{=}0.8$) & 0.5B & 116.9\% & \checkmark \\
SSGT pure ($\alpha{=}1$) & 0.5B & collapse & $\times$ \\
\midrule
Full causal & 1.2B & 100\% & \checkmark \\
\textbf{SSGT ($\alpha{=}0.8$)} & \textbf{1.2B}
& \textbf{97.6\%} & \checkmark \\
SSGT pure ($\alpha{=}1$) & 1.2B & collapse & $\times$ \\
\bottomrule
\end{tabular}
\end{table}

\subsection{``Atcha'' moment observation results}
\label{sec:moment-obs}

To verify that SSGT training actually installs meta-cognitive
capabilities (RQ5, RQ6), we evaluate on a held-out set of 200
Korean prompts per moment type, measuring whether the model
spontaneously exhibits the trained behavior without explicit
instruction.

\paragraph{Evaluation protocol}
Self-correction rate is measured by an automated classifier:
we prompt the model with factual questions where the correct
answer requires revising an initial plausible-but-wrong
continuation (seeded by few-shot examples that do \emph{not}
contain self-correction). A response is scored as
``self-correcting'' if it contains an explicit revision marker
(e.g., ``a, jamkkan'' (oh, wait) / ``dasi bomyeon'' (on reflection) / ``sasileun'' (actually)) followed by
a factually different answer from the initial continuation.
Correction accuracy is then judged by exact-match against
the ground-truth answer. Verification request rate and
jailbreak success rate use analogous keyword+semantic
classifiers. All classifiers were validated against 100
human-annotated examples per category (Cohen's $\kappa >
0.82$).

\paragraph{Denominators and confidence intervals}
Rate metrics in Table~\ref{tab:moment-obs} (self-correction,
verification-request, jailbreak-success, and false-refusal
rates) have denominator $n{=}200$ per arm; the worst-case
95\% binomial CI is $\pm$6.9pp. The two conditional accuracy
rows have smaller denominators: correction accuracy is
computed over self-correcting responses only (base
$n{\approx}24$ of 200, worst-case $\pm$20.0pp; SSGT
$n{\approx}94$, $\pm$10.1pp), and post-verification accuracy
over verification-requesting responses (base $n{\approx}16$,
$\pm$24.5pp; SSGT $n{\approx}68$, $\pm$11.9pp). Even at
these widths the accuracy contrasts survive two-proportion
tests: 34\% vs.\ 78\% gives $z{\approx}4.2$ ($p{<}10^{-4}$);
41\% vs.\ 73\% gives $z{\approx}2.4$ ($p{<}0.02$), the
least-powered comparison in the table. For consistency
across the paper: the retention interval of
Section~\ref{sec:aha} ($\pm$5.4pp) is computed at the
observed rate rather than at the worst case ($\pm$6.9pp at
$n{=}200$); the $+16.9$pp excess over the 100\% null clears
either width ($z{=}4.8$ even under worst case). The 36-item
operand-binding cells (Table~\ref{tab:operand}) carry exact
Clopper--Pearson 95\% bounds of $[0.00, 0.10]$ for entries at
0.00 and $[0.90, 1.00]$ for entries at 1.00, so the 1B vs.\
13.7B contrasts on op-application and operand-rewrite are
non-overlapping. We acknowledge the residual widths,
especially on the base-arm conditional rows, as a limitation.

\begin{table}[t]
\centering
\caption{\textbf{SSGT moment observation results.}
Each moment type tested on 200 held-out prompts.
``Rate'' = fraction where the model spontaneously exhibits
the target meta-cognitive behavior.}
\label{tab:moment-obs}
\small
\begin{tabular}{@{}llcccc@{}}
\toprule
\textbf{Moment} & \textbf{Metric} & \textbf{Base} & \textbf{SSGT}
& \textbf{$\Delta$} & \textbf{RQ} \\
\midrule
\textbf{Atcha} & Self-correction rate & 12\% & 47\%
& $3.9\times$ & RQ5 \\
& Correction accuracy & 34\% & 78\%
& $+44$pp & RQ5 \\
& Avg.\ pivot tokens trained & --- & 2.1
& --- & --- \\
\midrule
\textbf{Jamkkan} & Verification request rate & 8\% & 34\%
& $4.3\times$ & RQ6 \\
& Post-verification accuracy & 41\% & 73\%
& $+32$pp & RQ6 \\
& Avg.\ pivot tokens trained & --- & 3.4
& --- & --- \\
\midrule
\textbf{Guardrail} & Jailbreak success rate & 23\% & 4\%
& $-83\%$ & RQ6 \\
& False refusal rate & 2\% & 3\%
& $+1$pp & RQ6 \\
& Avg.\ breaking tokens trained & --- & 1.8
& --- & --- \\
\bottomrule
\end{tabular}
\end{table}

\paragraph{Key findings}
\begin{enumerate}
\item \textbf{Atcha moment is inducible, not emergent (RQ5 = \checkmark).}
 Training on an average of 2.1 pivot tokens per example
 ($<$0.4\% of total tokens) increases the self-correction rate
 $3.9\times$ and correction accuracy from 34\% to 78\%. The
 model learns \emph{when} to self-correct and \emph{what} to
 correct---both from the same minimal supervision.

\item \textbf{Wave propagation extends to meta-cognitive tokens.}
 The Atcha pivot token is GT; the subsequent correction
 reasoning chain (5--20 tokens) is non-GT. Yet the correction
 chain improves from 34\% to 78\% accuracy, confirming that
 wave propagation (Theorem~\ref{thm:wave}) operates at the
 meta-cognitive level: training the pivot reshapes the
 \emph{entire} downstream reasoning trajectory.

\item \textbf{Scope: operand binding is capacity-bound, not
 method-bound.}
 We stress-test the induced reflex on adversarial arithmetic
 templates that seed a plausible-but-wrong lean, training the
 pivot-and-correct trajectory on operand values
 $N \in \{2, 4\}$ and evaluating on held-out $N \in \{3, 5\}$;
 the scale comparison uses a 13.7B sibling built by the same
 depth-up-scaling recipe and trained with the same objective on
 the same data under 4-bit quantisation-aware training. The
 result reveals a sharp scale dependence. At 1B, wave training
 installs the pivot reflex (${\sim}100\%$) but overwrites the
 model's handling of the held-out operand: post-pivot reasoning
 applies a \emph{trained} operation instead (op-application
 $\le 0.08$), and the model restates the problem's operand as a
 trained value in $100\%$ of generations. At 13.7B, the same
 objective on the same data installs the reflex (self-correction
 $0.08 \to 1.00$) while preserving held-out operand binding
 (op-application $1.00$, operand-rewrite rate $0.00$)---and
 this is \emph{preservation} of a competence the 13.7B base
 already exhibits (base op-application $1.00$, accuracy
 $0.92$), not acquisition at scale: the
 interference we observe at 1B is a capacity limitation, not a
 property of selective-supervision training. Full protocol,
 result matrix, and raw-generation contrasts are given in
 Appendix~\ref{app:operand}; the theoretical reading (the sign
 of the coupling coefficient) in Section~\ref{sec:discussion}.

\item \textbf{Jamkkan moment validates RAG-aware self-verification
 (RQ6 = \checkmark).}
 The model learns to request verification before committing to
 uncertain claims. Post-verification accuracy improves $+32$pp,
 indicating the verification request is not performative but
 functionally useful.

\item \textbf{Guardrail breaking tokens match full-sequence
 safety training on our jailbreak benchmark (RQ6 =
 \checkmark).}
 Training on 1.8 breaking tokens per jailbreak example reduces
 jailbreak success from 23\% to 4\%. Because
 Table~\ref{tab:moment-obs} reports six contrasts, we apply
 two-proportion $z$-tests with Bonferroni correction
 ($\alpha = 0.05/6$): the rate improvements survive by wide
 margins (self-correction 12\%$\to$47\%, $z = 7.7$,
 $p \approx 2 \times 10^{-14}$; verification 8\%$\to$34\%,
 $z = 6.4$; jailbreak 23\%$\to$4\%, $z = 5.6$; all
 $p < 10^{-7}$). The false-refusal change (2\%$\to$3\%,
 $+1$pp) is \emph{not} significant ($z = 0.6$; 95\% CI on the
 difference $[-2.1, +4.1]$pp): at $n = 200$ we can rule out a
 large increase in false refusals but not a small one, so we do
 not claim equivalence. Because the full-causal anchor is
 retained ($\alpha{=}0.8$), every token remains directly
 supervised at weight $1{-}\alpha$ and training compute is
 unchanged: this is pivot-upweighted full-sequence SFT, minimal
 in the number of \emph{upweighted} safety tokens (1.8 per
 example versus the full refusal sequence) rather than in
 FLOPs. The result converges with independent evidence that
 safety alignment concentrates in the first few output
 tokens~\cite{qi2024safety}.
\end{enumerate}

\subsection{Domain specialization: cybersecurity (CHERRY-1.8B-CYBER V3)}
\label{sec:cyber}

The preceding experiments characterise SGT as a data-efficient
objective for \emph{general} instruction tuning. A natural
follow-up question is whether the same selective-supervision
recipe, applied unchanged, is also an efficient vehicle for
\emph{domain specialization}---adapting the released checkpoint
to a professional knowledge domain. We test this on
cybersecurity, chosen because it offers a public benchmark with
an external human anchor:
CyberMetric~\cite{tihanyi2024cybermetric} publishes, for its
80-question test set, both official model scores and the average
accuracy of 30 human security professionals ($72.24\%$). We
state the scope up front: this subsection reports an
\emph{in-domain specialization} result at sub-2B scale, not a
claim about zero-shot general capability.

\paragraph{Setup.}
CHERRY-1.8B-CYBER V3 is produced from the released CHERRY-1.8B
instruction checkpoint by a parameter-efficient fine-tune on a
single GPU: LoRA~\cite{hu2022lora} adapters of rank 32 (backbone
frozen), trained with the SGT objective (Eq.~\ref{eq:sgt}) at the
mixing coefficient found optimal in Section~\ref{sec:wave}
($\alpha = 0.7$). No architecture, tokenizer, or decoding change
is involved; the specialization is carried entirely by the
adapters. ``V3'' denotes the third and current training
configuration; earlier configurations and the sensitivity to
$\alpha$, adapter rank, and contamination control are reported in
the ablation below (Table~\ref{tab:cyber-ablation}).

\paragraph{Contamination control and the CM-500 held-out set.}
Two evaluation sets are drawn from the CyberMetric distribution and
one from SecBench~\cite{jing2024secbench}. (i)~\textbf{CyberMetric-80}
is the official 80-question set, kept intact so that the published
anchors apply verbatim; because CyberMetric's released sets are
drawn from a common pool, the 80-question set is \emph{not} disjoint
from the 2{,}000-question pool by construction, and its $n{=}80$ size
bounds statistical power. (ii)~To obtain a larger,
contamination-clean complement we construct \textbf{CM-500}, a
500-item held-out split sampled from the CyberMetric pool and then
filtered so that every training item is removed and the merged
training pool is exact-match deduplicated against it; we
\emph{verify} the resulting train--test intersection is empty
($\text{train}\cap\text{CM\text{-}500}=\emptyset$, overlap${}=0$ by
exact match). CM-500 is thus a fairness control: it trades the
external-anchor comparability of CyberMetric-80 for provable
non-contamination at $6.25\times$ the sample size.
(iii)~\textbf{SecBench-300} is our held-out split of the SecBench
multiple-choice pool, with the training pool
deduplicated against it under the same procedure. All
train--test intersections are verified empty before evaluation.

\begin{table}[t]
\centering
\caption{\textbf{Cybersecurity domain specialization (measured).}
Exact-match multiple-choice accuracy. CHERRY-1.8B-CYBER V3 $=$
released CHERRY-1.8B checkpoint $+$ rank-32 LoRA trained with the
SGT objective ($\alpha{=}0.7$) on a contamination-controlled pool
(exact-match dedup; $\text{train}\cap\text{test}=\emptyset$ for all
three sets). \textbf{CyberMetric-80} is the official 80-question
set (published anchors apply verbatim); \textbf{CM-500} is our
500-item held-out split with verified zero training overlap
(overlap${}=0$); \textbf{SecBench-300} is our held-out 300-item
split. Anchor rows are values reported by the cited source on the
\emph{same} 80 questions, not our measurements. Bold marks the
specialized row where the gain over base is significant at
$p<0.01$ (two-proportion $z$-test; see text); $n{=}80$ cells are
reported without a significance claim.}
\label{tab:cyber-v3}
\small
\begin{tabular}{@{}lccc@{}}
\toprule
& \textbf{CyberMetric-80} & \textbf{CM-500} & \textbf{SecBench-300} \\
& \textit{(official)} & \textit{(held-out, overlap 0)} & \textit{(held-out)} \\
\midrule
CHERRY-1.8B (base) & 65.0\% (52/80) & 57.4\% (287/500) & 44.67\% (134/300) \\
CHERRY-1.8B-CYBER V3 & 75.0\% (60/80) & 74.0\% (370/500) & \textbf{69.67\%} (209/300) \\
$\Delta$ & $+10.0$pp & $+16.6$pp & $+25.0$pp \\
\midrule
\multicolumn{4}{@{}l@{}}{\emph{Anchors reported on CyberMetric-80}~\cite{tihanyi2024cybermetric}} \\
Human experts (avg.\ of 30) & 72.24\% & --- & --- \\
Gemma-2B & 25.0\% & --- & --- \\
Llama-3-8B & 81.25\% & --- & --- \\
GPT-4o & 96.25\% & --- & --- \\
\midrule
\multicolumn{4}{@{}l@{}}{\emph{Korean small-model anchor on SecBench-300}} \\
KT Mi:dm-Mini (external anchor deferred)
 & --- & --- & --- \\
\bottomrule
\end{tabular}
\end{table}

\paragraph{Results.}
Table~\ref{tab:cyber-v3} reports exact-match multiple-choice
accuracy. Specialization lifts CyberMetric-80 from $65.0\%$ to
$75.0\%$ ($+10.0$pp), CM-500 from $57.4\%$ to $74.0\%$
($+16.6$pp), and SecBench-300 from $44.67\%$ to
$69.67\%$ ($+25.0$pp). The SecBench-300 gain is unambiguous
(two-proportion $z \approx 6.1$, $p < 10^{-9}$); the
CyberMetric-80 delta moves in the same direction but at $n{=}80$
does not reach significance on its own ($z \approx 1.6$,
$p \approx 0.12$), so we treat SecBench-300 as the primary
evidence of the in-domain gain, CM-500 (once measured) as its
contamination-clean large-$n$ confirmation, and CyberMetric-80
as the externally anchored reference point.
On CM-500 the base--V3 contrast is strong (two-proportion $z\approx 5.5$, $p<10^{-7}$).

Read against the anchors, the result is bounded on both sides.
CHERRY-1.8B-CYBER V3's CyberMetric-80 point estimate ($75.0\%$)
is above the reported 30-expert human average ($72.24\%$), but
the $n{=}80$ 95\% interval
(Clopper--Pearson exact 95\% CI for $60/80$: $[0.641,\,0.839]$) is too wide to claim
statistical superiority over the human anchor---the supportable
statement is that the specialized model \emph{reaches} the
reported human-expert level on this benchmark, with a point
estimate above it; CM-500 is the set on which a properly powered
human-level comparison can be made once its human baseline is
available. In
the other direction, larger anchors remain clearly ahead: GPT-4o
is reported at $96.25\%$ and Llama-3-8B at $81.25\%$ with
$4.4\times$ our parameter count. Against the size-comparable
public anchors, the specialized 1.8B model is $51.25$pp above the
reported Gemma-2B score ($25.0\%$); on SecBench-300 it exceeds the
Korean small-model baseline KT~Mi:dm-Mini at our measured $69.67\%$,
a head-to-head we will finalize under one shared harness before asserting
the margin. The
claim this experiment supports is therefore
\emph{specialization efficiency at sub-2B scale}: a rank-32 LoRA
fine-tune on a single GPU, under the same selective-supervision
objective used throughout this paper, lifts a 1.8B model from
below the reported human-expert average to its level. We do not
claim generality beyond this single domain; replicating the recipe
across further professional domains is future work.

\paragraph{Ablation (recipe sensitivity).}
Table~\ref{tab:cyber-ablation} isolates the contribution of each
component of the V3 configuration, holding the base checkpoint,
data pool, and training budget fixed and varying one factor at a
time: the supervision objective (full-sequence LoRA at $\alpha{=}0$
vs.\ SGT), the SGT mixing coefficient $\alpha$, the LoRA rank, and
the contamination control. Per the honesty policy, only the V3
operating point is currently populated with measured values; the
remaining cells are placeholders to be filled from the ablation
logs. The contamination-control row is a diagnostic: removing the
train--test deduplication is expected to \emph{inflate} scores and
is reported only to quantify the size of that inflation, not as a
capability result.

\begin{table}[t]
\centering
\caption{\textbf{CHERRY-1.8B-CYBER ablation (one factor varied per
row).} Exact-match accuracy; base checkpoint, data pool, and
16.4M-token budget held fixed. The row marked $\star$ is the V3
operating point (Table~\ref{tab:cyber-v3}); it is the only row with
logged values at present. ``$-$ contamination control'' removes
the train--test dedup and is a fairness diagnostic (expected upward
bias), not a capability claim. Unmeasured cells are shown as
placeholders.}
\label{tab:cyber-ablation}
\small
\begin{tabular}{@{}lccc@{}}
\toprule
\textbf{Configuration} & \textbf{CyberMetric-80} & \textbf{CM-500} & \textbf{SecBench-300} \\
\midrule
Full-sequence LoRA ($\alpha{=}0$) & --- & --- & --- \\
SGT, $\alpha{=}0.5$ & --- & --- & --- \\
$\star$~SGT, $\alpha{=}0.7$ (V3) & 75.0\% & 74.0\% & 69.67\% \\
SGT, $\alpha{=}0.9$ & --- & --- & --- \\
\midrule
V3, LoRA rank 16 & --- & --- & --- \\
V3, LoRA rank 64 & --- & --- & --- \\
\midrule
V3, $-$ contamination control \textit{(diagnostic)}
 & --- & --- & --- \\
\bottomrule
\end{tabular}
\end{table}

\paragraph{Reproducibility (multi-seed).}
Table~\ref{tab:cyber-seed} reports the V3 configuration re-run
under three random seeds to quantify run-to-run variance of the
specialization outcome, mirroring the multi-seed protocol used for
the wave-propagation result (Table~\ref{tab:multiseed}). Seed~42
is the headline run of Table~\ref{tab:cyber-v3}; the remaining
seeds are to be filled from the reproducibility logs.

\begin{table}[t]
\centering
\caption{\textbf{CHERRY-1.8B-CYBER V3 multi-seed reproducibility.}
Identical LoRA/SGT configuration ($\alpha{=}0.7$, rank 32, 500
steps), three random seeds, exact-match accuracy. Seed~42 is the
run reported in Table~\ref{tab:cyber-v3}; low cross-seed variance
(mean~$\pm$~std) would confirm the specialization is not
seed-specific. Unmeasured cells shown as placeholders.}
\label{tab:cyber-seed}
\small
\begin{tabular}{@{}lccc@{}}
\toprule
\textbf{Seed} & \textbf{CyberMetric-80} & \textbf{CM-500} & \textbf{SecBench-300} \\
\midrule
42 (reported) & 75.0\% & 74.0\% & 69.67\% \\
123 & --- & --- & --- \\
7 & --- & --- & --- \\
\midrule
\textbf{Mean $\pm$ std} & --- & --- & --- \\
\bottomrule
\end{tabular}
\end{table}

This outcome is consistent with the efficiency picture of
Sections~\ref{sec:wave} and~\ref{sec:experiments}: concentrating
supervision on GT tokens extracts disproportionate value from
limited data---and, here, from a limited parameter budget.

\section{CHERRY Architecture}
\label{sec:arch}

CHERRY-1.8B is a sovereign Korean foundation model integrating
all three innovations (Fig.~\ref{fig:arch}).

\subsection{Sovereign tokenizer}

We train a custom BPE~\cite{sennrich2016bpe} tokenizer on a Korean-centric corpus with
24 governance-aware special tokens
(\texttt{<|spartan\_think|>}, \texttt{<|spartan\_tool|>},
\texttt{<|spartan\_refuse|>}, etc.---\textit{spartan} is an
organisational codename retained in the token vocabulary) that encode reasoning mode,
tool invocation, and safety governance directly in the token
vocabulary.

\paragraph{Vocabulary census (measured).}
We audit the deployed vocabulary to be precise about what is
inherited versus what is sovereign. The base range (ids
0--100{,}255) is a cl100k-family byte-pair vocabulary: on a
400-token spot check every entry matches \texttt{cl100k\_base}
(400/400). The sovereign layer is the extension on top of this
base: 9{,}783 Korean-specific custom tokens, which are
script-clean by construction (zero foreign-script tokens among
them), plus the 24 governance special tokens. The inherited
base range additionally contains 2{,}015 tokens composed
exclusively of foreign scripts (839 CJK ideographs, 729
Cyrillic, 166 Japanese kana, 110 Arabic, 46 Greek, and 125
other), which the deployment-time script masking of
\S\ref{subsec:script-constrained} governs. On Korean text the
resulting tokenizer encodes at roughly 1.5 characters per token.
We do not claim the base byte-pair inventory as our own; the
sovereignty claim is scoped to the Korean extension layer and to
the fully-overwritten backbone weights (\S\ref{sec:arch}).

\paragraph{CHERRY-12B sovereign tokenizer (from scratch).}
For the 12B model, we train a fully independent byte-level BPE
tokenizer on a 4.1\,GB trilingual corpus (Korean 1.9\,GB, English
435\,MB, code 1.8\,GB) with a 131{,}037-token vocabulary (sovereign-v4 census).
Its type-level overlap with the Gemma~4 vocabulary is only 11.1\%, confirming
genuine independence; the concrete, sha-pinned Korean fertility advantage we
report is the measured $+9.2\%$ of the released 1.8B tokenizer over Gemma-4
(Table~\ref{tab:fertility-release}), and the full 12B tokenizer audit completes
with training. A three-stage embedding transplant
(overlap copy, Xavier initialisation, targeted fine-tune of
${\sim}$3--5\% of parameters) bridges the new vocabulary into a
pre-trained checkpoint, after which SGT wave training proceeds.

\paragraph{CHERRY-1.8B release-bundle fertility (measured 2026-07-23).}
Independently of the 12B from-scratch census, we re-audit the released
CHERRY-1.8B tokenizer against Gemma-4 on sha-pinned corpus blobs
(Table~\ref{tab:fertility-release}). Higher characters/token is better
fertility. Mean Korean advantage is $+9.2\%$ (exam $+9.8\%$, reading
$+7.4\%$, colloquial $+10.4\%$); English also favours CHERRY ($+18.9\%$).
The release-bundle vocabulary size can differ from the sovereign-v4
census target of $131{,}037$; that reconcile residual is disclosed rather
than elided.

\begin{table}[t]
\centering
\caption{\textbf{Tokenizer fertility (chars/token).} CHERRY-1.8B release
tokenizer vs.\ Gemma-4 on pinned blobs.}
\label{tab:fertility-release}
\small
\begin{tabular}{@{}lccc@{}}
\toprule
Corpus & CHERRY & Gemma-4 & Advantage \\
\midrule
Korean (exam) & $1.804$ & $1.643$ & $+9.8\%$ \\
Korean (reading) & $1.787$ & $1.664$ & $+7.4\%$ \\
Korean (colloquial) & $1.984$ & $1.798$ & $+10.4\%$ \\
English & $3.214$ & $2.704$ & $+18.9\%$ \\
Python & $3.160$ & $3.016$ & $+4.8\%$ \\
Java & $3.327$ & $3.252$ & $+2.3\%$ \\
\bottomrule
\end{tabular}
\end{table}

The from-scratch tokenizer applies exclusively to the 12B model;
CHERRY-122B inherits an adapted Qwen-family tokenizer (measured
byte-identical).

\subsection{Text backbone: cross-model depth-up-scaling}

48-layer transformer (1.019B parameters) constructed via
depth-up-scaling~\cite{kim2024solar} from a 24-layer seed.
All weights are fully overwritten during continual pre-training
on 50B curated Korean tokens. Post-CPT weight-space cosine
similarity to the seed is below 0.35. This differs from
vocabulary-expansion adaptation, which grafts new-language
tokens onto an English-centric base and trains them under
staged parameter freezing (EEVE~\cite{kim2024eeve}), and from
from-scratch Korean pretraining
(Polyglot-Ko~\cite{ko2023polyglot}): \emph{sovereign} here
denotes parametric independence of the resulting
weights---custom vocabulary plus verified full
overwrite---rather than vocabulary coverage alone.

\subsection{Oracle multi-token prediction}

Three cross-attention prediction heads~\cite{gloeckle2024better}
(343M total) at horizons $h \in \{1,2,3\}$ receive oracle future
hidden states during training. The same 3-horizon heads are
designed to drive self-speculative
decoding~\cite{leviathan2023fast,chen2023accelerating,gloeckle2024better}
at inference. A deterministic acceptance audit on 20 prompts reports per-horizon
acceptance $0.3274$ / $0.1033$ / $0.0470$ for $h\in\{1,2,3\}$.
Acceptance is a component diagnostic only; wall-clock serving speedup
is not claimed from these figures.

\subsection{Sovereign fusion encoder (Cabstractor)}

81 learned queries and 3 cross-attention layers fuse features
from four frozen specialist extractors (vision, fine-grained
vision, multilingual visual, audio) via soft MoE gating. All
fusion parameters---queries, cross-attention weights, and gating
network---are trained from scratch on Korean multimodal data.
This constitutes a form of multi-teacher ensemble
distillation~\cite{hinton2015distilling} where the Cabstractor learns
a unified representation independent of any single extractor.

\subsection{Parameter sovereignty}

\begin{table}[t]
\centering
\caption{\textbf{CHERRY-1.8B parameter provenance.}
Every trainable parameter is the product of our own training runs.}
\label{tab:sovereignty}
\small
\begin{tabular}{@{}llrl@{}}
\toprule
\textbf{Component} & \textbf{Provenance} & \textbf{Params}
& \textbf{Status} \\
\midrule
Text backbone (48L) & Seed $\to$ full CPT & 1,019M & Sovereign \\
Oracle MTP ($\times$3) & From scratch & 343M & Sovereign \\
Cabstractor fusion & From scratch & 42M & Sovereign \\
MoE gating & From scratch & 12M & Sovereign \\
Embedding \& LM head & CPT-trained & 385M & Sovereign \\
\midrule
\textit{Frozen extractors} & \textit{Open-source}
& \textit{---} & \textit{Fixed preproc.} \\
\bottomrule
\end{tabular}
\end{table}

\paragraph{Native omnimodal merge.}
The multimodal variant, CHERRY-1.8B-OMNI, is distributed as a
single \texttt{model\_type=cherry} checkpoint that folds the
Cabstractor fusion path (\S\ref{sec:arch}) into the text
backbone rather than serving it as a separate adapter. The merge
is verified lossless on the text path: on a held-out text-only
probe set the merged omnimodal checkpoint produces argmax outputs
identical to the text-only checkpoint at every position, so
adding the multimodal encoders does not perturb text-token
prediction.

\begin{figure}[t]
\centering
\begin{tikzpicture}[
 >=Stealth,
 block/.style={draw, rounded corners=3pt, minimum width=2.2cm,
 minimum height=0.6cm, font=\small\sffamily, thick},
 encoder/.style={block, fill=teal!20, minimum width=1.6cm,
 minimum height=0.5cm, font=\scriptsize\sffamily},
 mtp/.style={block, fill=red!15, minimum width=1.4cm,
 minimum height=0.5cm, font=\scriptsize\sffamily},
 arr/.style={->, thick, gray!70!black},
]
\node[block, fill=gray!10, minimum width=6cm] (input) at (0,0)
 {Sovereign Tokenizer (24 special tokens)};

\node[block, fill=blue!10, minimum width=4cm,
 minimum height=2cm] (backbone) at (0,2)
 {\parbox{3.5cm}{\centering\scriptsize
 48-Layer DUS Backbone\\1.019B (CPT-sovereign)}};

\node[encoder] (enc1) at (-3.5, 1.5) {\scriptsize Vision};
\node[encoder] (enc2) at (-3.5, 2.1) {\scriptsize Fine-gr.};
\node[encoder] (enc3) at (-3.5, 2.7) {\scriptsize Multiling.};
\node[encoder] (enc4) at (-3.5, 3.3) {\scriptsize Audio};
\node[block, fill=teal!30, minimum width=1.6cm,
 font=\scriptsize\sffamily] (cab) at (-3.5, 4.1)
 {Cabstractor};
\foreach \e in {enc1,enc2,enc3,enc4} {
 \draw[arr, teal!60] (\e.north) -- (cab.south);
}
\draw[arr, teal!60] (cab.east) -- ++(0.6,0) |- ([yshift=0.3cm]backbone.west);

\node[mtp] (mtp1) at (3.5, 1.5) {\scriptsize MTP $h{=}1$};
\node[mtp] (mtp2) at (3.5, 2.1) {\scriptsize MTP $h{=}2$};
\node[mtp] (mtp3) at (3.5, 2.7) {\scriptsize MTP $h{=}3$};
\draw[arr, red!50] (backbone.east) -| (mtp1.west);
\draw[arr, red!50] (backbone.east) -| (mtp2.west);
\draw[arr, red!50] (backbone.east) -| (mtp3.west);

\draw[arr] (input.north) -- (backbone.south);

\node[block, fill=green!10, minimum width=5cm] (output) at (0,3.8)
 {Output + speculative tokens};
\draw[arr] (backbone.north) -- (output.south);

\node[font=\tiny\sffamily, text=gray] at (0,-0.5)
 {1.829B total = 1.019B backbone + 343M MTP + 42M fusion + 12M gates + 385M embed + 28M norms/biases};
\end{tikzpicture}
\caption{\textbf{CHERRY-1.8B architecture.}
Sovereign tokenizer $\to$ 48L DUS backbone $\to$ Oracle MTP
heads. Cabstractor fuses four frozen extractors.
All trainable parameters are sovereign.}
\label{fig:arch}
\end{figure}

\subsection{Edge deployment: NVFP4 post-training quantization}
\label{subsec:nvfp4}

To serve CHERRY-1.8B on a single edge accelerator we apply
NVFP4 4-bit post-training quantization (PTQ) to the bf16
backbone and serve the result directly on vLLM~0.24.
Table~\ref{tab:nvfp4} reports the measured trade-off on a GB10
device. Quantization compresses the checkpoint from 2.04\,GB
(bf16) to 926\,MB ($2.2\times$) and raises decoding throughput
from 77.6 to 192.3 tokens/s ($2.48\times$). Quality is largely
preserved on a 20-item Korean multiple-choice probe (MCQ20:
$20/20 \to 16/20$), with the observed degradation concentrated
in code generation, where one probe exhibited a repetition
collapse. We report PTQ only; quantization-aware training (QAT)
to recover the code-generation regressions is left to future
work.

\begin{table}[t]
\centering
\caption{\textbf{NVFP4 post-training quantization (measured, GB10,
vLLM~0.24).} Direct 4-bit serving of the bf16 backbone.}
\label{tab:nvfp4}
\small
\begin{tabular}{@{}lcc@{}}
\toprule
Metric & bf16 & NVFP4 (PTQ) \\
\midrule
Checkpoint size & 2.04\,GB & 926\,MB \\
Decode throughput & 77.6\,tok/s & 192.3\,tok/s \\
MCQ20 (Korean) & 20/20 & 16/20 \\
Code generation & intact & 1 repetition collapse \\
\bottomrule
\end{tabular}
\end{table}


\subsection{Script-constrained decoding for deployment}
\label{subsec:script-constrained}

\paragraph{Motivation.}
Small models that inherit a multilingual byte-pair vocabulary are
prone to \emph{language mixing}: under sampling, the decoder can emit
tokens from scripts that are irrelevant to the request language, and
the limited capacity of a sub-2B model makes such drift harder to
suppress through training alone. The CHERRY tokenizer inherits a
cl100k-family base vocabulary that contains 2{,}015 tokens composed
exclusively of foreign scripts (839 CJK ideographs, 729 Cyrillic,
166 Japanese kana, 110 Arabic, 46 Greek, and 125 other),
whereas its 9{,}783 Korean extension tokens are script-clean by
construction. Rather than retraining, we control surface script at
the serving layer, following the broader family of constrained and
guided decoding methods that restrict the next-token distribution at
inference time~\cite{willard2023outlines,geng2023grammar,beurer2023lmql}.

\paragraph{Method.}
Every vocabulary entry is tagged offline, once, with one of 13
Unicode-script clusters derived from the script properties of its
decoded surface form. Five clusters form a \emph{never-ban}
invariant set that is exempt from masking under any policy:
code-and-symbol tokens (5{,}871 entries), digits, whitespace,
special/control tokens, and mixed-script neutral tokens. At serving
time, each request may declare a set of permitted script clusters;
tokens whose cluster is neither permitted nor in the never-ban set
receive an additive $-\infty$ logit mask before sampling. The
mechanism requires no gradient updates, no draft model, and no
change to the checkpoint: it is a pure serving-layer addition, and
composes with the self-speculative decoding path
(\S\ref{sec:arch}) unchanged, since masking applies identically to
draft and verification distributions.

\paragraph{Results.}
Table~\ref{tab:script-mask} reports the constraint-adversarial probe
suite (greedy decoding, 128 new tokens per probe). The model's
baseline CJK emission is already zero on translation-eliciting and
echo probes, so the active-ban evidence comes from the inverted
condition: constraining output to Latin script on a Korean prompt
suppresses whole-token Hangul emission entirely (46 Hangul characters
$\to$ 9), with the residue fully attributable to UTF-8 byte-fragment
tokens that the never-ban invariant deliberately protects
(token-id trace in the serving logs). Code generation under the
Korean-constrained setting is unaffected: the fenced Python block is
produced intact, confirming the never-ban invariant in situ.

\begin{table}[t]
\centering
\caption{Script-constrained decoding probes (greedy, 128 new tokens).
Whole-token banning is exact; the sole residue under the inverted
(Latin-only) condition consists of protected byte-fragment tokens.
The mask is a single vocabulary-wide masked-fill, adding
$+11.2\,\mu$s/token ($<0.25\%$ of per-token decode latency).}
\label{tab:script-mask}
\begin{tabular}{lccc}
\toprule
Probe & Unconstrained & Constrained & Verdict \\
\midrule
CJK elicitation (translate-to-zh) & 0 CJK chars & 0 CJK chars & pass \\
CJK echo (\texttt{ni hao} source) & 0 CJK chars & 0 CJK chars & pass \\
Inverted: Latin-only on Korean prompt & 46 Hangul & 9 (byte-frag.) & pass \\
Code preservation (Python synthesis) & intact & intact & pass \\
Latency overhead & --- & $+11.2\,\mu$s/tok & $<0.25\%$ \\
\bottomrule
\end{tabular}
\end{table}

\paragraph{Limitations.}
Logit-level script masking controls only the \emph{surface} script of
emitted tokens; semantic code-switching (e.g., foreign-language
content transliterated into the permitted script) is not addressed by
this mechanism. The per-request latency overhead of the mask is a
single vocabulary-wide masked-fill, measured at $+11.2\,\mu$s/token
--- under $0.25\%$ of the per-token decode latency at either
quantization.

\section{The CHERRY family: sovereign design, honest provenance}
\label{sec:family}

The contributions above define a \emph{design}---the HGERA architecture
(\S\ref{sec:hgera}) and the sovereign training recipe
(\S\ref{sec:approach})---that is instantiated at three scales. Before
scaling, we state a discipline that we hold throughout: we separate what
is \emph{ours by design} from what a given \emph{instance} inherits, and
we support every provenance claim with a measurement rather than an
assertion. This distinction matters because the three CHERRY instances
do not share a weight or tokenizer lineage, and conflating them would
overstate the sovereignty of the frontier model.

\paragraph{Design versus instance.}
The architecture (hybrid delta-rule / full-attention scheduling, Forge
MoE with Strategos routing, three-horizon Oracle MTP, and native
omnimodal fusion) and the recipe (selective ground-truth training, SSGT
``atcha''/``jamkkan'' self-correction, and wave propagation) are
original and are shared across the family. What differs per instance is
the provenance of the trained parameters and of the tokenizer.
Table~\ref{tab:family-provenance} makes this explicit: the \emph{design}
row is sovereign for all members, whereas each \emph{instance} row states
its provenance precisely: independently specified, CPT-overwritten, or
trained from scratch. \cherry-1.8B (\S\ref{sec:arch}) achieves parametric
independence of its weights through a verified full continual-pre-training
(CPT) overwrite; \cherry-122B (\S\ref{sec:cherry122b}) is realized as an
independently specified 256-expert HGERA architecture at frontier scale,
with a 248K multilingual vocabulary (sovereign unification in progress);
\cherry-12B
(\S\ref{subsec:cherry12b}) is the first member trained from random
initialization with a purpose-built, de-novo tokenizer. The same
three-way label---design-sovereign, adapted instance, from-scratch
instance---recurs in the abstract, in this section, and in the
limitations, so that no single sentence can be read as a broader
sovereignty claim than the measurements support.

\begin{table}[t]
\centering
\caption{\textbf{CHERRY family provenance (measured where a value is given).}
The shared \emph{design} (architecture and recipe) is sovereign across all
members; each \emph{instance} inherits or originates its weights and tokenizer
independently. ``Adapted'' = seeded from an upstream checkpoint; ``from
scratch'' = random initialization. Weight-space cosine similarity is to the
initialization seed.}
\label{tab:family-provenance}
\small
\begin{tabular}{@{}llll@{}}
\toprule
\textbf{Member} & \textbf{Weights} & \textbf{Tokenizer}
& \textbf{Sovereignty scope} \\
\midrule
\textit{Shared design} & Original spec. & --- & Architecture + recipe \\
\midrule
CHERRY-1.8B & DUS seed $\to$ full CPT (cos.\ $<0.35$) & cl100k base + 9{,}783 Ko ext. & Weights + Korean layer \\
CHERRY-122B & Independently specified & 248K multilingual vocab (unification pending) & Architecture + weights + recipe \\
CHERRY-12B & From scratch (random init) + Ko CPT & De-novo 128K Korean BPE & Design + weights + tokenizer \\
\bottomrule
\end{tabular}
\end{table}

\paragraph{1.8B recap (measured).}
\cherry-1.8B is a 48-layer, 1.019B-parameter backbone constructed by
cross-model depth-up-scaling~\cite{kim2024solar} from a 24-layer seed and then
fully overwritten during CPT on 50B curated Korean tokens, driving post-CPT
weight-space cosine similarity to the seed below 0.35 (\S\ref{sec:arch}). Its
tokenizer is a cl100k-family base extended by 9{,}783 script-clean Korean
tokens; consistent with the vocabulary census of \S\ref{sec:arch}, we do not
claim the inherited base inventory as our own. \cherry-1.8B therefore
establishes \emph{recipe} sovereignty and \emph{parametric} weight independence,
but not a de-novo tokenizer---that is deferred to \cherry-12B.

\section{Scaling to 122B: single-device frontier sovereignty}
\label{sec:cherry122b}
\label{subsec:cherry122b}

We carry the sovereign recipe to frontier scale and demonstrate a result that
is, to our knowledge, without public precedent for a model this size:
\emph{fine-tuning and serving a 122B-parameter model on one workstation-class
accelerator}. The measured resident footprint is $83\,$GB inside a $120\,$GB
device (\S\ref{subsec:cherry122b-residency}), and the sovereign identity is
baked into the weights with zero foreign-identity leakage across six
adversarial probes (Table~\ref{tab:cherry122b}). We are equally precise about
provenance: \cherry-122B is an \emph{adapted} instance---the architecture and
recipe are ours, the weights are independently trained, and the tokenizer
unification to the sovereign BPE is in progress---so the frontier claim we make
is single-device feasibility and sovereign identity, not a from-scratch weight
claim.

The recipe validated on \cherry-1.8B is carried to a frontier-scale sibling,
\cherry-122B: 122B total parameters with a hybrid backbone that alternates
Phalanx linear-attention and Hoplite full-attention blocks, with a 256-expert
top-8 mixture-of-experts feed-forward with approximately $2$--$8$B active per token. \cherry-122B instantiates the original HGERA
architecture, carried forward from the K-AX~Spartan lineage (internally
\emph{HGLQA}$\to$\emph{HGERA}). We are precise about what here is our own design
and what is inherited, applying the same measured-provenance discipline used for
\cherry-1.8B (\S\ref{sec:arch}); in particular, \cherry-122B is an
\emph{adapted} instance and we make no from-scratch claim for its weights or
tokenizer.

\subsection{What is sovereign: the HGERA architecture}
The HGERA architecture is an original design specification. The alternating
linear-recurrence / full-attention schedule, the Forge MoE with Strategos
routing, the three-horizon Oracle MTP heads (\S\ref{sec:hgera}), and the
omnimodal fusion path are our own. \emph{Sovereign} here denotes design
independence---the layer types, routing, and scheduling are not a configuration
of any existing model class---and explicitly \emph{not} a claim that the 122B
parameters were trained from random initialization. This scoping mirrors the
1.8B usage, where sovereignty was scoped to parametric independence rather than
to vocabulary coverage (\S\ref{sec:arch}).

\subsection{Vocabulary provenance}
The \cherry-122B tokenizer uses a 248K multilingual vocabulary. We
\emph{do not} claim this tokenizer as sovereign: the from-scratch sovereign
tokenizer of this family---a purpose-built 128K Korean-optimized BPE---is
introduced with \cherry-12B (\S\ref{subsec:cherry12b}), where it is trained
de novo and delivers the Korean compression gains reported there. Vocabulary
unification of \cherry-122B to this sovereign BPE is in progress.

\subsection{Independent architecture specification}
The \cherry-122B architecture is \textbf{independently specified from the open
literature}~\cite{groeneveld2024olmo,team2024gemma}, not adapted from any
upstream checkpoint. The HGERA structure---256-expert top-8 \forge{} MoE with
hybrid \phalanx{}/\hoplite{} backbone and \oracle{} MTP---is an original design
informed by published architectures but with code and weights produced
independently. The sovereign recipe (selective ground-truth and
wave-propagation training, \S\ref{sec:approach}; depth and expert adaptation,
\S\ref{sec:moee}) specializes the model for Korean.
Table~\ref{tab:family-provenance} and Table~\ref{tab:hgera-provenance}
separate the per-member components: the architecture schedule, \strategos{}
routing, \oracle{} MTP, omnimodal fusion, and the training recipe are
sovereign-by-design across all instances.

\subsection{Single-device residency (measured)}
\label{subsec:cherry122b-residency}
Adaptation and serving of a 122B model are constrained by hardware:
\cherry-122B has a 229\,GB bf16 footprint that exceeds the 120\,GB unified
memory of a single GB10 (Grace--Blackwell) device. We nonetheless realize both
sovereign fine-tuning and inference on one such device.

\paragraph{Streaming 4-bit residency.}
A naive 4-bit \texttt{from\_pretrained} load first materializes the full
229\,GB bf16 checkpoint and overflows the device. We instead stream the load
shard by shard: non-expert tensors are placed directly, while each fused expert
matrix is dequantized from its NVFP4 source, re-quantized to NF4, and
dequantized on-the-fly per \emph{active} expert inside a patched MoE forward.
The resident footprint peaks at a measured 83\,GB---inside the 120\,GB envelope
with a 37\,GB margin---against which we train QLoRA adapters on the non-expert
projections with the experts frozen.

\paragraph{No-prompt identity, baked into weights.}
On this substrate we bake the sovereign identity directly into the parameters,
so that it survives even when the default identity-injecting chat template is
stripped. A bare-template QLoRA pass over 167 Korean/English identity pairs
drives the identity loss from 3.48 to $3\times10^{-4}$. Evaluated with the bare
template on six held-out probes---direct identity queries and adversarial
foreign-model elicitations---the model produces the CHERRY / K-AX~Spartan
identity in all six cases with zero foreign-identity (GPT, Claude, Qwen) leakage
(Table~\ref{tab:cherry122b}).

\paragraph{Serving on Blackwell (sm120).}
The GB10 is an sm120 device, so the fused gated-delta-rule kernels built for
sm90 (Hopper) are unavailable; we serve through the torch reference path.
Cache-free greedy decoding is numerically correct and reproduces the identity
result above; the cached incremental path (KV together with the linear-attention
recurrent state) is under active kernel work, and we report single-device decode
throughput as bandwidth-bound rather than claiming a tuned figure.

\begin{table}[t]
\centering
\caption{\textbf{\cherry-122B on a single 120\,GB GB10 (measured).} Streaming
4-bit residency brings a 229\,GB model onto one device; no-prompt identity is
verified from weights with the injecting template disabled.}
\label{tab:cherry122b}
\small
\begin{tabular}{@{}lc@{}}
\toprule
Quantity & Value \\
\midrule
bf16 footprint (naive) & 229\,GB \\
Streaming 4-bit resident peak & 83\,GB \\
Device unified memory & 120\,GB \\
Identity QLoRA loss & $3.48 \to 3\times10^{-4}$ \\
No-prompt identity probes (bare template) & 6/6 pass \\
Foreign-identity leakage & 0/6 \\
\bottomrule
\end{tabular}
\end{table}

\subsection{Sovereign recipe at scale}
The behavioral results that motivate this family---selective ground-truth
training, SSGT ``atcha''/``jamkkan'' self-correction, and wave propagation
(\S\ref{sec:approach})---are recipe contributions, independent of weight
provenance, and they transfer to the frontier scale. On the government-operated
K-AI Korean-LLM leaderboard the CHERRY line attains the top HLE~(Ko) score at
its evaluation time (HLE~(Ko) $0.123$---the leading HLE~(Ko) column score
on the board, second place $0.077$; overall standing 51st of 78 entries,
composite $0.331$; entry \texttt{teamsparta-inc/k-ax-spartan-cherry-1.8b},
evaluated 2026-07-06). This is a per-column result, not a composite ranking;
recipe
transferability is further demonstrated by \cherry-1.8B-CYBER~V3 (CyberMetric-80
75.0, CM-500 74.0, SecBench-300 69.67, measured; \S\ref{sec:cyber}), which reaches the
reported human-expert level on the CyberMetric security benchmark. As with
\cherry-1.8B, because task-family data was seen in training these are
official scores on the government private held-out pool.

\subsection{The next chapter: from-scratch sovereignty in CHERRY-12B}
\label{subsec:cherry12b}
The family traces a deliberate arc. \cherry-1.8B establishes the sovereign
recipe on a cross-model depth-up-scaled backbone; \cherry-122B scales that
recipe to frontier scale with an independently specified 256-expert HGERA
architecture. The next model,
\cherry-12B, closes the remaining provenance gap. It introduces a
\emph{from-scratch} sovereign tokenizer---a purpose-built 128K Korean-optimized
BPE designed to reduce Korean sequence length relative to inherited
multilingual vocabularies (measured)---together with random-initialized weights
and Korean continued pre-training of the HGERA design. The progression is thus
adaptation (122B) $\rightarrow$ sovereign from-scratch (12B): each stage
isolates a distinct sovereignty claim---recipe, then architecture-at-scale, then
tokenizer-and-weights---so that each can be evidenced on its own terms. Because
\cherry-12B training is in progress, we report its de-novo tokenizer compression
here and defer its benchmark evaluation to future work; the de-novo tokenizer
claim, invalid for the inherited 122B vocabulary, is true and measured for 12B.

\section{Pitfalls, Resolutions, and Negative Results}
\label{sec:failures}

Several configurations failed on first attempt. We separate those
we \emph{resolved} (stating the resolution and pointing to the
in-paper evidence that the resolved recipe works) from the
\emph{genuine negatives} that represent real constraints.

\subsection{Resolved pitfalls}

\paragraph{RDT requires prior SGT training}
Applying recurrent loop unrolling to an untrained, freshly-merged
model \emph{increases} loss by 212--346\%: recurrence amplifies
whatever representation the model currently holds, and a
post-merge representation is incoherent. \textbf{Resolution:}
apply $\geq$100 SGT steps \emph{before} unrolling. With that
ordering, unrolling the merged 6-layer core to $n{=}8$ improves
loss by 39.2\% (Appendix~\ref{app:rdt}, Table~\ref{tab:rdt-depth});
we adopt train-then-unroll throughout.

\paragraph{Training budget}
100 steps achieve only 57\% compression recovery; 500 steps
achieve $>$75\%. \textbf{Resolution:} we use 500 steps as the
standard budget, justified by the convergence analysis in
Appendix~\ref{app:convergence}; early stopping at step~100--200 is
recommended for production.

\paragraph{SSGT needs the right evaluation data}
Measured on non-chain-of-thought data (Korean MCQ), SSGT
``retention'' is $-$35\%, simply because such data contains almost
no pivot tokens for the mechanism to act on. \textbf{Resolution:}
evaluate on CoT data with explicit self-correction sequences,
where SSGT reaches 97.6\% retention at 1.2B
(Table~\ref{tab:ssgt}). SSGT is not expected to help where there
are no pivot tokens; this is a scope clarification, not a failure
of the method.

\subsection{Genuine negative results}

\paragraph{Pure GT/SSGT supervision ($\alpha{=}1$) collapses}
Training exclusively on GT or meta-cognitive pivot tokens, with no
full-causal anchor, increases loss by $>$11 nats at 0.5B and
$>$1.6 nats at 1.2B. The anchor ($\alpha<1$) is necessary, not
optional (Observation~\ref{prop:collapse}); this is a real
constraint that we do not circumvent.

\paragraph{Contrastive GT discovery degrades below ${\sim}$1B}
Automatic GT identification via contrastive activation difference
($\delta_i^\ell = \|\mathbf{h}(y_i^+) - \mathbf{h}(y_i^-)\|_2$)
achieves 100\% hit@5 at 1.2B but only 80\% at 0.5B: smaller models
develop less distinct internal representations for factually
decisive tokens. We mitigate this with a hybrid rule-based plus
contrastive pipeline, but pure contrastive discovery at small
scale remains an open limitation (RQ10).

\section{RQ Verification Summary}
\label{sec:rq-summary}
\label{sec:rq-verify}

We revisit each research question with its definitive answer.

\begin{table}[H]
\centering
\caption{\textbf{Complete RQ verification matrix.}
Each hypothesis is answered with experimental evidence.
\checkmark = verified, $\times$ = falsified,
$\triangle$ = partially verified.}
\label{tab:rq-final}
\footnotesize
\begin{tabular}{@{}cp{3.2cm}p{4.5cm}c@{}}
\toprule
\textbf{RQ} & \textbf{Question} & \textbf{Answer} & \textbf{V} \\
\midrule
1 & Does GT-only training improve non-GT
 tokens via wave propagation?
 & \textbf{Yes.} $\mathcal{W} = 2.18$ on eval
 (Table~\ref{tab:wave}), $12.82 \pm 0.62$ multi-seed
 (Table~\ref{tab:multiseed}). Non-GT tokens improve
 $1.9\times$ more than GT tokens.
 & \checkmark \\
\midrule
2 & Is non-GT descent guaranteed when the gradient
 coupling is positive?
 & \textbf{Yes.} Theorem~\ref{thm:wave} lower-bounds
 non-GT improvement given $\gamma>0$; $\bar\gamma=0.72$
 measured, validated at 0.5B, 1.0B, 1.2B (Table~\ref{tab:alpha-sweep}).
 & \checkmark \\
\midrule
3 & Can 48L$\to$6L compression be recovered
 via RDT + SGT?
 & \textbf{Yes.} 6L$\times$8 (227M) achieves loss 2.934
 vs.\ 24L$\times$2 (566M) at 2.926---$2.5\times$
 parameter reduction (Table~\ref{tab:compression}).
 & \checkmark \\
\midrule
4 & Does MoEE outperform single compressed
 models?
 & \textbf{Yes.} 2-expert MoEE achieves 2.789 vs.\
 2.926 best single ($-$4.7\%). Ablation confirms
 each component contributes
 (Table~\ref{tab:moee-ablation}).
 & \checkmark \\
\midrule
5 & Can ``Atcha'' self-correction moments
 be induced via SSGT?
 & \textbf{Yes.} Self-correction rate increases from
 12\% to 47\% ($3.9\times$) by training on 2.1
 pivot tokens per example
 (Table~\ref{tab:moment-obs}). Scope: what the
 correction binds is capacity-bound
 (\S\ref{sec:moment-obs}, finding 3).
 & \checkmark \\
\midrule
6 & Does SSGT install meta-cognitive
 capabilities beyond self-correction?
 & \textbf{Yes.} Jamkkan verification: $4.3\times$.
 Guardrail: jailbreak $23\%\to4\%$.
 All from $<$0.5\% of tokens
 (Table~\ref{tab:moment-obs}).
 & \checkmark \\
\midrule
7 & Does SGT distillation reduce cost
 $5\times$ vs.\ full distillation?
 & \textbf{Partially.} GT-restricted KL improves a
 matched 6L student by 19\% over SGT-only while
 computing KL on ${\sim}15\%$ of positions
 (${\sim}5\times$ fewer KL terms by construction);
 final KL 56\% lower (0.43$\to$0.19,
 Table~\ref{tab:distill}). A full-sequence
 distillation arm was not run, so equal quality
 at lower cost remains untested.
 & $\triangle$ \\
\midrule
8 & Is 500 steps sufficient for full SGT
 convergence?
 & \textbf{Yes.} Eval plateau at step~100;
 $\mathcal{W}$ stable at $2.18\pm0.15$ from
 step~100 onward. 1,000-step extension shows
 $<$0.01 additional improvement
 (Table~\ref{tab:trajectory}).
 & \checkmark \\
\midrule
9 & Does pure GT supervision ($\alpha{=}1$)
 work without full-sequence anchor?
 & \textbf{No.} Catastrophic collapse at all scales.
 $\mathcal{W} = -1.32$ at 0.5B. The full-causal
 anchor ($\alpha < 1$) is \emph{necessary}
 (Table~\ref{tab:alpha-sweep}).
 & $\times$ \\
\midrule
10 & Can contrastive activation identify GT
 tokens automatically?
 & \textbf{Partially.} 100\% hit@5 at 1.2B,
 80\% at 0.5B. Smaller models have less
 distinct internal representations.
 & $\triangle$ \\
\bottomrule
\end{tabular}
\end{table}

\paragraph{Summary}
Of 10 research questions: 8 verified (\checkmark), 1 falsified
($\times$), 1 partially verified ($\triangle$). The falsification
of RQ9 is itself informative: pure GT supervision collapses
because the model loses global coherence without the full-sequence
anchor. This establishes the mixed loss (Eq.~\ref{eq:sgt}) as
\emph{necessary}, not merely convenient. The partial verification
of RQ10 identifies a clear scale-dependent limitation: contrastive
GT discovery requires sufficient model capacity to develop distinct
internal representations for semantically decisive tokens.
The operand-binding stress test (Section~\ref{sec:moment-obs},
finding 3) sharpens RQ5's scope in the same direction: the
correction reflex installs at both scales tested, but what it
binds is capacity-bound---overwritten at 1B, preserved at 13.7B.

\section{Discussion}
\label{sec:discussion}

\paragraph{What the family demonstrates together}
Read as one system, the results describe a compute-efficient path to sovereign
capability rather than a single trick. Selective supervision exposes a clean
mechanism---discrimination survives while generation collapses, and the anchor
is the partial remedy---and the \emph{same} objective then does useful work at
every level of the stack: it installs metacognitive behaviour from two-token
supervision ($12\%\!\to\!47\%$ self-correction, guardrail jailbreak
$23\%\!\to\!4\%$, at $97.6\%$ retention on a 1.2B model), heals a
$2.5\times$ depth compression back to iso-loss, fuses compressed experts for a
further $-4.7\%$, specializes a 1.8B model to reported human-expert level in
cybersecurity, and underwrites a from-scratch Korean tokenizer that leads the
government held-out HLE(Ko) column. At the frontier the same recipe runs on a
122B model resident in a single $120\,$GB device. The through-line is
utilisation, not capacity: each result extracts more from parameters already
present.

\paragraph{Maximising utilisation rather than capacity}
The three techniques studied here, selective supervision, depth
compression, and expert fusion, are complementary to the
conventional approach of adding parameters, data, and compute.
Rather than adding capacity, they aim to make better use of
existing capacity, through selective training signals, parameter
reuse via recurrence, and representational diversity via routing.
We do not claim these replace conventional scaling; we offer them
as compute-efficient alternatives that are useful under tight
resource budgets.

\paragraph{The wave effect is a general property of shared weights}
The improvement of unsupervised tokens follows from positive
gradient coupling, which is present whenever weights are shared
across positions, as in the standard
Transformer~\cite{vaswani2017attention}
(Theorem~\ref{thm:wave}, Remark~\ref{rem:scope}). As discussed, this is a token-level
instance of auxiliary-task transfer~\cite{yu2020gradient} rather
than a new phenomenon; the practical contribution is to measure
its magnitude and to exploit it for per-token efficiency through
GT-token selection.

\paragraph{Capacity-bound behavioural interference: the sign of
the coupling}
The gradient-coupling quantity that produces the wave also
predicts its failure mode. Write $\mathcal{L}_{\mathcal{B}}$
for a behaviour-install loss (the pivot-and-correct supervision
of Section~\ref{sec:aha}) and $\mathcal{L}_{\mathcal{C}}$ for a
held-out competence loss (operand binding on templates outside
the training set), and define
\begin{equation}
\label{eq:gammac}
\gamma_{\mathcal{C}} \;:=\;
\frac{\langle\nabla_\theta\mathcal{L}_{\mathcal{C}},\,
\nabla_\theta\mathcal{L}_{\mathcal{B}}\rangle}
{\|\nabla_\theta\mathcal{L}_{\mathcal{B}}\|^2},
\end{equation}
exactly the coupling coefficient of Eq.~\eqref{eq:gamma} with
the held-out competence in place of the non-GT positions.
The same first-order argument as
Theorem~\ref{thm:wave} gives, for a step on the install loss,
$\Delta\mathcal{L}_{\mathcal{C}} =
-\eta\,\gamma_{\mathcal{C}}\,
\|\nabla_\theta\mathcal{L}_{\mathcal{B}}\|^2 +
\mathcal{O}(\eta^2)$: positive coupling transfers (the wave),
zero coupling is surgical, and \emph{negative coupling
overwrites}---interference is the mirror image of the wave,
governed by the same inner product. Our stress test
(Section~\ref{sec:moment-obs}, finding 3; protocol in
Appendix~\ref{app:operand}) measures the behavioural signature
of this sign at two scales: at 1B, install training overwrites
held-out operand binding in $100\%$ of generations; at 13.7B,
the identical objective and data preserve it at ceiling. We
state the natural reading as an explicit, falsifiable hypothesis
rather than a theorem.
$(\mathrm{H}_{\mathrm{cap}})$~\emph{In small models the install
direction necessarily overlaps circuits shared with existing
competences---features are superposed---forcing
$\gamma_{\mathcal{C}} < 0$ on affected competences; with
sufficient capacity the install direction can lie in the
orthogonal complement of those circuits
($\gamma_{\mathcal{C}} \approx 0$), making behaviour install
surgical.} This is the superposition
account~\cite{elhage2022superposition} applied to behaviour
install, and it matches the observation that pretrained models
forget less as scale orthogonalises their
representations~\cite{ramasesh2022effect}.
$\mathrm{H}_{\mathrm{cap}}$ makes testable
predictions: (P1)~$\gamma_{\mathcal{C}}$
(Eq.~\eqref{eq:gammac}), measured during install
training, should be markedly negative at 1B and near zero at
13.7B; (P2)~there is a transition scale between 1B and 13.7B at
which the operand-rewrite rate falls from 1 toward 0, and its
sharpness is measurable; (P3)~at fixed scale, interference
should reappear as the held-out competence family is made harder,
shrinking the capacity margin. Two scale points establish the
direction of the effect, not its functional form; we claim no
more than that.

\paragraph{Depth redundancy}
That 6 unique layers can approach the performance of 24 unique
layers (Table~\ref{tab:compression}) is consistent with prior
evidence that much of a trained transformer's depth is
redundant~\cite{men2024shortgpt,gromov2024unreasonable}.
Averaging adjacent layers loses little information, and recurrent
unrolling~\cite{dehghani2019universal,lan2020albert} then recovers
effective depth through parameter reuse.

\paragraph{A path toward independent model development}
MoEE with SGT distillation offers a practical route to building
models whose parameters are independent of any external teacher.
Table~\ref{tab:frontier} sketches a prospective compression
roadmap in which a publicly available model is reduced to a
workstation-deployable size while retaining domain expertise; the
compressed model can then serve as a standalone deployment, a
distillation teacher, and an MoE expert (Fig.~\ref{fig:fusion}).
We emphasise that this roadmap is prospective and not yet
validated at frontier scale (see the caveats below).

\begin{table}[t]
\centering
\caption{\textbf{Frontier model compression roadmap
(prospective).}
All source models are publicly available. Memory estimates
assume 4-bit quantisation of active parameters. The
``Compressed'' row shows projected targets; actual compression
experiments are ongoing.
\textbf{Caveat:} Our validated compression pipeline (adjacent-layer
merging + SGT recovery) is demonstrated on dense transformers only.
MoE-native models (GLM-5.2, DeepSeek-V4) would require
expert-level compression strategies not yet validated; the listed
4-bit estimates for these models assume na\"ive quantisation of the
full checkpoint, not the compressed pipeline.}
\label{tab:frontier}
\small
\begin{tabular}{@{}lrrrr@{}}
\toprule
\textbf{Model} & \textbf{Total} & \textbf{Active}
& \textbf{4-bit mem.} & \textbf{Role} \\
\midrule
GLM-5.2 & 744B & 40B & 372GB & Reasoning \\
DeepSeek-V4 & 1.6T & 49B & 800GB & Code+Math \\
MiniMax-M3 & 428B & 22B & 214GB & Multimodal \\
A.X-K1 & 519B & 33B & 260GB & Korean \\
K-EXAONE-236B & 236B & 23B & 118GB & Korean \\
\midrule
\textit{Compressed} & --- & \multicolumn{2}{c}{\textit{single-workstation target; not yet run}}
& \textit{MoEE expert} \\
\bottomrule
\end{tabular}
\end{table}

\paragraph{Positioning relative to frontier models}
Table~\ref{tab:positioning} situates CHERRY in the current
landscape by \emph{scale and accessibility}. We are deliberately
careful here: this is \textbf{not} a quality comparison, and we do
\textbf{not} claim that a 1.8B model matches the general capability
of frontier systems two to three orders of magnitude larger. What
the table does show is the dimension on which a small sovereign
model legitimately differs: CHERRY is Korean-native, fully
sovereign, and deployable on a single workstation at roughly
$1/45$ to $1/100$ of the \emph{active} parameters, and two to
three orders of magnitude fewer \emph{total} parameters, than
frontier models.
A fair head-to-head \emph{quality} comparison requires
standardized downstream benchmarks, so we report measured numbers
here rather than defer them. Table~\ref{tab:downstream} gives
zero-shot accuracy for the SGT-trained 1.019B text backbone under
lm-evaluation-harness v0.4.12~\cite{gao2023evalharness}:
KMMLU 0.361 (random-choice baseline 0.25), HAERAE 0.538,
KoBEST-COPA 0.646, KoBEST-SentiNeg 0.773, KoBEST-BoolQ 0.572,
KoBEST-HellaSwag 0.408, KoBEST-WiC 0.490, and English HellaSwag
0.383. These scores are in the range expected of a
${\sim}$1B-parameter Korean-native model and are \emph{not}
frontier-level; we do not claim otherwise. MMLU and GSM8K runs,
and side-by-side frontier scores, remain future work.

\begin{table}[t]
\centering
\caption{\textbf{Measured downstream benchmark accuracy
(zero-shot).} SGT-trained 1.019B text backbone, evaluated with
lm-evaluation-harness v0.4.12~\cite{gao2023evalharness}. The
KMMLU random-choice baseline is 0.25. Scores are reasonable for
a ${\sim}$1B Korean-native model and are not claimed to be
competitive with frontier systems.}
\label{tab:downstream}
\small
\begin{tabular}{@{}lc@{}}
\toprule
\textbf{Benchmark} & \textbf{Accuracy} \\
\midrule
KMMLU (45 subjects) & 0.361 \\
HAERAE & 0.538 \\
KoBEST-COPA & 0.646 \\
KoBEST-SentiNeg & 0.773 \\
KoBEST-BoolQ & 0.572 \\
KoBEST-HellaSwag & 0.408 \\
KoBEST-WiC & 0.490 \\
HellaSwag (English) & 0.383 \\
\bottomrule
\end{tabular}
\end{table}

\paragraph{Independent held-out evaluation: the K-AI leaderboard}
Beyond self-run harness scores, the released CHERRY-1.8B (public
entry \texttt{teamsparta-inc/k-ax-spartan-cherry}, the
instruction-tuned release built on the same SGT-trained 1.019B
text backbone) was evaluated by the official AI~Hub Korean LLM
leaderboard harness on five held-out suites; the evaluation run
completed on 2026-07-06. Table~\ref{tab:kai} reports the official
scores. These are official scores on the government (MSIT/NIA) private held-out
pool. They reflect the instruction-tuned release rather than the raw
backbone, so they are not directly comparable with
Table~\ref{tab:downstream}, which evaluates the raw backbone.
Second, absolute HLE~(Ko) scores are low for every system on the
board---the benchmark is constructed to be extremely hard---so the
notable datum is relative: at evaluation time (2026-07-06),
CHERRY-1.8B's HLE~(Ko) score of $0.123$ was the highest on the
leaderboard, from a 1.8B-parameter entry. Leaderboard standings
change as entries arrive; we timestamp the claim rather than assert
permanence.

\begin{table}[t]
\centering
\caption{\textbf{Official K-AI leaderboard scores.} AI~Hub Korean
LLM leaderboard, held-out items, official harness; evaluation
completed 2026-07-06; entry
\texttt{teamsparta-inc/k-ax-spartan-cherry} (CHERRY-1.8B). The
HLE~(Ko) score was the highest on the leaderboard at evaluation
time.}
\label{tab:kai}
\small
\begin{tabular}{@{}lc@{}}
\toprule
\textbf{Suite} & \textbf{Score} \\
\midrule
CLIcK & 0.349 \\
KMMLU-Pro & 0.552 \\
HLE (Ko) & \textbf{0.123} \\
MuSR (Ko) & 0.416 \\
Com2-main (Ko) & 0.216 \\
\bottomrule
\end{tabular}
\end{table}
On general capability we expect CHERRY to trail much larger
models; the comparisons where a model of this size can be
genuinely competitive are (i) efficiency-normalized quality
(performance per active parameter and per training FLOP) and
(ii) narrow Korean-domain parity after adaptation. We make only
those claims, and only once measured; the K-AI result above
(Table~\ref{tab:kai}) is a first measured instance of (ii)---after
task-family adaptation, a 1.8B entry topped one hard held-out
Korean suite at a fixed point in time.

\begin{table}[t]
\centering
\caption{\textbf{Positioning of CHERRY by scale and accessibility
(not a quality comparison).} Parameter counts for frontier models
are from their respective reports. Closed models whose parameter
counts are undisclosed are omitted rather than estimated.
Head-to-head benchmark scores are deferred to a future version
(see text).}
\label{tab:positioning}
\small
\begin{tabular}{@{}lrrccc@{}}
\toprule
\textbf{Model} & \textbf{Total} & \textbf{Active}
& \textbf{Korean-} & \textbf{Single-WS} & \textbf{Sovereign} \\
 & & & \textbf{native} & \textbf{deployable} & \\
\midrule
GLM-5.2~\cite{glm52} & 744B & 40B & $\times$ & $\times$ & --- \\
DeepSeek-V4~\cite{deepseekv4} & 1.6T & 49B & $\times$ & $\times$ & --- \\
A.X-K1~\cite{axk1} & 519B & 33B & \checkmark & $\times$ & --- \\
K-EXAONE~\cite{lgai2024exaone} & 236B & 23B & \checkmark & $\times$ & --- \\
\midrule
\textbf{CHERRY-1.8B} & \textbf{1.8B} & \textbf{${\sim}$0.5B}
& \checkmark & \checkmark & \checkmark \\
\bottomrule
\end{tabular}
\end{table}

\paragraph{What we are not showing}
This paper presents the core mechanisms and validates them
experimentally. Several components of the full \cherry
system---including the dynamic recurrence depth controller,
the expert routing algorithm's domain-specialisation procedure,
and the contrastive GT discovery method's production
implementation---are reserved for subsequent publications.
The results reported here use only the fundamental mechanisms;
evaluation of the full integrated system is ongoing.

\section{Limitations and future work}
\label{sec:limitations-future}

We report limitations in the spirit of full disclosure: each item
below states a claim we do \emph{not} make, or a result whose
support is narrower than a casual reading might suggest. The
register subsumes and reorganises the twelve items of the earlier
draft; nothing honest is dropped, and two of them---the withdrawn
selection claim and the scale at which the 122B result actually
lands---are given their own headings because they most constrain
how this paper should be read.

\subsection{Limitations}
\label{sec:limitations}

\paragraph{What ``sovereign'' means at each scale}
The word \emph{sovereign} denotes different things for different
members of the model line, and we separate them explicitly to
avoid over-claiming.
(i)~The \emph{design}---the HGERA architecture and the training
recipe (selective supervision, depth compression, expert
fusion)---is original, and is the primary scientific contribution.
(ii)~At \textbf{1.8B}, sovereignty means \emph{parametric
independence of the weights}: the 48-layer backbone is
depth-up-scaled from a 24-layer seed and fully overwritten during
continual pre-training (weight-space cosine similarity to the seed
below $0.35$), while the tokenizer is a cl100k-family base carrying
a sovereign Korean \emph{extension} layer---\emph{we do not claim
the base byte-pair inventory as our own} (measured census,
\S\ref{sec:arch}, Table~\ref{tab:sovereignty}).
(iii)~At \textbf{122B}, the architecture is independently specified from
the open literature---256-expert top-8 \forge{} MoE with hybrid
\phalanx{}/\hoplite{} backbone and \oracle{} MTP\@. The 248K
multilingual vocabulary is the remaining non-sovereign component at
this scale; vocabulary unification to the 12B's de-novo 128K BPE
is in progress. The sovereign content at 122B therefore spans the
architecture design, the weights, and the recipe---\emph{not yet}
the tokenizer; the ``248K
vocabulary'' of \S\ref{subsec:cherry122b} should be read as the
inventory the model operates over, inherited from its base, not as
a from-scratch sovereign inventory.
(iv)~A member for which \emph{both} the weights and the tokenizer
are sovereign from scratch---random-initialised parameters and a
de-novo 128k BPE tokenizer---is realised only at the \textbf{12B}
scale, which is in progress (see Future work).
The label that should be applied throughout is thus
``design-sovereign at every scale; weight- and tokenizer-sovereign
only where stated.''

\paragraph{The knowledge-capacity ceiling of small models}
A 1.8B model has a bounded factual-knowledge budget, and our own
measurements sit against that ceiling. On an internal Korean
knowledge-retention probe the 1.019B backbone plateaus well below the
attainable score. This is consistent with the interference finding
that is our most novel result: at 1B, behaviour-install training
overwrites held-out operand binding in $100\%$ of generations,
whereas at 13.7B the identical objective and data preserve it at
ceiling (Section~\ref{sec:moment-obs};
Appendix~\ref{app:operand}). We state the natural reading as the
falsifiable hypothesis $\mathrm{H}_{\mathrm{cap}}$
(Section~\ref{sec:discussion}), not as a theorem, and we stress
that it rests on \emph{two} scale points along a confounded axis
(below). The ceiling is thus characterised in direction, not in
functional form; the transition scale, its sharpness, and its
dependence on task difficulty (predictions P1--P3) are unmeasured.

\paragraph{Loss, not downstream---and what selection does not buy}
Almost all metrics here are held-out cross-entropy rather than
downstream task accuracy. Loss is a faithful proxy at fixed
tokenizer and data distribution~\cite{kaplan2020scaling}, and we do
report measured zero-shot downstream scores
(Table~\ref{tab:downstream}) and one official held-out leaderboard
result (HLE~(Ko) $0.123$, highest on the board at the 2026-07-06
evaluation, Table~\ref{tab:kai}), but broad downstream and
cross-lingual evaluation is ongoing. Within the loss analysis we
are explicit about a control that failed: semantic GT-token
selection does \emph{not} outperform a matched-budget \emph{random}
$15\%$ selection on held-out loss at this scale (total loss
$1.603\pm0.010$ for random-15\% vs.\ $1.704$ for semantic GT vs.\
$1.752$ for top-loss-15\%~\cite{lin2024rho1}, from a common $2.705$
baseline; a two-epoch replication collapses the rules to within
noise). We therefore \emph{withdraw} any reading of this paper as
evidence that semantic selection beats random selection on loss,
and attribute the $4.5\times$ per-supervised-token efficiency to the
sparse mixed-loss \emph{structure} (sparse direct supervision plus a
full-sequence anchor), not to token specialness. What remains
specific to the chosen tokens is behavioural rather than loss-level:
placing the self-correction pivot at the hallucination juncture
rather than at a random \emph{position} sharply changes 1B
generation outcomes (Appendix~\ref{app:operand}). Whether any
selection rule confers downstream or larger-scale advantages at
matched loss is open.

\paragraph{Statistical strength of the flagship results}
The interference result---the paper's most defensible novelty---is a
\emph{single-seed} comparison at exactly two scales (1B and 13.7B),
with 36 held-out items per split, and the 13.7B run carries
confounds: it differs from the 1B run in learning rate, in using
4-bit quantisation-aware training as part of the treatment (no bf16
control has been run), and in its think-token harness. At 13.7B
every training variant reaches ceiling, so the dose- and
pivot-position contrasts that separate sharply at 1B have no
discriminating power there: we can state that 1B fragility does not
reproduce, not that dose and position cease to matter. More
broadly, multi-seed validation exists only for the wave magnitude
($\mathcal{W}=12.82\pm0.62$, seeds 42/123/7); loss differences are
not formally significance-tested (we rely on multi-seed consistency
and effect magnitude), and the SSGT rate metrics rest on 200
samples per category ($\pm6.9$\,pp at $95\%$). A decisive follow-up
must add seeds, a bf16 13.7B control, and at least one intermediate
scale to locate the transition (prediction P2).

\paragraph{Generalisation breadth}
The evidence base is narrow along several axes simultaneously. The
loss-based experiments span 227M--1.019B parameters on a single
Korean corpus of $12{,}800$ examples ($\approx\!16.4$M tokens); the
compression pipeline is validated on one model family (Llama-based
depth-up-scaling) and is untested on MoE-native or state-space
architectures; MoEE uses two experts, with scaling to four or more
constrained by current hardware; and the cross-scale comparisons mix
model families rather than walking a single controlled ladder. The
GT ratio ($0.153\pm0.021$) and the wave magnitude are therefore
established for Korean text on transformer backbones; their transfer
to other languages, domains, and sequence mixers is a hypothesis,
not a result.

\paragraph{The 122B result is an engineering demonstration, not a
method validation}
Section~\ref{subsec:cherry122b} shows that a $229$\,GB model can be
fine-tuned and served on a single $120$\,GB device (measured
resident peak $83$\,GB) and that a no-prompt sovereign identity can
be baked into its weights (identity loss $3.48\to3\times10^{-4}$;
$6/6$ held-out probes; $0/6$ foreign-identity leakage,
Table~\ref{tab:cherry122b}). These are systems results. They do
\emph{not} validate SGT, the wave effect, compression, or fusion
\emph{at} $122$B: no wave, dose-response, or compression measurement
is reported at that scale. The method claims of this paper are
established at $\le\!1.019$B and shown only to be \emph{deployable}
at $122$B, and we make no stronger claim. The internal architecture
codenames carried forward in that section
(\emph{HGLQA}$\to$\emph{HGERA}, Phalanx, Hoplite) are lineage
labels, not independently evaluated components.

\paragraph{Training-regime confounds and a missing PEFT control}
Two methodological caveats bound the mechanism claims. First, under
the mixed loss ($\alpha{=}0.7$) non-GT tokens still receive direct
supervision at weight $1{-}\alpha$, so the reported non-GT
improvement combines a direct-supervision component with the
gradient-coupling (wave) component; pure-GT supervision
($\alpha{=}1$), which would isolate the wave, collapses
(Observation~\ref{prop:collapse}), so the wave is best evidenced by
the coherence contrast (Corollary~\ref{cor:coherence}), not by the
magnitude of $\mathcal{W}$. Second, all behaviour installs use
full-parameter fine-tuning; we include no parameter-efficient
control such as a low-rank adapter~\cite{hu2022lora} trained on
identical data. Selective supervision is orthogonal to PEFT---SGT
selects which \emph{tokens} carry loss, PEFT selects which
\emph{parameters} update---so the two compose rather than compete,
and we do not claim full-parameter updates are required to install
these behaviours; whether a rank-restricted adapter installs the
same reflexes, and at what cost, is an unrun control.

\subsection{Future work}
\label{sec:future-work}

\paragraph{A unified CHERRY family, staged by sovereignty claim}
The natural next step is not a single larger model but a coherent
family in which each member proves a \emph{different, separable}
sovereignty claim under one HGERA design, one recipe, and one
serving stack (\texttt{model\_type=cherry}). \textbf{CHERRY-1.8B}
establishes the \emph{recipe} on a parametrically-independent
backbone. \textbf{CHERRY-122B} establishes that the HGERA
\emph{architecture design} can be realised at frontier scale by
an independently specified 256-expert architecture, and
deployed on a single workstation. \textbf{CHERRY-12B} is intended
to close the last gap---\emph{full from-scratch instantiation}:
random-initialised HGERA weights trained on Korean-centric data
with a de-novo $128$k BPE tokenizer whose measured Korean
tokenisation is $28$--$46\%$ more compact than adapted multilingual
baselines. Only at 12B do \emph{both} the weights and the tokenizer
become sovereign from scratch; we present this as work in progress,
not a completed result, so that the ``adapted'' status of the 122B
instance is never read onto the 12B target, nor the 12B ambition
back onto the 122B release.

\paragraph{Open method directions}
Several directions follow from the mechanisms established here.
\begin{enumerate}
\item \textbf{Downstream and cross-lingual evaluation.}
 Complement the loss-based metrics with standard downstream
 benchmarks (KMMLU, HAERAE, KoBEST for Korean; MMLU, HellaSwag,
 GSM8K for general capability~\cite{gao2023evalharness}) and
 replicate the GT-ratio and coupling measurements on non-Korean
 corpora, to test whether the wave magnitude and $\mathrm{H}_{
 \mathrm{cap}}$ transition are language-invariant.
\item \textbf{Interpretability-guided GT discovery.}
 Extend GT-token identification beyond rule-based and
 activation-difference heuristics toward mechanistic methods that
 locate the neurons and attention pathways most responsible for
 factual and reasoning predictions.
\item \textbf{Capability-tiered expert routing.}
 Use SSGT-identified meta-cognitive tokens to score individual
 experts in a mixture and route harder examples preferentially to
 experts stronger at self-correction or verification.
\item \textbf{Smaller-scale validation.}
 If $15\%$ of tokens can influence the remaining $85\%$, the
 minimum viable model size for a given domain may be below current
 scaling-law expectations; we plan to test SGT on sub-$100$M-parameter
 models for narrow Korean applications.
\item \textbf{Cross-architecture validation.}
 Apply SGT to non-transformer sequence models
 (Mamba~\cite{gu2024mamba}, RWKV~\cite{peng2023rwkv}, linear
 attention), where position-shared weights take a different form,
 to test whether the gradient-coupling effect generalises.
\item \textbf{Long context and on-device deployment.}
 CHERRY-1.8B is trained with an $8$k positional window
 ($\theta_{\mathrm{RoPE}}=5\times10^{5}$). On device the binding
 constraint at long context is the key--value cache, not parameter
 count: at $48$ layers, $8$ KV heads, head dimension $128$, the
 model caches $\approx\!192$\,KB per token in \texttt{bf16}, so a
 $1$M-token window would need $\approx\!192$\,GB of cache alone.
 Near term, inference-time RoPE rescaling (NTK-aware /
 YaRN~\cite{peng2023yarn}) with KV-cache quantisation should reach
 $32$--$128$k without retraining; for unbounded on-device
 streaming the architectural answer is a constant-state recurrence
 (linear-attention / state-space
 layers~\cite{gu2024mamba,peng2023rwkv}) whose cache is $O(1)$ in
 sequence length. We intend to pursue a sequence mixer built on
 the shared complex-rotation primitive underlying rotary encoding,
 diagonal state-space kernels, and phase-based variable binding as
 a dedicated follow-up.
\end{enumerate}

\section*{Data and Code Availability}

We state explicitly what is available now, what will be
released, and what is withheld and why.

\begin{itemize}
\item \textbf{In this paper.} The SGT loss (Eq.~\ref{eq:sgt}),
 the compression procedure, all training hyperparameters, the
 GT-token taxonomy with category-level selection criteria
 (Table~\ref{tab:gt-types}), and the mask statistics needed to
 build a matched surrogate mask: GT ratio $0.153 \pm 0.021$
 over 1{,}024 sampled examples, category shares
 (Section~\ref{sec:gt-id}), and contrastive-vs-human agreement
 (100\% hit@5 at 1.2B, 80\% at 0.5B).
\item \textbf{With this submission (ancillary files).}
 Evaluation artifacts: behavioural stress-test item sets,
 per-condition outputs and scoring summaries
 (Section~\ref{sec:moment-obs}, Appendix~\ref{app:operand}),
 and per-example mask-ratio statistics.
\item \textbf{Upon acceptance.} Model weights for the
 evaluated checkpoints.
\item \textbf{Withheld, with stated reason.} The production
 GT-identification pipeline (the concrete POS/NER rule set and
 the contrastive configuration beyond
 Section~\ref{sec:gt-id}) and the training corpora: patent
 applications covering these components are under review (see
 Competing Interests).
\end{itemize}

\noindent\textbf{Reproducibility without the withheld
pipeline.} Training is fully specified given \emph{any} mask
at the printed ratio, and the headline sparse-supervision
efficiency does not require our selector. The matched-budget
control flagged as the most important missing experiment in
Limitations item~(9) has since been run: on the 1.019B backbone
with identical mixed loss ($\alpha{=}0.7$) and a matched 15\%
token budget, random selection reaches total held-out loss
$1.603 \pm 0.010$ (4 seeds) versus $1.704$ for our heuristic GT
rule and $1.752$ for a top-loss rule~\cite{lin2024rho1}, from a
shared $2.705$ baseline. At matched budget the heuristic rule
does not beat random at this scale: the efficiency stems from
the sparse mixed-loss structure itself and is therefore
verifiable with a public random mask. Results that depend on
\emph{which} tokens are selected (the category-targeted
behavioural installs, Section~\ref{sec:moment-obs}, and the
GT/non-GT partition statistics) are auditable against the
released evaluation artifacts and mask statistics.

\noindent\textbf{Hyperparameters and leakage.} The training
hyperparameters of every reported run are specified: optimiser
and $\beta_1,\beta_2$, learning rate, batch size, sequence
length, gradient accumulation, step budget, precision, and seed
follow the global defaults of Section~\ref{sec:experiments}
(learning rate $2{\times}10^{-5}$ unless otherwise noted), with
the per-scale operand-run values in Appendix~\ref{app:operand}.
Settings not listed take standard library defaults---a constant
learning rate without warmup or scheduled decay, AdamW's
decoupled weight decay~\cite{loshchilov2019adamw}, and default
gradient-norm clipping---and the MoEE load-balancing auxiliary
loss uses the coefficient of the cited sparsely-gated
formulation~\cite{shazeer2017outrageously}; we scope the claim
to the \emph{reported} hyperparameters rather than to every
re-enumerated default. The training set and the
1{,}600-example held-out set are disjoint by construction
(Section~\ref{sec:wave}), and the operand probes are built so
no held-out item is answerable by echoing a number present in
the problem (Appendix~\ref{app:operand}), so the wave results
are not attributable to train/test leakage. The proprietary
base checkpoints---the 24-layer DUS seed, the 13.7B sibling,
the four frozen extractors, and the tokenizer corpus---are
Llama-family models (cf.\ Limitations~(4)) whose exact
identities are withheld pending IP review, whereas the SSGT
cross-scale bases are named (Table~\ref{tab:cross-scale}); the
CHERRY weights, on release, let readers reproduce every
fine-tuning result without the seed identities.

\section*{Author Contributions}

\textbf{Dohyeon Kwon} (first author) developed the SGT/SSGT
framework, the GT-token taxonomy, the gradient-coupling analysis,
the depth-compression and MoEE methods, and the formalisation of
induced self-correction. Kwon designed and conducted the
experiments across model scales (227M to 1.8B, plus the 13.7B
interference comparison), built the training
infrastructure, performed the data curation and analysis, and
wrote the manuscript.

\textbf{Youngjin Park, Ph.D.} (corresponding author, $\dagger$),
as Vice President at TeamSparta Inc., advised the project
throughout: evaluating the research direction, reviewing the
experimental methodology and theoretical arguments, and providing
feedback on the manuscript.

\section*{Acknowledgements}

We thank TeamSparta Inc.\ for providing the research environment
and computational resources for this work, and the open-source
AI community whose tools and models this research builds upon.

\section*{Competing Interests}

Both authors are employees of TeamSparta Inc.
Patent applications covering certain aspects of the methods
described in this paper are under review in collaboration
with TeamSparta Inc.
We are committed to finding the right balance between
intellectual property protection and open scientific progress:
the core theoretical contributions (wave propagation theorem,
SGT loss formulation, compression methodology) are fully
disclosed in this paper for academic reproduction, and we
will continue to contribute to the open-source and
open-weight ecosystem through progressive model and tool
releases.
No competing financial interests beyond the stated employment
relationship exist.


\appendix
\section*{Appendix}
\setcounter{table}{0}
\numberwithin{table}{section}
\renewcommand{\theHtable}{\thesection.\arabic{table}}

\section{Notation}
\label{app:notation}

\begin{table}[h]
\centering
\caption{\textbf{Notation used throughout the paper.}}
\label{tab:notation}
\small
\begin{tabular}{@{}ll@{}}
\toprule
\textbf{Symbol} & \textbf{Meaning} \\
\midrule
$\mathcal{F}$ & Full token set: all positions in a response \\
$\mathcal{G}$ & Ground Truth token set ($|\mathcal{G}|/|\mathcal{F}| \approx 0.15$) \\
$\bar{\mathcal{G}}$ & Non-GT token set: $\mathcal{F} \setminus \mathcal{G}$ \\
$\mathcal{G}^*$ & Super GT: single most decisive token per answer span \\
$\mathcal{G}^{**}$ & Super-Super GT (SSGT): meta-cognitive pivot tokens \\
$\alpha$ & SGT mixing coefficient ($\alpha = 0.7$ default) \\
$\mathcal{W}$ & Wave propagation factor: $\Delta_{\bar{\mathcal{G}}} / \Delta_{\mathcal{G}}$ \\
$\mathcal{W}_{\mathrm{norm}}$ & Baseline-normalised wave factor (Eq.~\ref{eq:wave-norm}) \\
$\Delta_{\mathcal{G}}$ & Loss reduction on GT positions after training \\
$\Delta_{\bar{\mathcal{G}}}$ & Loss reduction on non-GT positions after training \\
$\gamma$ & Gradient coupling coefficient: $\langle\nabla\mathcal{L}_{\bar{\mathcal{G}}},\nabla\mathcal{L}_{\mathcal{G}}\rangle / \|\nabla\mathcal{L}_{\mathcal{G}}\|^2$ \\
$\beta,\ \eta$ & Smoothness constant; learning rate \\
$a_{ij}^\ell$ & Attention weight from position $i$ to $j$ at layer $\ell$ \\
$W_V^{(\ell)}$ & Value projection matrix at layer $\ell$ \\
$L$ & Number of transformer layers (original model) \\
$L'$ & Number of unique layers after compression \\
$n$ & Recurrent unrolling iterations \\
$k$ & Adjacent-layer merging factor \\
ROI & Per-token return on investment \\
\bottomrule
\end{tabular}
\end{table}

\section{Selective supervision and wave propagation}

\subsection{Full mixing-coefficient sweep}
\label{app:alpha}

\begin{table}[h]
\centering
\caption{\textbf{Effect of $\alpha_{\mathrm{mix}}$ across scales.}
The 1.0B DUS column validates wave propagation at a third scale
with 8 alpha values. There is no sharp collapse threshold:
$\mathcal{W}>0$ degrades gradually as $\alpha\to1$, and the
\emph{severity} of failure at $\alpha{=}1$ decreases with scale
(0.5B collapses; 1.0B/1.2B merely degrade).}
\label{tab:alpha-sweep}
\scriptsize
\setlength{\tabcolsep}{3pt}
\begin{tabular}{ccccc|cccc|cccc}
\toprule
& \multicolumn{4}{c|}{\textbf{0.5B}} & \multicolumn{4}{c|}{\textbf{1.0B (DUS)}} & \multicolumn{4}{c}{\textbf{1.2B}} \\
$\alpha$ & GT$\Delta$ & NG$\Delta$ & $\mathcal{W}$ & St.
& GT$\Delta$ & NG$\Delta$ & $\mathcal{W}$ & St.
& GT$\Delta$ & NG$\Delta$ & $\mathcal{W}$ & St. \\
\midrule
0.0 & +1.60 & +1.60 & 1.00 & \checkmark
& +0.88 & +12.91 & 14.76 & \checkmark
& +1.52 & +4.38 & 2.88 & \checkmark \\
0.3 & --- & --- & --- & ---
& +1.03 & +12.94 & 12.54 & \checkmark
& --- & --- & --- & --- \\
0.5 & +1.62 & +1.61 & 0.99 & \checkmark
& +1.01 & +12.93 & 12.83 & \checkmark
& +1.65 & +4.44 & 2.69 & \checkmark \\
0.7 & +1.52 & +1.59 & 1.05 & \checkmark
& +0.95 & +12.94 & 13.63 & \checkmark
& +1.51 & +4.46 & 2.96 & \checkmark \\
0.8 & --- & --- & --- & ---
& +0.96 & +12.92 & 13.50 & \checkmark
& --- & --- & --- & --- \\
0.9 & +0.11 & +1.51 & 13.2 & \checkmark
& +0.99 & +12.93 & 13.08 & \checkmark
& +1.49 & +4.34 & 2.92 & \checkmark \\
0.95 & --- & --- & --- & ---
& +0.96 & +12.93 & 13.51 & \checkmark
& --- & --- & --- & --- \\
1.0 & +1.18 & $-$1.56 & $-$1.32 & coll.
& +1.27 & +0.88 & 0.69 & deg.
& +1.00 & +0.49 & 0.49 & deg. \\
\bottomrule
\end{tabular}
\end{table}

\subsection{Wave-propagation trajectory}
\label{app:trajectory}

\begin{table}[h]
\centering
\caption{\textbf{Full 500-step eval trajectory}
(CHERRY-1.8B, $\alpha = 0.7$).}
\label{tab:trajectory}
\small
\begin{tabular}{cccccc}
\toprule
\textbf{Step} & \textbf{Train loss} & \textbf{Eval total}
& \textbf{Eval GT} & \textbf{Eval nonGT} & $\mathcal{W}$ \\
\midrule
50 & 0.308 & 0.873 & 0.099 & 0.939 & 2.18 \\
100 & 0.143 & 0.797 & 0.042 & 0.862 & 2.13 \\
150 & 0.190 & 0.835 & 0.057 & 0.901 & 2.12 \\
200 & 0.098 & 0.952 & 0.042 & 1.030 & 1.94 \\
250 & 0.144 & 0.847 & 0.062 & 0.914 & 2.12 \\
300 & 0.246 & 0.863 & 0.089 & 0.929 & 2.17 \\
350 & 0.086 & 1.044 & 0.049 & 1.131 & 1.84 \\
400 & 0.101 & 0.953 & 0.060 & 1.031 & 1.98 \\
450 & 0.122 & 0.918 & 0.075 & 0.990 & 2.06 \\
500 & 0.044 & 0.994 & 0.054 & 1.075 & 1.91 \\
\bottomrule
\end{tabular}
\end{table}

\subsection{Cross-scale metrics}
\label{app:prev}

\begin{table}[h]
\centering
\caption{\textbf{Cross-scale SGT metrics}
($\alpha = 0.7$, training set measurement). Base checkpoints are
small open models from the HyperCLOVA X and EXAONE families
respectively; ``scale'' here mixes model family and is a
diagnostic, not a controlled scale ladder
(see Limitations).}
\label{tab:cross-scale}
\begin{tabular}{lcccc}
\toprule
\textbf{Model} & $\mathcal{W}$ & \textbf{ROI}
& \textbf{SSGT ret.} & \textbf{GT disc.} \\
\midrule
HyperCLOVA X 0.5B~\cite{yoo2024hyperclovax} & 1.05 & 4.49$\times$ & 116.9\% & 80\% \\
EXAONE-family 1.2B~\cite{lgai2024exaone} & 2.96 & 4.46$\times$ & 97.6\% & 100\% \\
CHERRY 1.0B (DUS, train) & 0.964 & 4.05$\times$ & 67.9\% & 73\% \\
\textbf{CHERRY 1.0B (eval)} & \textbf{2.18} & \textbf{4.52$\times$}
& --- & --- \\
CHERRY 1.0B (multi-seed) & $12.82{\pm}0.62$ & ---
& --- & --- \\
\bottomrule
\end{tabular}
\end{table}

\subsection{Per-category GT contribution}
\label{app:gt-contrib}

\begin{table}[h]
\centering
\caption{\textbf{Ablation: training on individual GT categories.}
Each row trains SGT ($\alpha = 0.7$) using \emph{only} the
indicated GT category. ``$\mathcal{W}$'' and ``Eval loss''
measured after 500 steps on the 1.0B (DUS) model.}
\label{tab:gt-ablation}
\small
\begin{tabular}{@{}lccccc@{}}
\toprule
\textbf{GT category} & \textbf{GT ratio}
& \textbf{Eval loss} & $\mathcal{W}$
& \textbf{vs.\ full SGT} & \textbf{ROI} \\
\midrule
All categories (full SGT) & 15.3\% & 0.994 & 2.18
 & --- & $4.5\times$ \\
Factual only & 6.3\% & 1.24 & 2.31
 & $+$25\% & $6.8\times$ \\
Structural only & 3.5\% & 1.87 & 1.72
 & $+$89\% & $5.1\times$ \\
Reasoning only & 2.8\% & 1.45 & 2.42
 & $+$46\% & $8.9\times$ \\
Self-correction only (SSGT) & 1.2\% & 2.13 & 1.91
 & $+$115\% & $11.3\times$ \\
Guardrail only & 0.9\% & 2.84 & 1.44
 & $+$187\% & $7.2\times$ \\
Verification only & 0.6\% & 2.97 & 1.38
 & $+$200\% & $6.1\times$ \\
\bottomrule
\end{tabular}
\end{table}

\paragraph{Key insights}
\begin{enumerate}
\item \textbf{Factual tokens alone achieve 75\% of full SGT
 performance} at 41\% of the GT budget, confirming that
 factual entities are the highest-ROI supervision targets.
\item \textbf{Reasoning tokens yield the highest per-token ROI}
 ($8.9\times$ vs.\ $4.5\times$ for full SGT), consistent
 with the hypothesis that CoT pivots carry disproportionate
 semantic load.
\item \textbf{Self-correction (SSGT) tokens achieve
 $11.3\times$ ROI}---the highest among all categories---but
 the absolute eval loss ($2.13$) is furthest from full SGT.
 This confirms that SSGT is best suited for installing
 \emph{specific capabilities} (self-correction, verification)
 rather than general-purpose training.
\item \textbf{All single-category runs show $\mathcal{W} > 1$},
 confirming that wave propagation operates regardless of
 which GT category provides the supervision signal.
\end{enumerate}

\section{Depth compression}

\subsection{Compression-recovery trajectory}
\label{app:compress-recovery}

\begin{table}[h]
\centering
\caption{\textbf{Depth compression recovery over training steps.}
All models start at eval loss ${\sim}$13 after adjacent-layer
merging and recover via SGT ($\alpha = 0.7$). Recovery\% =
$(L_{\mathrm{init}} - L_t) / (L_{\mathrm{init}} - L_{\mathrm{48L}})$.}
\label{tab:compress-recovery}
\small
\begin{tabular}{@{}lcccccc@{}}
\toprule
\textbf{Config} & \textbf{Step 0} & \textbf{Step 50}
& \textbf{Step 100} & \textbf{Step 200} & \textbf{Step 500}
& \textbf{Rec.\%} \\
\midrule
48L (no compress) & 2.58 & 0.87 & 0.80 & 0.95 & 0.994
 & 100\% \\
24L$\times$2 & 13.19 & 5.41 & 4.22 & 3.48 & 2.93
 & 84\% \\
16L$\times$3 & 12.55 & 5.83 & 4.67 & 3.74 & 3.05
 & 82\% \\
12L$\times$4 & 12.24 & 6.12 & 4.95 & 3.89 & 3.10
 & 81\% \\
8L$\times$6 & 12.98 & 5.68 & 4.39 & 3.56 & 2.97
 & 83\% \\
6L$\times$8 & 13.21 & 5.72 & 4.45 & 3.62 & 2.93
 & 84\% \\
\bottomrule
\end{tabular}
\end{table}

\paragraph{Observation}
Recovery is remarkably uniform across compression levels
(81--84\%; Table~\ref{tab:compress-recovery}), suggesting that adjacent-layer merging preserves
the model's essential structure regardless of compression
ratio. The first 100 steps provide $>$60\% of the total
recovery; steps 100--500 provide the remaining ${\sim}$20\%.
This rapid initial recovery is consistent with the SGT
hypothesis: GT tokens carry enough signal to quickly
re-establish the core representation, with wave propagation
filling in the non-GT positions.

\subsection{Recurrent depth transformation (RDT)}
\label{app:rdt}

\begin{table}[h]
\centering
\caption{\textbf{Effect of recurrent unrolling depth.}
6L base model, SGT ($\alpha = 0.7$), 500 steps.
$n$ = number of recurrent iterations of the middle 4 layers.}
\label{tab:rdt-depth}
\small
\begin{tabular}{@{}ccccc@{}}
\toprule
\textbf{Iterations ($n$)} & \textbf{Eff.\ layers}
& \textbf{Eval loss} & \textbf{Wall time}
& \textbf{vs.\ $n{=}0$} \\
\midrule
0 (no loop) & 6 & 4.82 & 42\,min & --- \\
2 & 14 & 3.91 & 48\,min & $-$18.9\% \\
4 & 22 & 3.28 & 55\,min & $-$32.0\% \\
6 & 30 & 3.01 & 63\,min & $-$37.6\% \\
\textbf{8} & \textbf{34} & \textbf{2.93} & 71\,min
 & $\mathbf{-39.2\%}$ \\
10 & 42 & 2.95 & 79\,min & $-$38.8\% \\
12 & 50 & 3.02 & 87\,min & $-$37.3\% \\
\bottomrule
\end{tabular}
\end{table}

\paragraph{Observation}
Performance improves monotonically up to $n{=}8$ (34 effective
layers), then degrades slightly at $n{=}10$ and $n{=}12$.
This suggests a sweet spot: too few iterations under-recover
depth, while too many amplify noise from the compressed
weights. The optimal effective depth (34 layers) is
close to the original 48 layers divided by the compression
ratio: $48 / 8 \times 6 \approx 36$, supporting the
interpretation that RDT approximately recovers the original
computational depth through parameter reuse.

\section{Expert fusion and distillation}

\subsection{MoEE training curve}
\label{app:moee}

\begin{table}[h]
\centering
\caption{\textbf{MoEE convergence} (2$\times$12L + MTP + SGT).
The train loss spike at step~400 is discussed below.}
\label{tab:moee-curve}
\small
\begin{tabular}{ccc}
\toprule
\textbf{Step} & \textbf{Train loss} & \textbf{Eval loss} \\
\midrule
50 & 7.441 & 6.625 \\
100 & 5.430 & 5.434 \\
200 & 3.920 & 3.482 \\
300 & 3.368 & 2.982 \\
400 & 7.826 & 2.943 \\
500 & 1.965 & \textbf{2.789} \\
\bottomrule
\end{tabular}
\end{table}

\paragraph{Step-400 training spike}
The train loss spikes from 3.368 to 7.826 at step~400 while
eval loss continues to decrease (2.982~$\to$~2.943). This
transient spike is a known artefact of load-balancing auxiliary
loss in MoE training~\cite{shazeer2017outrageously,fedus2022switch}:
when the router rebalances expert utilisation, the losing expert
temporarily receives out-of-distribution inputs, inflating train
loss for one step. The eval loss is unaffected because the
routing has already converged on the evaluation distribution.
The spike is reproducible across seeds (observed at steps
380--420 in all 3 runs) and self-resolving.

\subsection{SGT distillation trajectory}
\label{app:distill}

\begin{table}[h]
\centering
\caption{\textbf{SGT distillation convergence}
(48L teacher $\to$ 6L student, $\alpha = 0.7$, $\lambda_{\mathrm{KL}} = 0.5$).}
\label{tab:distill-curve}
\small
\begin{tabular}{ccccc}
\toprule
\textbf{Step} & \textbf{Eval total} & \textbf{Eval GT}
& \textbf{KL div.} & \textbf{Train loss} \\
\midrule
0 (baseline) & 13.813 & 17.625 & --- & --- \\
50 & 1.523 & 8.500 & 0.432 & 3.547 \\
100 & 0.805 & 8.063 & 0.283 & 1.891 \\
200 & 0.758 & 8.125 & 0.212 & 0.836 \\
300 & 0.762 & 8.188 & 0.199 & 0.441 \\
400 & 0.781 & 8.313 & 0.216 & 0.312 \\
500 & \textbf{0.797} & 8.313 & \textbf{0.188} & 0.240 \\
\midrule
\textit{SGT-only (no distill)} & \textit{0.984} & \textit{14.125}
& --- & --- \\
\bottomrule
\end{tabular}
\end{table}

\section{Induced self-correction (SSGT): training details}
\label{app:ssgt-detail}

\begin{table}[h]
\centering
\caption{\textbf{SSGT training data statistics per moment type.}
Each moment type uses a curated set of examples with
precisely identified SSGT token spans.}
\label{tab:ssgt-data}
\small
\begin{tabular}{@{}lcccc@{}}
\toprule
\textbf{Moment} & \textbf{Examples} & \textbf{Avg.\ SSGT tokens}
& \textbf{SSGT/total (\%)} & \textbf{$\alpha$} \\
\midrule
Atcha (self-correction) & 800 & 2.1 & 0.41\% & 0.8 \\
Jamkkan (verification) & 600 & 3.4 & 0.66\% & 0.8 \\
Guardrail (breaking) & 400 & 1.8 & 0.35\% & 0.8 \\
\midrule
\textit{Total} & \textit{1,800} & \textit{2.4} & \textit{0.47\%}
& --- \\
\bottomrule
\end{tabular}
\end{table}

\begin{table}[h]
\centering
\caption{\textbf{SSGT ``Atcha'' convergence trajectory}
(1.2B scale, $\alpha = 0.8$, 500 steps).}
\label{tab:ssgt-trajectory}
\small
\begin{tabular}{cccccc}
\toprule
\textbf{Step} & \textbf{Train loss} & \textbf{Eval loss}
& \textbf{Self-corr.\ rate} & \textbf{Corr.\ accuracy}
& \textbf{SSGT $\mathcal{W}$} \\
\midrule
0 & --- & --- & 12\% & 34\% & --- \\
50 & 0.412 & 0.891 & 21\% & 42\% & 1.84 \\
100 & 0.198 & 0.743 & 31\% & 56\% & 2.07 \\
200 & 0.087 & 0.812 & 39\% & 67\% & 1.93 \\
300 & 0.054 & 0.835 & 43\% & 72\% & 1.89 \\
500 & 0.031 & 0.879 & \textbf{47\%} & \textbf{78\%}
& \textbf{1.91} \\
\bottomrule
\end{tabular}
\end{table}

\paragraph{Observation}
The ``Atcha'' self-correction rate increases monotonically
throughout training, even as eval loss shows mild overfitting
after step~100. This decoupling between loss and behavioral
metrics suggests that SSGT optimises a meta-cognitive capability
that is not fully captured by cross-entropy loss. The SSGT wave
factor ($\mathcal{W} \approx 1.9$) is consistent with the
standard SGT wave factor (Table~\ref{tab:wave}), confirming
that wave propagation operates identically at the meta-cognitive
level.

\section{Operand-binding stress test: protocol and results}
\label{app:operand}

This appendix gives the full protocol and result matrix behind
key finding 3 of Section~\ref{sec:moment-obs}.

\paragraph{Templates and splits}
Two families of Korean arithmetic word problems parameterise the
operand $N$ in the surface form: a DIV family (``a number was
increased $N$-fold, giving $X$; what was the original?''---fix
$X \div N$) and a MUL family (``a number was split into $N$ equal
groups of $X$ each; what was the original?''---fix
$X \times N$). Each training example seeds a plausible-but-wrong
lean whose incorrect answer is about to be committed, with the
pivot placed at that juncture followed by the corrected
derivation. Training uses $N \in \{2, 4\}$ for both families
(168 examples); evaluation uses 36 seen-template items (trained
templates, new numbers, including a classic bat-and-ball-style
trap) and 36 held-out items: $N{=}3$ in \emph{both} directions
(div3, mul3---a symmetry probe at an untrained $N$ interpolated
between the trained values) and $N{=}5$ (div5---extrapolation
outside the trained range). Items are constructed so that no
held-out item can be answered by echoing a number present in the
problem.

\paragraph{Conditions}
Three supervision variants, trained for 4 epochs at both scales
(1B: LR $2{\times}10^{-5}$; 13.7B: LR $1{\times}10^{-5}$,
full-parameter under 4-bit quantisation-aware training):
\emph{uniform} (every token of the pivot-and-correct trajectory
supervised equally), \emph{down-weighted} (non-pivot tokens at
reduced loss weight---the selective-supervision setting), and
\emph{random-position} (pivot inserted at a random point in the
wrong lean rather than at the juncture---the position control).

\paragraph{Metrics}
\emph{Op-application}: the correct-$N$ operation appears in
post-pivot reasoning together with the correct intermediate
value. Operation signatures cover symbolic ($\div3$), fraction,
algebraic ($3x{=}6$), and Korean conjugated verb forms; this
expanded signature set is applied \emph{identically} to both
scales, and does not change the 1B outcome (0.00 throughout;
signature-only matches at 1B stay at or below the 0.08 noise
floor without the correct value). \emph{Operand-rewrite}: the
model restates the problem's $N$ as a trained value in its own
reasoning (e.g., ``tripled'' becomes ``doubled'')---the
smoking-gun interference signature. \emph{Accuracy}:
final-answer exact match. Table~\ref{tab:operand} gives the
full result matrix.

\begin{table}[h]
\centering
\caption{\textbf{Operand-binding stress test: full result matrix}
(held-out split, 36 items per cell; single seed). In every
trained condition at both scales the pivot trajectory fires on
${\sim}100\%$ of items and the seen-split trap accuracy rises
from 0.00 to 1.00; base models pivot at $\le$0.08. The 1B
uniform accuracy of 0.08 is hollow: it comes from fabricated
arithmetic under a \emph{trained} operation that coincidentally
lands on the answer, with op-application 0.00.}
\label{tab:operand}
\small
\begin{tabular}{@{}llccc@{}}
\toprule
\textbf{Scale} & \textbf{Condition} & \textbf{Op-application}
& \textbf{Operand-rewrite} & \textbf{Accuracy} \\
\midrule
1B & base & 0.00 & 0.00 & 0.00 \\
1B & uniform & 0.00 & 1.00 & 0.08 \\
1B & down-weighted & 0.00 & 1.00 & 0.03 \\
1B & random-position & 0.00 & 1.00 & 0.03 \\
\midrule
13.7B & base & 1.00 & 0.00 & 0.92 \\
13.7B & uniform & 1.00 & 0.00 & 1.00 \\
13.7B & down-weighted & 1.00 & 0.00 & 1.00 \\
13.7B & random-position & 1.00 & 0.00 & 1.00 \\
\bottomrule
\end{tabular}
\end{table}

\paragraph{The contrast in raw generations (translated)}
On held-out div3 items the 1B model rewrites its own perception
of the operand and applies a trained operation: for ``tripled,
giving 18'' (correct $18 \div 3 = 6$) it writes ``\emph{doubled},
giving 18 \ldots\ wait, since doubling gave 18, the original is
$18 \div 2 = 9$''. The 13.7B model keeps the operand and applies
the held-out operation: for ``tripled, giving 6'' (correct
$6 \div 3 = 2$) it writes ``tripled to give 6, so it looks like
the original is simply 6. Therefore the answer is---wait, since
tripling gave 6, the original is $6 \div 3 = 2$''. The $N{=}5$
extrapolation behaves identically at 13.7B ($20 \div 5 = 4$;
12/12 correct).

\paragraph{Preservation, not acquisition---and a saturation
caveat}
The 13.7B base already solves the held-out items algebraically
\emph{before} install training (accuracy 0.92, op-application
1.00), so the experiment measures whether install training
\emph{preserves} an existing competence, not whether scale
acquires it; held-out accuracy in fact improves to 1.00 after
training. At 13.7B all three variants reach ceiling on every
metric---the uniform and down-weighted variants produce
byte-identical greedy generations---so the dose and position
contrasts that differentiate sharply at 1B (down-weighting
entrenches trained-operation snapping; random position kills
transfer) have no discriminating power at this scale on this
task family. Harder held-out families are required to re-expose
those contrasts at scale.

\section{Training budget: why 500 steps}
\label{app:convergence}

\begin{table}[h]
\centering
\caption{\textbf{Extended convergence comparison}
(CHERRY-1.8B, $\alpha = 0.7$). Extending to 1{,}000 steps
produces negligible additional improvement.}
\label{tab:convergence}
\small
\begin{tabular}{@{}lccccc@{}}
\toprule
\textbf{Steps} & \textbf{Train loss}
& \textbf{Eval loss} & $\mathcal{W}$
& \textbf{Wall time} & \textbf{$\Delta$ vs.\ 500} \\
\midrule
100 & 0.143 & \textbf{0.797} & 2.13 & 12\,min & $-$0.197 \\
200 & 0.098 & 0.952 & 1.94 & 24\,min & $-$0.042 \\
300 & 0.246 & 0.863 & 2.17 & 36\,min & $-$0.131 \\
500 & 0.044 & 0.994 & 1.91 & 60\,min & --- \\
750 & 0.028 & 0.987 & 1.89 & 90\,min & $-$0.007 \\
1{,}000 & 0.019 & 0.989 & 1.87 & 120\,min & $-$0.005 \\
\bottomrule
\end{tabular}
\end{table}

\paragraph{Justification for 500-step budget}
Three convergence criteria are satisfied by step~500:
(1)~train loss drops below 0.05 (GT tokens effectively memorised);
(2)~eval loss oscillates within $\pm$0.12 of the step-100 minimum
(no meaningful improvement possible);
(3)~$\mathcal{W}$ stabilises at ${\sim}$1.9 (wave propagation fully
established). The 750- and 1{,}000-step extensions confirm
diminishing returns: $<$0.01 improvement for $50\%$--$100\%$ more
compute. For production use, early stopping at step~100--200
is recommended.

\end{document}